\documentclass{article}

\usepackage{microtype}
\usepackage{graphicx}
\usepackage[dvipsnames, table]{xcolor}
\usepackage{subcaption}
\usepackage{booktabs} 

\usepackage{hyperref}

\newcommand{\pmerr}[2]{#1$_{\pm \text{#2}}$}

\usepackage[preprint]{icml2026}

\usepackage{amsmath}
\usepackage{amssymb}
\usepackage{mathtools}
\usepackage{amsthm}
\usepackage{microtype}
\usepackage{enumitem}
\usepackage{algorithm}
\usepackage{booktabs}
\usepackage{bm}

\usepackage{tikz}
\usetikzlibrary{shapes, arrows.meta, positioning, calc, backgrounds, patterns}

\definecolor{myblue}{RGB}{70,130,180}
\definecolor{myred}{RGB}{220,60,60}
\definecolor{mygrey}{RGB}{128,128,128}
\definecolor{noisearea}{RGB}{240,240,240}

\usepackage[capitalize,noabbrev]{cleveref}

\theoremstyle{plain}
\newtheorem{theorem}{Theorem}[section]
\newtheorem{proposition}[theorem]{Proposition}
\newtheorem{lemma}[theorem]{Lemma}
\newtheorem{corollary}[theorem]{Corollary}
\theoremstyle{definition}
\newtheorem{definition}[theorem]{Definition}
\newtheorem{assumption}[theorem]{Assumption}
\theoremstyle{remark}
\newtheorem{remark}[theorem]{Remark}

\newcommand{\R}{\mathbb{R}}
\newcommand{\E}{\mathbb{E}}
\newcommand{\Prob}{\mathbb{P}}
\newcommand{\1}{\mathbf{1}}

\newcommand{\TV}{\mathrm{TV}}

\newcommand{\Haar}{\lambda} 
\newcommand{\G}{\mathcal{G}}
\newcommand{\X}{\mathcal{X}}
\newcommand{\M}{\mathcal{M}}
\newcommand{\Z}{\mathbf{Z}}
\newcommand{\Hc}{\mathbf{H}}
\newcommand{\A}{\mathbf{A}}
\newcommand{\W}{\mathbf{W}}
\newcommand{\D}{\mathbf{D}}
\newcommand{\Xc}{\mathbf{X}}
\newcommand{\grad}{\nabla}
\newcommand{\norm}[1]{\left\lVert #1\right\rVert}

\newcommand{\push}{\#} 
\newcommand{\acts}{\cdot} 
\newcommand{\Orb}{\mathcal{O}}

\crefname{table}{table}{Table}
\crefname{figure}{figure}{Figure}

\definecolor{softpink}{HTML}{FFB6C1} 
\definecolor{softpink_lighter}{HTML}{FFDFE5} 

\usepackage[textsize=tiny]{todonotes}

\icmltitlerunning{Rethinking Diffusion Models with Symmetries through Canonicalization}

\begin{document}

\twocolumn[
  \icmltitle{Rethinking Diffusion Models with Symmetries through Canonicalization\\ with Applications to Molecular Graph Generation}



  \icmlsetsymbol{equal}{*}

  \begin{icmlauthorlist}
    \icmlauthor{Cai Zhou}{equal,mit}
    \icmlauthor{Zijie Chen}{equal,zju}
    \icmlauthor{Zian Li}{pku}
    \icmlauthor{Jike Wang}{zju}
    \icmlauthor{Kaiyi Jiang}{mit,pt}\\
    \icmlauthor{Pan Li}{gt}
    \icmlauthor{Rose Yu}{ucsd}
    \icmlauthor{Muhan Zhang}{pku}
    \icmlauthor{Stephen Bates}{mit}
    \icmlauthor{Tommi Jaakkola}{mit}
  \end{icmlauthorlist}

  \icmlaffiliation{mit}{Massachusetts Institute of Technology}
  \icmlaffiliation{zju}{Zhejiang University}
  \icmlaffiliation{pku}{Peking University}
  \icmlaffiliation{gt}{Georgia Institute of Technology}
  \icmlaffiliation{pt}{Princeton University}
  \icmlaffiliation{ucsd}{University of California, San Diego}

  \icmlcorrespondingauthor{Cai Zhou}{caiz428@mit.edu}

  \icmlkeywords{Canonicalization, Equivariance, Diffusion Models, Molecular Graph Generation}
  \vskip 0.3in
]



\printAffiliationsAndNotice{\icmlEqualContribution}

\begin{abstract}
  Many generative tasks in chemistry and science involve distributions invariant to group symmetries (e.g., permutation and rotation). A common strategy enforces invariance and equivariance through architectural constraints such as equivariant denoisers and invariant priors. In this paper, we challenge this tradition through the alternative canonicalization perspective: first map each sample to an orbit representative with a canonical pose or order, train an unconstrained (non-equivariant) diffusion or flow model on the canonical slice, and finally recover the invariant distribution by sampling a random symmetry transform at generation time. Building on a formal quotient-space perspective, our work provides a comprehensive theory of canonical diffusion by proving: (i) the correctness, universality and superior expressivity of canonical generative models over invariant targets; (ii) canonicalization accelerates training by removing diffusion score complexity induced by group mixtures and reducing conditional variance in flow matching. We then show that aligned priors and optimal transport act complementarily with canonicalization and further improves training efficiency. We instantiate the framework for molecular graph generation under $S_n \times SE(3)$ symmetries. By leveraging geometric spectra-based canonicalization and mild positional encodings, canonical diffusion significantly outperforms equivariant baselines in 3D molecule generation tasks, with similar or even less computation. Moreover, with a novel architecture \emph{Canon}, CanonFlow achieves state-of-the-art performance on the challenging GEOM-DRUG dataset, and the advantage remains large in few-step generation.
\end{abstract}

\section{Introduction}

Generative modeling is fundamentally grounded in the geometric structure of data. In domains such as computer vision and natural language processing (NLP), data exhibits specific symmetries—for instance, objects in images possess translation invariance, while semantic meaning in text relies on sequential order. 
Recent diffusion~\citep{GenerativeGradients,ho2020denoising,song2020score} and flow-based~\citep{liu2022flow,lipman2022flow,albergo2023stochastic} generative models have achieved great success across images~\citep{dhariwal2021diffusion,rombach2022high}, videos~\citep{ho2022video,singer2022make}, text~\citep{li2022diffusionlm,gong2022diffuseq}, and biomolecules~\citep{watson2023rfdiffusion,corso2023diffdock}. 
Crucially, however, modalities like images are not fully invariant to all transformations: an upside-down landscape or a reversed sentence typically loses its semantic validity. Consequently, state-of-the-art diffusion models in these fields explicitly break symmetries: they utilize fixed grid topologies or inject positional encodings (PEs) to anchor the generation process to a canonical orientation.

In contrast, molecular generation operates on a fundamentally different geometric space defined by the direct product of permutation and Euclidean symmetries, $S_N \times SE(3)$. Unlike natural images, a rotated molecule or a re-indexed molecular graph represents the exact same physical entity.
Molecular generative models typically enforce the constraints by building equivariant architectures and/or invariant priors~\citep{hoogeboom2022equivariant,xu2022geodiff,tian2024equiflowequivariantconditionalflow}, so that the learned vector field (score/velocity) respects the group action and the generated distribution is symmetry-consistent. While principled, this approach often incurs substantial architectural and computational overhead (e.g., equivariant layers, tensor algebra), and it can obscure an additional challenge: symmetry creates latent “gauge” ambiguity, so intermediate noisy states may correspond to multiple equivalent group-transformed configurations. This complex mixture-like nature results in  ``trajectory crossing” and conflicting gradients, as the model struggles to determine \textit{which} valid orientation to generate from a symmetric noise vector, making the learned dynamics less straight.

\begin{figure}[t]
    \centering
    \includegraphics[width=\linewidth]{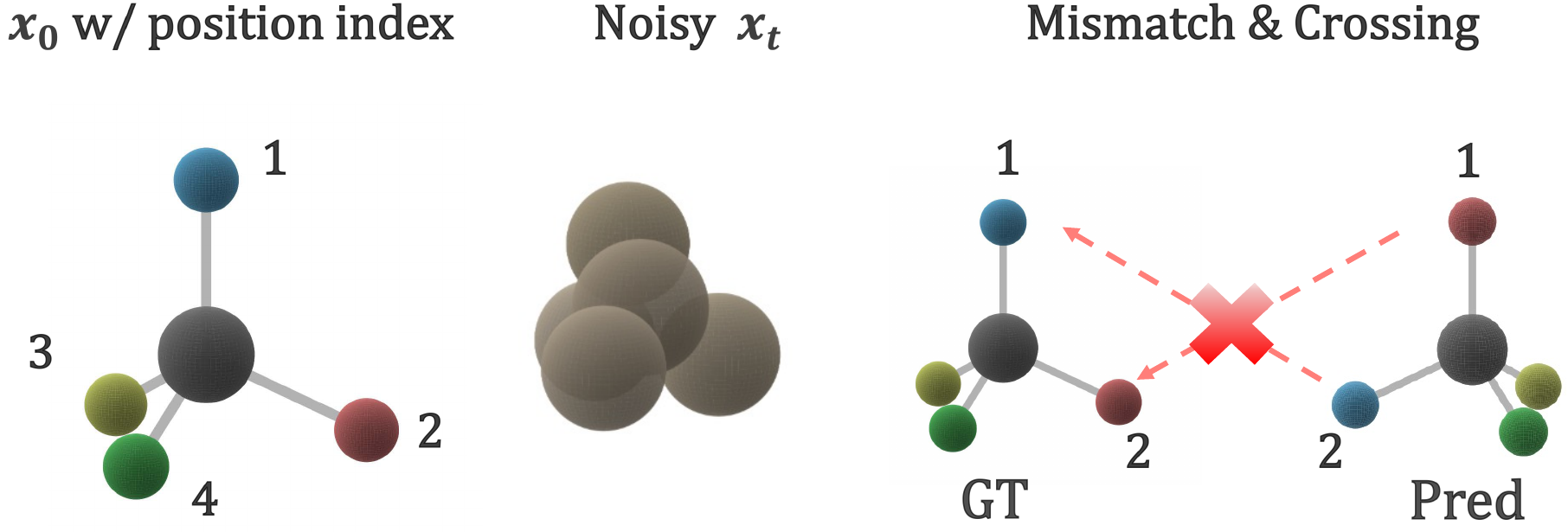}
    \caption{Motivation of Canonical Diffusion: Position-wise supervision induces mismatch loss under symmetries for equivalent molecules in diffusion models.}
    \label{fig:motivation}    
    \vspace{-0.5cm}
\end{figure}
Motivated by this observation, we argue that the efficiency of image generation stems precisely from its lack of total invariance, and we propose to transfer this advantage to molecular modeling via \textbf{canonicalization}. In this work, we challenge the necessity and effectiveness of equivariant generative models, and introduce a novel canonical diffusion framework, with theoretical analysis of how this symmetry breaking procedure accelerates diffusion training while ensuring validity over invariant or equivariant targets. 
By mapping the quotient space of molecules to a unique canonical section—explicitly breaking symmetry during training—we align the noise and data distributions. This effectively resolves the trajectory mismatch problem, transforming the complex equivariant generation task into a simplified transport problem along a canonical manifold. 

Our starting point is the flow-matching identity that the irreducible regression error equals an expected conditional variance of endpoint displacements given an intermediate state. In \Cref{sec:theory}, under a group-aligned lifting construction, we show this conditional variance decomposes into two nonnegative components: a within-slice term that reflects genuine transport difficulty on a canonical slice, and a symmetry-ambiguity term that arises purely from marginalizing over the latent group element. Canonicalization eliminates the symmetry-ambiguity term by fixing a gauge (working on a canonical representative per orbit), whereas optimal-transport-like couplings target the within-slice term by making the coupling more Monge-like and the trajectory closer to straight-line transport. 

Based on this analysis, in \Cref{sec:method} we propose a practical symmetry-aware pipeline for molecular generation that combines a canonicalizer with flexible non-equivariant backbones.
We canonicalize molecules by jointly fixing permutation and rotation gauges (e.g., a Fiedler-vector rank for $S_N$ and a rank-anchored frame for $SO(3)$), then train diffusion/flow models directly in canonical space with a gap-free prior and canonical conditions. 
This yields a simple yet powerful alternative to fully equivariant generative modeling: it reduces symmetry-induced ambiguity, improves few-step accuracy with the straighter learned transport, and unlocks the expressive and computational advantages of generic Transformers/GNNs for large-scale molecular generation. Furthermore, we develop a novel architecture termed \emph{\textbf{Canon}}~(\Cref{fig:canon-arch}) which explicitly incorporates and refines canonical information in an additional atom hidden state, enabling canonicality-aware denoising.

Experimentally, canonicalization is compatible with any (equivariant or non-equivariant) base models. \Cref{sec:exp} shows that the canonicalized counterparts of baselines such as SemlaFlow~\citep{irwin2024semlaflow} yields significantly better results on popular unconditional 3D molecule generation benchmarks including QM9 and GEOM-DRUG, with completely ignorable computation overheads. Furthermore, by leveraging the ``Canon" architecture, \emph{CanonFlow} leads to state-of-the-art molecule stability and validity metrics, outperforming all baselines with large margins. Remarkably, canonicalized models reveal faster training convergence and better few-step sampling quality.

\section{Preliminary}\label{sec:preliminary}

\paragraph{Canonicalization in symmetric data.}

We work on a measurable space $\M$ of structured objects. In molecular generation, we use the following state throughout the paper:
\[
\Z = (\Xc,\Hc, \A)\in \M,
\]
where $\Xc\in \R^{N\times 3}$ are coordinates; $\Hc\in \R^{N\times d_h}$ are atom features with each element consists of a combination of $d_h$ scalars, for example, atom types $\{1,\dots,K\}^N$ and formal charges; $\A\in\R^{N\times N\times d_e}$ encodes bonds/edge features.

Let a group action $\G$ act on $\M$ by
\begin{equation}
\begin{aligned}
g\acts \Z \;=\; (R,\;t,\;\pi)\acts (\Xc,\Hc,\A)\;:=\;\\(\pi(\Xc)R^\top + \1 t^\top,\;\pi(\Hc),\;\pi(\A)),
\end{aligned}
\end{equation}
where $(R,t)\in SE(3)$ and $\pi\in S_N$ (simultaneous permutation of nodes and adjacency).

\begin{definition}[Orbit and quotient]
The orbit of $\Z$ is $\Orb(\Z)=\{g\acts \Z: g\in\G\}$. The quotient space is $\M/\G=\{\Orb(\Z):\Z\in\M\}$.
\end{definition}

\begin{definition}[Invariance]
A probability measure $\mu$ on $\M$ is $\G$-invariant if $g\push\mu=\mu$ for all $g\in\G$.
\end{definition}
We now introduce \emph{canonicalization} as orbit selection.
\begin{definition}[Canonicalization map and slice]\label{def:map_slice}
A measurable map $\Psi:\M\to \M$ is a \emph{canonicalization map} if (i) $\Psi(\Z)\in\Orb(\Z)$ and (ii) $\Psi(g\acts \Z)=\Psi(\Z)$ for all $g\in\G$. The image $S:=\Psi(\M)$ is a \emph{canonical slice}.
\end{definition}

\begin{remark}[Stabilizers and non-uniqueness]
If $\Z$ has non-trivial stabilizer $\mathrm{Stab}(\Z)=\{g:g\acts \Z=\Z\}$ (e.g., graph automorphisms or symmetric geometries), then canonical representatives may be non-unique. This inherently causes discontinuities or multi-valued choices; weighted/probabilistic frames are one remedy \cite{dym2024equivariant}.
\end{remark}
Fortunately, we have the following assumption that is often reasonable for noisy real-world 3D molecules (exact symmetries are measure-zero).

\begin{assumption}[Free action a.s.]\label{ass:free}
Under $p_0$, the stabilizer is trivial almost surely.
\end{assumption}

There exists a finite, translation-invariant measure (Haar measure $\lambda$) that can be normalized to have total mass 1 on a compact topological group.
For permutations $S_N$, Haar is uniform; for $SO(3)$, Haar is the uniform rotation measure; for translations, however, Haar is not finite. Thus following previous work~\citep{hoogeboom2022equivariant, li2024geometric}, in practice we remove translations by centering (e.g., center-of-mass), leaving a compact symmetry $SO(3)\times S_N$.

\begin{assumption}[Centered representation]\label{ass:center}
Throughout the paper, we assume $\Xc$ is zero-centered so global translations are removed; the remaining group is compact: $\G = SO(3)\times S_N$ with Haar probability $\Haar$.
\end{assumption}

\paragraph{Diffusion and flow-matching models.}
We present a continuous-time formulation (discrete-time DDPM~\citep{ho2020denoising} and CTMC variants follow similarly).

Let $\Z_0\sim p_0$ be data. We first present score-based diffusion~\citep{song2020score}. Consider a forward noising SDE on an ambient Euclidean embedding of $\M$ (continuous parts such as $\Xc$), a standard VP-SDE is
\begin{equation}
\mathrm{d}\Z_t = -\frac{1}{2}\beta(t)\Z_t\,\mathrm{d}t + \sqrt{\beta(t)}\,\mathrm{d}W_t,
\end{equation}
with marginals $p_t$ and score $s_\star(\Z,t)=\grad_{\Z}\log p_t(\Z)$, which is learned by a parameterized score network. 

Some recent generative models are built from the equivalent flow matching viewpoint, learning a time-dependent vector field $v_t(\Z)$ that transports a prior $p_1$ to data $p_0$ via an ODE:
\begin{equation}\label{eq:ode}
\frac{\mathrm{d}\Z_t}{\mathrm{d}t}=v_t(\Z_t),\qquad \Z_1\sim p_1,\quad \Z_0\sim p_0.
\end{equation}
In conditional flow matching, one specifies a coupling $\gamma(\Z_0,\Z_1)$ and a \emph{microscopic} conditional path $\Z_t=\Phi_t(\Z_0,\Z_1)$ with conditional vector field $u_t(\cdot\mid \Z_0,\Z_1)$. The optimal marginal field is the conditional expectation:
\begin{equation}\label{eq:marginal-field}
v_t(\Z)\;=\;\E_{\gamma}\big[u_t(\Z\mid \Z_0,\Z_1)\mid \Z_t=\Z\big].
\end{equation}
The categorical parts (such as $\A,\Hc$), in comparison, are usually modeled by an appropriate discrete diffusion, typically with uniform/absorbing/masked states as the prior.

\section{Understanding Canonicalization in Diffusion Models with Symmetries}\label{sec:theory}

Building on a formal quotient-space perspective, we provide a comprehensive theory of canonical diffusion in this section. We prove (i) a factorization theorem for invariant measures via slice distributions and Haar randomization, and the universality and superior expressivity of canonicalized generative models over invariant targets~(\Cref{subsec:correct_superior}); (ii) an explicit expression for the diffusion score complexityinduced by group mixtures while being removable through canonicalization, and a flow-matching conditional variance decomposition showing that canonicalization reduces conditional variance, thereby accelerating diffusion/flow matching training~(\Cref{subsec:accelerate}). 
Complete proof and more discussions are available in Appendix~\ref{sec_appendix:proof}.

\subsection{Canonical Generative Models Induce Universal Invariant Distributions}\label{subsec:correct_superior}

This subsection formalizes why learning on the canonical slice while sampling with Haar randomization is not only sufficient in representing any invariant target~(\Cref{subsubsec:correct}), but also more expressive with non-equivariant denoising backbones~(\cref{subsubsec:superior}).

\subsubsection{Universality of Canonical Parameterizations over Invariant and Equivariant Targets}\label{subsubsec:correct}
First, we show that invariant distributions factor through the canonical slice. Recall that Harr probability $\lambda$ is always uniform for $S_N\times SO(3)$.
\begin{theorem}[Factorization of invariant measures; known]\label{thm:factor}
Suppose Assumptions~\ref{ass:free} and \ref{ass:center} hold. Let $\mu$ be any $\G$-invariant probability measure on $\M$. Let $\Psi$ be an orbit representative map defined $\mu$-a.s., and let $\nu=\Psi\push\mu$ be the slice distribution on $S=\Psi(\M)$. Then
\begin{equation}\label{eq:factor}
\mu \;=\; \int_{S}\left(\int_{\G}\delta_{g\acts \Z}\,\mathrm{d}\Haar(g)\right)\mathrm{d}\nu(\Z).
\end{equation}
Equivalently, if $\tilde\Z\sim \nu$, $g\sim\Haar$ independent, then $g\acts \tilde\Z\sim \mu$.
\end{theorem}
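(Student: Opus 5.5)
We prove the equivalent probabilistic form and obtain the integral identity \eqref{eq:factor} by unpacking it. Take $\Z\sim\mu$ and $h\sim\Haar$ independent. The idea is to compare $\mu$ with the law of $g\acts\tilde\Z$ through test functions. It suffices to show that for every bounded measurable $f:\M\to\R$,
\[
\int_\M f\,\mathrm{d}\mu \;=\; \int_S\int_\G f(g\acts \Z)\,\mathrm{d}\Haar(g)\,\mathrm{d}\nu(\Z).
\]
The right-hand side equals $\E[f(g\acts\Psi(\Z))]$ by the definition $\nu=\Psi\push\mu$ together with Fubini, since $g$ is independent of $\Z$.

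The first step is to use invariance of $\mu$ to insert a random group element. Since $h\push\mu=\mu$ for every fixed $h$, integrating over $h\sim\Haar$ gives
\[
\E[f(\Z)]=\E[f(h\acts\Z)].
\]
This is where measurability of the action map $(g,\Z)\mapsto g\acts\Z$ is needed, so that Fubini applies. The action is continuous for $SO(3)\times S_N$ acting linearly, so this is fine.

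The second step, which is the key one, writes $\Z$ in terms of its canonical representative. By Definition~\ref{def:map_slice}(i), $\Psi(\Z)\in\Orb(\Z)$. So there is some $\gamma(\Z)\in\G$ with
\[
\Z=\gamma(\Z)\acts\Psi(\Z).
\]
Under Assumption~\ref{ass:free} this $\gamma(\Z)$ is unique $\mu$-a.s. Substituting, we get
\[
h\acts\Z=(h\gamma(\Z))\acts\Psi(\Z).
\]
Now condition on $\Z$. Because Haar measure on the compact group $\G$ (Assumption~\ref{ass:center}) is right-invariant as well as left-invariant, $h\gamma(\Z)\sim\Haar$ for each fixed $\Z$, and it is independent of $\Z$. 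Hence $(h\gamma(\Z),\Psi(\Z))$ has the same law as $(g,\Psi(\Z))$ with $g\sim\Haar$ independent of $\Z$. Therefore
\[
\E[f(h\acts\Z)]=\E[f(g\acts\Psi(\Z))],
\]
which closes the chain of equalities.

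The main obstacle is measurability of $\Z\mapsto\gamma(\Z)$, which is needed to make the conditioning step rigorous. The plan is to avoid needing $\gamma$ to be globally measurable. Fix $\Z$, apply Haar invariance to the inner integral $\int_\G f(h\gamma\acts\Psi(\Z))\,\mathrm{d}\Haar(h)=\int_\G f(g\acts\Psi(\Z))\,\mathrm{d}\Haar(g)$, and only then integrate in $\Z$. The outer integrand $\Z\mapsto\int f(g\acts\Psi(\Z))\,\mathrm{d}\Haar(g)$ is then measurable because $\Psi$ is measurable. With this ordering, freeness is not even essential: existence of some $\gamma(\Z)$ for each $\Z$ suffices, and uniqueness merely makes the decomposition well defined. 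Finally, both sides are probability measures that agree on all bounded measurable $f$, so they are equal, giving \eqref{eq:factor}.
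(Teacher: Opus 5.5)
Your argument is correct, and it takes a different route from the paper.

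The paper works with the gauge element $g(\Z)$ defined by $\Z=g(\Z)\acts\Psi(\Z)$. It uses freeness (uniqueness of $g(\Z)$) together with the invariance $\Psi(h\acts\Z)=\Psi(\Z)$ to get $g(h\acts\Z)=h\,g(\Z)$. From $h\push\mu=\mu$ it then deduces that the conditional law of $g(\Z)$ given $\Psi(\Z)$ is left-invariant, and it identifies that conditional law with $\Haar$ by uniqueness of Haar probability.

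You instead average first. Invariance gives $\int f\,\mathrm{d}\mu=\int\!\int f(h\acts\Z)\,\mathrm{d}\Haar(h)\,\mathrm{d}\mu(\Z)$. The inner orbit average is unchanged when the base point is moved along the orbit to $\Psi(\Z)$, by right-invariance of Haar measure on the compact (hence unimodular) group.

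Your version is more elementary and more general. It uses only joint measurability of the action and the selection property $\Psi(\Z)\in\Orb(\Z)$. It needs none of the following:
\begin{itemize}
\item freeness;
\item the invariance property (ii) of $\Psi$;
\item measurability of the gauge map;
\item regular conditional laws.
\end{itemize}
The paper's step ``left-invariant for each $h$'' also tacitly requires controlling $h$-dependent null sets, e.g.\ through a countable dense subset of $\G$. Your sketch as first written, which treats $(h\gamma(\Z),\Psi(\Z))$ as a random pair, would have needed measurability of $\gamma$. Taking the inner Haar integral pointwise in $\Z$ before integrating over $\Z$ correctly avoids that.

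What the paper's route buys in exchange is a stronger conclusion than the stated identity. Under freeness, $(\Psi(\Z),g(\Z))\sim\nu\otimes\Haar$: the actual gauge of a data point is Haar-distributed and independent of its canonical representative. This is what lets one read the independent latent $G$ in the group-aligned lift used later as the true gauge of the data, rather than only as an equality in law.
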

\begin{corollary}[Sufficiency of slice modeling]\label{cor:suff}
To model any invariant target $\mu$, it suffices to model the slice distribution $\nu$; invariance is recovered by Haar randomization.
\end{corollary}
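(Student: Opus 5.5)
The statement to prove is Corollary~\ref{cor:suff}, and it follows almost directly from Theorem~\ref{thm:factor}. The plan is to make "it suffices to model $\nu$" precise as a constructive claim. Suppose a generative model produces samples $\tilde\Z\sim\hat\nu$ on $S$. Draw $g\sim\Haar$ independently and output $g\acts\tilde\Z$. The output law is then $\hat\mu:=\int_S\int_\G\delta_{g\acts\Z}\,\mathrm{d}\Haar(g)\,\mathrm{d}\hat\nu(\Z)$. When $\hat\nu=\nu=\Psi\push\mu$, Theorem~\ref{thm:factor}, in the form \eqref{eq:factor}, gives $\hat\mu=\mu$ immediately. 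This establishes that exact slice modeling recovers the target exactly.

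Next I will show that Haar randomization always yields an invariant measure, even when $\hat\nu$ is misspecified. For any $h\in\G$, we have $h\push\hat\mu=\int_S\int_\G\delta_{hg\acts\Z}\,\mathrm{d}\Haar(g)\,\mathrm{d}\hat\nu(\Z)$. Left-invariance of the Haar probability on the compact group $SO(3)\times S_N$ (Assumption~\ref{ass:center}) lets us substitute $g'=hg$, so this equals $\hat\mu$. Hence the randomized model is invariant by construction, whatever $\hat\nu$ is. This is the sense in which "invariance is recovered by Haar randomization."

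To complete the sufficiency claim, I would add a stability remark. The map $\hat\nu\mapsto\hat\mu$ is a Markov kernel applied to $\hat\nu$, so by the data-processing inequality $\TV(\hat\mu,\mu)\le\TV(\hat\nu,\nu)$, and the same holds for any $f$-divergence. Any approximation error on the slice therefore transfers to the invariant target without amplification.

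The only delicate point is measurability. We need $(g,\Z)\mapsto g\acts\Z$ to be jointly measurable, which holds because the action is continuous, and we need $\Psi$ to be defined $\mu$-a.s. as in the theorem. The free-action Assumption~\ref{ass:free} is already absorbed into Theorem~\ref{thm:factor}, so I expect no real obstacle; the corollary is essentially the theorem read as a sampling procedure.
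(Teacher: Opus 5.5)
Your proposal is correct and takes the same route as the paper. The paper states the corollary as an immediate reading of \Cref{thm:factor} in its sampling form $g\acts\tilde\Z\sim\mu$, and your two supplementary steps match arguments the paper gives elsewhere: invariance by construction via the substitution $g'=hg$ under left-invariant Haar is \Cref{prop:randomize}, and TV-stability via Haar randomization as a nonexpansive Markov kernel with $T(\nu)=\mu$ is the argument in the proof of \Cref{thm:univ-inv}.
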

We propose that canonicalization is a general technique for constructing equivariant/invariant functions \emph{without equivariant backbones}, generalizing \citet{kaba2023canonical}:
\begin{proposition}[Universality of canonicalized parameterzations over invariant and equivariant targets]\label{thm:universal_inv_equi}
Let $\G$ act continuously on a compact set $K\subset \R^d$. Suppose we have a (measurable) canonicalization map $\Psi:K\to K$ as \Cref{def:map_slice}, and a (measurable) gauge map $\kappa:K\to \G$ s.t.
\begin{equation}\label{eq:gauge-decomp}
\Psi(g\acts x)=\Psi(x),\ \kappa(g\acts x)=g\kappa(x),\ x=\kappa(x)\acts \Psi(x)
\end{equation}
Consider the parametrization
\begin{equation}
\phi(x)=\kappa(x)\acts f(\Psi(x)),
\end{equation}
where $f$ is a universal approximator on $\Psi(K)$. Then $\phi$ is a universal approximator of continuous $\G$-equivariant functions on $K$, and $f\circ \Psi$ is universal for continuous $\G$-invariant functions on $K$.
\end{proposition}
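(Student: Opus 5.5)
The plan is to prove the equivariant statement first and then read off the invariant one as the special case of a trivial output action. There are two parts: (a) every $\phi$ of the given form is exactly $\G$-equivariant, and (b) any continuous equivariant target can be written exactly in that form, with $f$ replaced by the target restricted to the slice. Universality then reduces to approximating that restriction by $f$.

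\emph{Step 1 (exact equivariance).} Using \eqref{eq:gauge-decomp},
\[
\phi(g\acts x)=\kappa(g\acts x)\acts f(\Psi(g\acts x))=(g\kappa(x))\acts f(\Psi(x))=g\acts\phi(x).
\]
So equivariance holds for every choice of $f$, with no constraint on the backbone. Likewise $f\circ\Psi$ is invariant by property (ii) of \Cref{def:map_slice}.

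\emph{Step 2 (exact representation of targets).} Let $F:K\to\R^m$ be continuous and $\G$-equivariant, where $\G$ acts on the output through a representation $\rho$. Applying equivariance to the decomposition $x=\kappa(x)\acts\Psi(x)$ gives
\[
F(x)=\rho(\kappa(x))\,F(\Psi(x)).
\]
Hence $F$ has the form $\phi$ with $f^\star:=F|_{\Psi(K)}$. In the invariant case, the same computation gives $F=F\circ\Psi$.

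\emph{Step 3 (approximation).} Given $\varepsilon>0$, choose $f$ with $\sup_{y\in\Psi(K)}\|f(y)-f^\star(y)\|<\varepsilon$. Then
\[
\sup_{x\in K}\|\phi(x)-F(x)\|\le \sup_{g\in\G}\|\rho(g)\|_{\mathrm{op}}\cdot\varepsilon .
\]
The constant $\sup_{g\in\G}\|\rho(g)\|_{\mathrm{op}}$ is finite because $\G=SO(3)\times S_N$ is compact (\Cref{ass:center}) and $\rho$ is continuous. For the actions used here (rotations and permutations of coordinates and features) $\rho$ is orthogonal, so the constant equals $1$. The invariant case is the same estimate with $\rho$ trivial.

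The step I expect to need the most care is making precise what it means for $f$ to be a universal approximator on $\Psi(K)$. Since $\Psi$ and $\kappa$ are only measurable, the slice $\Psi(K)$ need not be closed and $\phi$ may be discontinuous. My approach is to use that $\Psi(K)\subseteq K$ and that $f^\star$ is the restriction of the function $F$, which is continuous on the compact set $K$. A standard universal family (e.g.\ MLPs via Stone--Weierstrass) can therefore approximate $F$ uniformly on all of $K$, and hence on $\Psi(K)$.

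The resulting guarantee is uniform (sup-norm) approximation of $F$ by possibly discontinuous equivariant maps $\phi$. This is the sense of universality intended, in line with \citet{kaba2023canonical}. Measurability of $\phi$ follows from the measurability of $\kappa$ and $\Psi$ and the continuity of the group action.
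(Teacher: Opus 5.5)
Your proposal is correct and follows essentially the paper's route: use the gauge decomposition $x=\kappa(x)\acts\Psi(x)$ and equivariance to write $F(x)=\kappa(x)\acts F(\Psi(x))$, approximate the slice restriction $F|_{\Psi(K)}$ by $f$ in sup-norm, and carry the bound to all of $K$ through the norm-preserving action. Your additions are sound refinements of points the paper's proof leaves implicit: the explicit check that $\phi$ is exactly equivariant, the operator-norm constant for a general output representation, the explicit invariant case, and the remark that $F|_{\Psi(K)}$ is the restriction of a continuous function on the compact set $K$, so universality on a possibly non-closed slice causes no trouble.
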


\subsubsection{Stronger Expressivity via Symmetry Breaking}\label{subsubsec:superior}

We now argue that symmetry breaking with the help of non-equivariant models can be practically more expressive. 
Even though the true score of an invariant distribution is equivariant, enforcing equivariance inside the denoiser can \textbf{restrict architectural choices}, \emph{limiting the expressivity while increasing non-necessary computation costs}~\citep{yan2023swingnn}. 
Remarkably, non-equivariant models have stronger expressivity of their equivariant counterparts~\citep{zhang2021labeling,klWL}. However, frame averaging or relational pooling are needed to recover invariant outputs, leading to a complexity of the order of the group.
This motivates canonicalized diffusion: it uses a symmetry-breaking gauge (canonical order/pose) and leverages a stronger non-equivariant backbone on the slice, then restores invariance by randomization. Without the necessity to traverse and average, cononicalized non-equivariant models yield stronger expressivity with similar or less computation. More details are available in Appendix~\ref{subsubsec_appendix:proof_superior}.

\subsection{Canonicalization Accelerates Diffusion Training}\label{subsec:accelerate}

In this subsection, we provide two complementary analyses, score mixture complexity~(\cref{subsubsec:score}) and flow-matching conditional variance~(\cref{subsubsec:variance}), rigorously formalizing how canonicalization decreases symmetry-induced variance and accelerates diffusion or flow model training. We further show that aligned canonical prior and optimal transport further reduces within-slice~(\Cref{subsubsec:align_ot}).

\subsubsection{Mixture Structure in Diffusion Score}\label{subsubsec:score}
For simplicity, consider a finite group $\G$ with $M$ elements acting orthogonally on $\R^d$ and $g_m\in M$ the $m$-th element. Let $q$ be a slice density supported on a canonical region $S$. Define the invariant mixture
\begin{equation}\label{eq:finite-mixture}
p(z_t)=\frac{1}{M}\sum_{m=1}^M q(g_m^{-1}\acts z_t).
\end{equation}
where $z_t$ refers to noisy states, and the summation can be substituted by integration for an infinite group.
Assume $q$ is differentiable and positive where needed. Given the chain rule and orthogonality, the score follows
\begin{equation}
\begin{aligned}
\grad \log p(z_t)&=\sum_{m=1}^M w_m(z_t)\, g_m\acts \grad \log q(g_m^{-1}\acts z_t),\\
w_m(z_t):&=\frac{q(g_m^{-1}\acts z_t)}{\sum_{j=1}^M q(g_j^{-1}\acts z_t)}.
\end{aligned}
\end{equation}
Where multiple group copies overlap, the responsibilities $w_m(x_t)$ vary rapidly. The induced sharp score fields require more capacity and smaller step sizes for stable reverse integration. As noted in \citep{yan2023swingnn}, invariant training leads to increased modes and mixture-like optimal denoising scores (GMM analogy). Instead, canonicalization removes the mixture: there is a single component on the slice ($M=1$), which smooths the score landscape, simplifying denoising score network training.

\begin{figure*}[t]
    \centering
    \includegraphics[width=\textwidth]{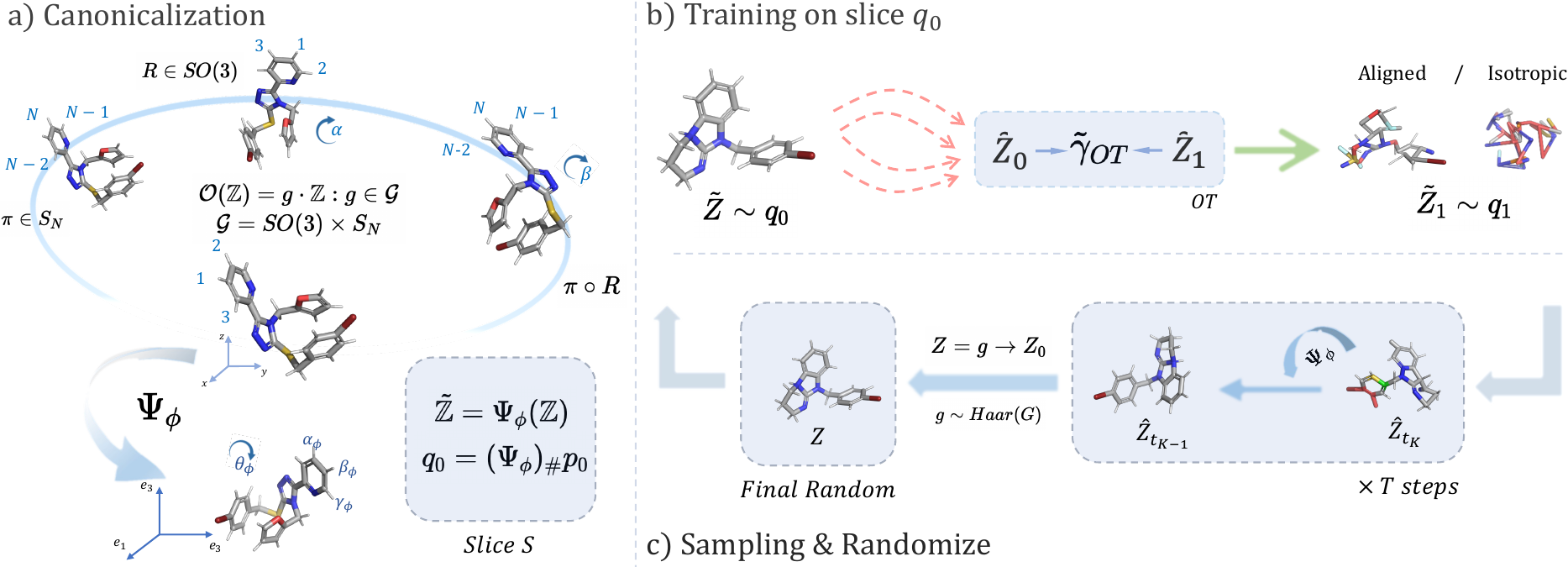}
    \caption{Overview of our canonicalized generation pipeline. (a) Canonicalization: map a molecule $Z$ to a slice representative $\tilde Z=\Psi_\phi(Z)$ under $\mathcal G=SO(3)\times S_N$, inducing $q_0=(\Psi_\phi)_\# p_0$. (b) Training: learn a diffusion/flow model on $\tilde Z_0\sim q_0$ with slice prior $q_1$ (optionally using an OT coupling for aligned pairings). (c) Sampling: generate on the slice (optionally with projected canonical sampling) and then apply $g\sim\mathrm{Haar}(\mathcal G)$ to obtain an invariant sample $Z=g\to \hat Z_0$.}
    \label{fig:canonical_pipeline}
    \vspace{-8pt}
\end{figure*}

\subsubsection{Conditional Flow Variance Reduction via Symmetry Ambiguity Elimination}\label{subsubsec:variance}
Here we formalize our central result: \emph{canonicalization reduces conditional variance in flow matching training} by eliminating symmetry ambiguity.

WLOG, consider the linear-path setting, the conditional vector field is constant:
\begin{equation}
\begin{aligned}
\Z_t = (1-t)\Z_0 + t\Z_1,\qquad (\Z_0,\Z_1)\sim \gamma,\\
u_t(\Z_t\mid \Z_0,\Z_1)=\frac{\mathrm{d}}{\mathrm{d}t}\Z_t = \Z_1-\Z_0.
\end{aligned}
\end{equation}
The optimal marginal field is $v_t(\Z)=\E[\Z_1-\Z_0\mid \Z_t=\Z]$.
Let $\hat v$ be any measurable predictor of $v_t(\Z_t)$ from $(t,\Z_t)$. Then the minimum achievable MSE (equivalently, the \emph{irreducible flow-matching error}) is the conditional variance:
\begin{equation*}
\inf_{\hat v}\E\big[\norm{\hat v(t,\Z_t)-(\Z_1-\Z_0)}^2\big]
=\E\big[\mathrm{Var}(\Z_1-\Z_0\mid t,\Z_t)\big].
\end{equation*}
We aim to understand when $\mathrm{Var}(Z_1-Z_0|t,Z_t)$ is large, especially at the presence of symmetry.
Consider the following \emph{symmetry-mixture data model}.
Let a compact group $\mathcal{G}$ act linearly and orthogonally on $\mathbb{R}^d$ (so $\|g\cdot x\|=\|x\|$). Assume the data distribution $p_0$ is $\mathcal{G}$-invariant:
\begin{equation}
\begin{aligned}
Z_0 \stackrel{d}{=} &G \cdot \widetilde Z_0\sim p_0,\ G\sim \lambda\ \text{(Haar)},\ \widetilde Z_0\sim q_0,
\\ &\widetilde Z_0 = \Psi(Z_0),\ q_0 = \Psi_{\#}p_0,
\end{aligned}
\end{equation}
where $\Psi$ is a (measurable) canonicalizer mapping each orbit to a representative on a slice. Intuitively, without canonicalization the posterior $p(Z_0\mid Z_t)$ can be multi-modal over the latent symmetry element $G$, inflating conditional variance. 

Compare the following two training paradigms:
(i) \textbf{canonical slice training} (quotient space),
train on canonicalized data $\widetilde Z_0:=\Psi(Z_0)\sim q_0$ with a slice prior $\widetilde Z_1\sim q_1$ and a coupling $\widetilde\gamma(\widetilde Z_0,\widetilde Z_1)$;
(ii) \textbf{invariant-mixture training} (ambient space),
train directly on the invariant data $Z_0\sim p_0$ with an ambient prior $Z_1\sim p_1$ and a coupling $\gamma(Z_0, Z_1)$.

To relate slice coupling to an invariant ambient-space coupling, we define the \emph{group-aligned lift} of a given $\widetilde\gamma$:
\begin{equation}
\label{eq:group-aligned-lift}
\begin{aligned}
G\sim\lambda,\ (\widetilde Z_0,\widetilde Z_1)\sim \widetilde\gamma\ \text{independent},\\
(Z_0,Z_1):=(G\cdot \widetilde Z_0,\; G\cdot \widetilde Z_1)\sim \gamma^{\mathrm{lift}}.
\end{aligned}
\end{equation}
We emphasize that the coupling correspondence (namely the invariant training uses the induced group-aligned lifting as the ambient coupling) is practically reasonable.
When $\tilde Z_1\sim \mathcal N(0, I)$ and the slice coupling is the independent product $\tilde \gamma:=q_0\otimes q_1$, then the ambient prior $Z_1\sim p_1=\mathcal{N}(0,I)$ and the invariant coupling recovers the widely adopted product coupling (w/o optimal transport) $\gamma_{mix}:=p_0\otimes p_1$; refer to Proposition~\ref{prop:lift-product} for more details.
This generally does not hold, however, if $\tilde Z_1$ is group-aware, or $\tilde Z_1$ and $\tilde Z_0$ are not independent; more details available in Appendix~\ref{subsubsec_appendix:proof_variance}.

\begin{remark}
All variance decompositions below compare \emph{under the lifted coupling} $\gamma^{\mathrm{lift}}$ (i.e., they isolate the effect of marginalizing over $G$ versus conditioning on $G$).
\end{remark}
We have the following sharp variance decomposition and conditional variance comparison under the lifted coupling.

\begin{theorem}[Variance decomposition under group-aligned lift]
\label{thm:canon-var-decomp}
Assume $\mathcal{G}$ acts orthogonally. Under the group-aligned lifted coupling
$G\sim\lambda$ independent of $(\widetilde Z_0,\widetilde Z_1)$ and $(Z_0,Z_1)=(G\cdot\widetilde Z_0,G\cdot\widetilde Z_1)$,
let $Z_t=(1-t)Z_0+tZ_1$ and $\widetilde Z_t=(1-t)\widetilde Z_0+t\widetilde Z_1$.
With $U:=Z_1-Z_0=G\cdot\Delta$ and $\Delta:=\widetilde Z_1-\widetilde Z_0$, we have
\begin{equation}\label{eq:canon-var-decomp}
\begin{aligned}
\mathrm{Var}(U\mid Z_t)&=
\underbrace{\mathbb{E}\!\left[\mathrm{Var}(\Delta\mid \widetilde Z_t)\mid Z_t\right]}_{\text{within-slice difficulty}}\\
&+\underbrace{\mathrm{Var}\!\left(\mathbb{E}[U\mid Z_t,G]\mid Z_t\right)}_{\text{symmetry ambiguity}\ge 0}.
\end{aligned}
\end{equation}
Consequently, invariant training without canonicalization admits larger conditional variance:
\begin{equation}\label{eq:canon-var-ineq}
\mathbb{E}\big[\mathrm{Var}(U\mid Z_t)\big]\;\ge\;\mathbb{E}\big[\mathrm{Var}(\Delta\mid \widetilde Z_t)\big].
\end{equation}
\end{theorem}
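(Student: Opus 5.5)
The plan is to apply the law of total variance to $U$, conditioning on the pair $(Z_t,G)$ inside the conditioning on $Z_t$:
\[
\mathrm{Var}(U\mid Z_t)=\mathbb{E}\big[\mathrm{Var}(U\mid Z_t,G)\mid Z_t\big]+\mathrm{Var}\big(\mathbb{E}[U\mid Z_t,G]\mid Z_t\big).
\]
The second term is already the symmetry-ambiguity term in \eqref{eq:canon-var-decomp}, and it is nonnegative. The remaining work is to identify the inner conditional variance $\mathrm{Var}(U\mid Z_t,G)$ with the within-slice quantity $\mathrm{Var}(\Delta\mid\widetilde Z_t)$.

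\textbf{Step 1: the conditioning $\sigma$-algebras agree.} By linearity of the action, $Z_t=(1-t)G\cdot\widetilde Z_0+tG\cdot\widetilde Z_1=G\cdot\widetilde Z_t$. Because $G$ acts invertibly, $\widetilde Z_t=G^{-1}\cdot Z_t$. Hence $\sigma(Z_t,G)=\sigma(\widetilde Z_t,G)$, and conditioning on $(Z_t,G)$ is the same as conditioning on $(\widetilde Z_t,G)$.

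\textbf{Step 2: independence removes $G$.} Since $G$ is independent of $(\widetilde Z_0,\widetilde Z_1)$, the pair $(\Delta,\widetilde Z_t)$ is a function of $(\widetilde Z_0,\widetilde Z_1)$ and is therefore independent of $G$. So the conditional law of $\Delta$ given $(\widetilde Z_t,G)$ equals its conditional law given $\widetilde Z_t$ alone. It follows that $\mathbb{E}[U\mid Z_t,G]=G\cdot\mathbb{E}[\Delta\mid\widetilde Z_t]$ and
\[
\mathrm{Var}(U\mid Z_t,G)=\mathrm{Var}(G\cdot\Delta\mid \widetilde Z_t,G)=G\,\mathrm{Var}(\Delta\mid\widetilde Z_t)\,G^\top .
\]

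\textbf{Step 3: orthogonality.} I expect this to be the main subtle point, namely what ``$\mathrm{Var}$'' means. If it denotes the scalar $\mathbb{E}\|\cdot-\mathbb{E}\cdot\|^2$, which is the quantity entering the irreducible flow-matching error, then orthogonality gives $\operatorname{tr}(G\Sigma G^\top)=\operatorname{tr}\Sigma$. In that case $\mathrm{Var}(U\mid Z_t,G)=\mathrm{Var}(\Delta\mid\widetilde Z_t)$ holds exactly, and substituting into the total-variance identity yields \eqref{eq:canon-var-decomp}. If instead $\mathrm{Var}$ is read as a covariance matrix, the within-slice term has to be written in the rotated form $G\,\mathrm{Var}(\Delta\mid\widetilde Z_t)G^\top$. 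I would therefore state and prove the decomposition in the scalar (trace) sense and remark on the matrix version.

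\textbf{Step 4: take expectations.} Taking the outer expectation over $Z_t$ and using the tower property gives
\[
\mathbb{E}\big[\mathbb{E}[\mathrm{Var}(\Delta\mid\widetilde Z_t)\mid Z_t]\big]=\mathbb{E}[\mathrm{Var}(\Delta\mid\widetilde Z_t)].
\]
Dropping the nonnegative ambiguity term then gives \eqref{eq:canon-var-ineq}. Measurability of all conditional objects holds because $\widetilde Z_t$ is a measurable function of $(Z_t,G)$ and the action is continuous, so no further technical issue arises.
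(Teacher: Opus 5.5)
Your proposal is correct and follows the paper's proof essentially step for step. Both use the law of total variance conditioned on $(Z_t,G)$, recover $\widetilde Z_t=G^{-1}\cdot Z_t$, use the independence of $(\Delta,\widetilde Z_t)$ from $G$, drop the rotation by orthogonality, and apply the tower property for the expected inequality. Your remark that the identity holds exactly only when $\mathrm{Var}$ is the scalar (trace) variance, while the matrix version keeps the conjugation $G\,\mathrm{Var}(\Delta\mid\widetilde Z_t)\,G^\top$, usefully makes explicit a point the paper leaves implicit.
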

In particular, canonicalization eliminates the second term caused by symmetry ambiguity, and strict inequality typically holds whenever $G$ remains ambiguous given $Z_t$ and the conditional mean drift depends on $G$; see concrete posterior collision bounds in Appendix~\ref{subsubsec_appendix:proof_variance}.

\begin{remark}[Only non-equivariant models benefit from canonicalization]\label{remark:non-equi-benefit}
Notably, \Cref{thm:canon-var-decomp} only provides the lower bound of irreducibly error, but does not indicate all models are able to achieve this lower bound in practice. We emphasize that only non-equivariant models benefit from symmetry ambiguity elimination practically since canonicalization and canonical conditions expands the effective hypothesis class, echoing our analysis in \Cref{subsubsec:superior}. Equivariant models, by contrast, adhere the same ambient Bayes risk after canonicalization as in invariant training. More details are referred to Proposition~\ref{prop:bayes-equivariant} in the Appendix.
\end{remark}

\subsubsection{Within-Slice Simplification with Aligned Canonical Prior and Monge Coupling}\label{subsubsec:align_ot}
Theorem~\ref{thm:canon-var-decomp} isolates two distinct sources of irreducible error: within-slice difficulty and symmetry ambiguity.
Canonicalization reduces the second term, but the first term still depends on the slice coupling $\tilde \gamma(\tilde Z_0, \tilde Z_1)$ and can remain large when the slice coupling is far from straight transport. We hereby present techniques targeting the first term: \emph{aligned canonical priors} and \emph{optimal transport}. 

\textbf{Principles of aligned prior for canonical slice.}
We start from the choice of prior $\tilde Z_1\sim q_1$. Consider the case if one keeps a \emph{misaligned} simple prior (e.g.\ $q_1=\mathcal{N}(0,I)$ while $q_0$ on the canonical slice is far from centered/isotropic or is low-dimensional), then the slice coupling is still difficult and the first term in \Cref{thm:canon-var-decomp} can remain large; see Proposition~\ref{prop:gaussian-product-condvar} for an example with closed-form within-slice conditional variance.
Therefore, we propose two simple yet effective approaches to derive \textbf{aligned canonical priors}:
(i) \emph{learnable priors} within certain function classes (e.g. a Gaussian with learnable mean and variance); (ii) \emph{KL projections of $q_0$} onto certain function classes (e.g. moment-matched Gaussian). For example, among Gaussian canonical priors, the moment matched Gaussian is $q_1^\star\sim\mathcal N(\mathbb{E}_{q_0}[\widetilde Z_0], \mathrm{Cov}_{q_0}(\widetilde Z_0))$~(Proposition~\ref{prop:kl-gaussian-projection}).

\textbf{Optimal transport and near-Monge coupling as complementary.}
Given the canonical data $q_0$ and the prior $q_1$, the slice conditional variance $\mathbb{E}[\mathrm{Var}(\widetilde Z_1-\widetilde Z_0\mid \widetilde Z_t)]$ still depends on the coupling $\tilde \gamma$. In favorable regimes, e.g.\ Monge-like (near) optimal transport (OT) under standard regularity conditions, this term can be small.

Remarkably, OT has been applied in some previous work~\citep{irwin2024semlaflow}. We now formally clarify its effectiveness on symmetry space, either with or without canonicalization. While OT generally reduces trajectory intersections and conditional variances regardless of whether canonicalization is applied, OT solutions are not unique in the presence of symmetry.
Averaging yields a diagonal-invariant optimal coupling, but may destroy determinism (which turns a Monge map into a mixture).
Fortunately, canonicalization stabilizes the solution by providing a gauge, and further benefits non-equivariant models taking canonical conditions as inputs, as will be illustrated in Appendix~\ref{subsubsec_appendix:proof_variance}. Alternatively, canonicalization serves as an ``implicit OT" by aligning the data-noise pair into the same gauge.
\begin{remark}[Canonicalization and optimal transport act complementarily]
    In the presence of symmetry, canonicalization eliminates group ambiguity, while optimal transport reduces within-slice difficulty. OT solutions can be closer to unique/Monge-like with canonicalization (\Cref{fig:ot-canon-complementary}).
\end{remark}
\textbf{Training--sampling consistency.}
Suppose the network is conditioned on an auxiliary variable $C$ (e.g.\ ``canonical rank'' positional encoding, a frame ID, or any deterministic/stochastic output of a canonicalizer).
Training then implicitly targets a conditional field $\widetilde v(t,z,c)$ under a joint law $\tilde\pi_t(c,z)$ induced by the training pipeline.
We say there is \emph{no condition mismatch at the start} if the inference-time joint law $\tilde\pi^{\mathrm{inf}}_1(C,\widetilde Z_1)$ equals the training-time joint law $\tilde\pi^{\mathrm{tr}}_1(C,\widetilde Z_1)$.
With either (i) aligned canonical priors, or (ii) isotropic priors and training-time OT, one keeps the training--sampling consistency. Indeed, the training trajectories in \Cref{fig:train_traj} strongly support our claim that canonicalization (especially when combined with aligned canonical prior) significantly accelerates training.

\section{Canonical Diffusion for Molecular Graph Generation}\label{sec:method}

Building on the theoretical foundations established in \Cref{sec:theory}, we now instantiate our canonical molecular graph diffusion framework under the joint symmetry group $\G = S_N \times SO(3)$ (atom index permutations and global rotations).

\subsection{Canonicalization of Molecular Graphs}
\label{subsec:method-canon}
The key design choice lies in constructing a canonicalizer that is both \emph{geometrically principled} and \emph{computationally tractable}. We decompose the full symmetry group, addressing permutation and rotation symmetries sequentially:
\begin{equation}
\Psi_\phi(\Z) \;=\; \Psi^{(\mathrm{rot})}_\phi\!\left(\Psi^{(\mathrm{perm})}_\phi(\Z)\right),
\end{equation}
where $\Psi^{(\mathrm{perm})}_\phi$ establishes a canonical atom ordering and $\Psi^{(\mathrm{rot})}_\phi$ fixes a global reference frame.

\begin{figure}[t]
    \centering
    \includegraphics[width=\linewidth]{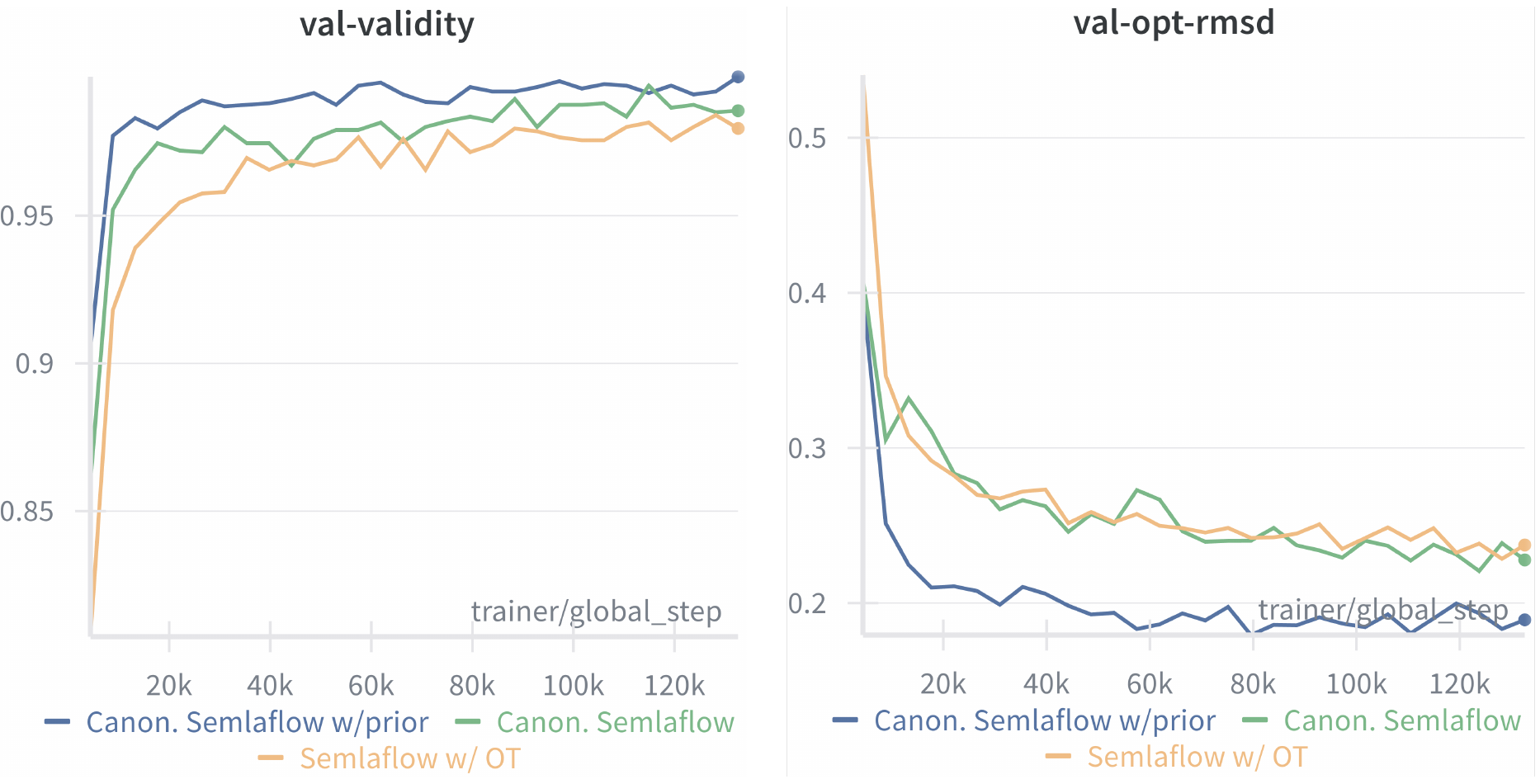}
    \caption{Training trajectories of our canonicalized model (visualization of the learned transport dynamics).}
    \label{fig:train_traj}
    \vspace{-0.3cm}
\end{figure}

We consider \textbf{canonicalization via geometric spectra}, which breaks both permutation and rotation symmetry in a coordinated manner. 
The Fiedler vector, i.e. the eigenvector corresponding to the second smallest eigenvalue (the algebraic connectivity) of a graph’s Laplacian matrix, determines a canonical atom ordering in a natural way.
Concretely, given a molecular, we build a symmetric kernel matrix $\W(\Xc,\A)\in\R^{N\times N}$ from pairwise distances (e.g., $W_{ij}=k(\|\Xc_i-\Xc_j\|)$ for a radial kernel $k$) to handle the 3D geometry, forming the normalized geometric Laplacian 
\begin{equation}
\mathbf L(\Xc,\A)\;=\;\D^{-1}(\D-\W), \quad \D=\mathrm{diag}(\W\1).
\end{equation}
$\mathbf L$ depends only on distances and is rotation-invariant. 
We then use the Fiedler vector $\mathbf{u_2}\in\R^N$ to define a canonical ordering
\begin{equation}
\pi^\star(\Z)\;:=\;\mathrm{argsort}(\mathbf{u_2})\in S_N.
\end{equation}
Under the standard mass–spring interpretation of the Laplacian, this corresponds to the lowest-frequency non-rigid vibration mode~\cite{knyazev2018spectralpartitioningsignedgraphs}. This rank yields the classical spectral bisection and provides a principled seriation keeping strongly connected substructures contiguous~\cite{doi:10.1137/0611030}. In the molecular context, the spectral ordering serves as a \emph{geometric-aware linearization}. 

Finally, we optionally define a global $SO(3)$ gauge in a rank-consistent way after applying $\pi^\star$. We choose a small set of anchor atoms in the canonical order (e.g., the two extremes define a primary axis and a third anchor determines a plane normal), and rotate the centered coordinates into the resulting right-handed orthonormal frame. This yields a deterministic representative under the joint $S_N\times SO(3)$ action; the full algorithm is deferred to Algorithm~\ref{alg:spectral-so3-canon}.

There are also other existing canonicalization methods, mostly for permutation in abstract graphs~\citep{zhao2024pard,LaplacianCanonization,RethinkingGraphCanon} instead of geometric graphs, while existing OT methods mostly target coordinates; our method consistently consider the joint group in a stable manner. 
Different canonicalization methods may lead to various stability and practical performance, yet we experimentally find that our novel geometric spectra canonicalization outperforms others (Appendix~\ref{subsec_appendix:canon}).

\subsection{Canonical Diffusion and CanonFlow}\label{subsec:algorithm}

\paragraph{Canonicality-aware denoising architectures.} We elaborate on two types of architectural changes to enable non-equivariant denoising model in canonical diffusion: (i) \emph{positional encoding only}, where we keep the original model architectures except that the model is conditioned on canonical rank embeddings $C=\varphi(r_i)$ with normalized rank $r_i=(i-1)/(N-1)$ to handle various molecular sizes; (ii) \emph{\textbf{Canon} architecture}, our newly designed architecture that explicitly keeps atom canonical embeddings as a learnable hidden state $\mathbf C\in\mathbb R^{N\times d_c}$, which interacts with the embeddings of $\Xc,\Hc,\A$ in each layer (\Cref{fig:canon-arch}); see more details in Appendix~\ref{subsec_appendix:arch}. 
We use ``\emph{Canon. BM}" to indicate the first type, where everything is consistent with the base model ``\emph{BM}" except the PE; ``\emph{CanonFlow}" specifically refers to the flow matching model with \emph{Canon} architecture, whose other settings are aligned with SemlaFlow.

\paragraph{Training on the canonical slice.}

Training proceeds via standard diffusion or flow matching procedures on the canonical slice $\widetilde\Z=\Psi_\phi(\Z)$ and the slice distribution $q_0=(\Psi_\phi)\push p_0$. For convenience, we illustrate with the flow matching framework, and diffusion models follow analogously. Given a coupling $\widetilde\gamma(\widetilde\Z_0,\widetilde\Z_1)$ between canonicalized data $\widetilde\Z_0\sim q_0$ and the (optionally aligned) prior $\widetilde\Z_1\sim q_1$, we construct interpolated paths $\widetilde\Z_t=\Phi_t(\widetilde\Z_0,\widetilde\Z_1)$ for $t\sim\mathrm{Unif}[0,1]$. 
The model $m_\theta$ learns to predict the conditional velocity field:
\begin{equation}
\min_\theta\;
\E\Big[\big\|m_\theta(\widetilde\Z_t,t;\,C)-u_t(\widetilde\Z_t\mid \widetilde\Z_0,\widetilde\Z_1)\big\|^2\Big],
\end{equation}
where $u_t = \frac{\mathrm{d}}{\mathrm{d}t}\Phi_t$ including velocities for both continuous and discrete features. 
As explained in \Cref{subsubsec:align_ot}, we also optionally adopt an aligned prior, or enable OT coupling early in training and anneal its usage over epochs (termed as \emph{OT anneal}) to stabilize optimization while avoiding over-specialization to aligned pairings. 

\paragraph{Sampling and restoring invariance.}
Starting from the slice prior $\widetilde\Z_1\sim q_1$ (the same marginal prior used in training), we integrate the learned dynamics to obtain a canonical sample $\widehat{\Z}_0$ on the slice $S$; no OT matching is required at inference even if OT is used during training. 
Denote the sampled canonical distribution as $\tilde \mu$, we can then simply restore the invariance by sampling $g\sim\Haar$, $\tilde\Z\sim\tilde\mu$ and outputting $g\acts \tilde\Z$, thus the randomized distribution $\mu := \int_{\G} (g\acts)\push \tilde\mu \,\mathrm{d}\Haar(g)$ is invariant (Proposition~\ref{prop:randomize}).

\begin{table*}[t]
\centering
\caption{Performance of canonicalized SemlaFlow on QM9 dataset. Atom stability, molecule stability, and validity are reported as percentages; Opt-RMSD is reported in \AA. 
}
\vspace{-0.8mm}
\small
\label{tab:unconditional_qm9}
\begin{tabular}{l|cccc|c}
\toprule
Methods              & Atom Stab $\uparrow$        & Mol Stab $\uparrow$            & Valid $\uparrow$        & Opt-RMSD $\downarrow$ & NFE $\downarrow$     \\
\midrule

EDM   & 98.7 & 82.0 & 91.9& -- & 1000   \\
GCDM & 98.7 & 85.7 & 94.8 & -- & 1000\\
MUDiff & 98.3 & 89.9 & 95.3 & -- & 1000 \\
\midrule

FlowMol &  99.7 & 96.2 & 97.3 & -- & 100\\
MiDi               & 99.8                  & 97.5                & 97.9              & --              & 500        \\
EQGAT-diff         & \pmerr{99.9}{0.0}           & \pmerr{98.7}{0.18}           & \pmerr{99.0}{0.16}       & -- & 500      \\
Semlaflow w/ OT & \pmerr{99.9}{0.0} & \pmerr{99.56}{0.07} & \pmerr{99.31}{0.08} &  \pmerr{0.23}{0.0} & 100\\
\midrule

Canon. SemlaFlow w/ OT anneal + PCS (ours) & \pmerr{99.9}{0.0}          & \pmerr{99.62}{0.08}           &  \pmerr{99.49}{0.06}     & \pmerr{0.21}{0.0} & 100  \\ 
Canon. SemlaFlow w/ Prior + PCS (ours) & \pmerr{99.9}{0.0}          & \pmerr{\textbf{99.64}}{0.04}           &  \pmerr{\textbf{99.58}}{0.06}     & \pmerr{\textbf{0.17}}{0.0} & 100  \\  
\bottomrule
\end{tabular}
\end{table*}

\begin{table*}[t]
\centering
\caption{Performance of canonicalized SemlaFlow and CanonFlow on GEOM-DRUG dataset. Atom stability, molecule stability, validity, as well as uniqueness and novelty are reported as percentages.
}
\vspace{-0.8mm}
\label{tab:unconditional_drug}
\small
\begin{tabular}{l|ccccc|c}
\toprule
Methods              & Atom Stab $\uparrow$        & Mol Stab $\uparrow$            & Valid $\uparrow$       & Unique $\uparrow$   & Novel $\uparrow$ & NFE $\downarrow$     \\
\midrule
FlowMol &  99.0 & 67.5 & 51.2  & - & - & 100\\
MiDi               & 99.8                  & 91.6                & 77.8                   & 100.0              & 100.0        & 500        \\
EQGAT-diff         & \pmerr{99.8}{0.0}           & \pmerr{93.4}{0.21}           & \pmerr{94.6}{0.24}        & \pmerr{100.0}{0.0} & \pmerr{99.9}{0.07} & 500      \\
SemlaFlow w/ OT      & \pmerr{99.8}{0.0}          & \pmerr{97.3}{0.08}           &  \pmerr{93.9}{0.19}      & \pmerr{100.0}{0.0} & \pmerr{99.6}{0.03}   & 100  \\ 
\midrule
Canon. SemlaFlow w/ OT anneal (ours) & \pmerr{99.8}{0.0}          & \pmerr{98.1}{0.03}           &  \pmerr{95.0}{0.20}       & \pmerr{100.0}{0.0} &  \pmerr{99.7}{0.02}   & 100  \\ 
CanonFlow w/ OT anneal (ours) & \pmerr{\textbf{99.9}}{0.0}          & \pmerr{\textbf{98.4}}{0.02}           &  \pmerr{\textbf{95.9}}{0.08}       & \pmerr{100.0}{0.0} &  \pmerr{99.7}{0.01}   & 100  \\ 
\bottomrule
\end{tabular}
\vspace{-0.8mm}
\end{table*}

\paragraph{Improved techniques with canonicality.}
A subtle discrepancy arises between training and inference: canonical ranks are computed exactly from $\Psi_\phi(\Z)$ during training, whereas at inference the model generates from noise without access to ground-truth ranks.
Moreoever, in the spirit of \citep{dym2024equivariant}, continuous canonicalization is impossible for $S_N$ and $SO(d)$. To address this issue and to mitigate the train-test gap, we relaxed the ranks into continuous values and inject small perturbations to rank features during training, with an optional auxiliary head that self-estimates ranks from intermediate representations. In sampling, models either utilize fixed canonical conditions, or adaptively re-estimate the ranks to align the noisy data and the canonical conditions, which we termed as \emph{projected canonical sampling} (PCS). Inspired by \cite{li2024geometric}, we also randomly drop canonical conditions with probability $p_{\text{drop}}$ during training, and enable classifier-free guidance (CFG)~\citep{ho2022classifier} extrapolating conditional and unconditional generation during inference.
More training and sampling techniques are deferred to Appendix~\ref{sec_appendix:imple_details}.

\section{Experiments}\label{sec:exp}

\paragraph{Experimental Setup.} 
We evaluate our canonicalization approach on two widely adopted benchmarks for unconditional 3D molecule generation: QM9~\citep{ramakrishnan2014quantum} and GEOM-DRUG~\citep{axelrod2022geom}. 
While QM9 consists of relatively small molecules, GEOM-DRUG provides a more challenging testbed that better differentiates the capabilities of various generative models.
We adopt identical train/validation/test splits as prior work~\citep{vignac2023midi, le2023navigating}.
Following~\citep{irwin2024semlaflow}, we exclude molecules containing more than 72 atoms from the GEOM-DRUG training set for computational efficiency, which accounts for approximately 1\% of the data; the validation and test sets remain intact.
During evaluation, we sample molecule sizes from the empirical distribution in the test set and generate the corresponding number of atoms by numerically integrating the learned flow dynamics.
All reported metrics are computed over 10k generated molecules with 3 independent runs. Without specification, we adopt the settings and hyper-parameters in SemlaFlow~\citep{irwin2024semlaflow}, which serves as our base model. By default, canonicalization is conducted on the $S_N$ group; results of canonicalizing $S_N\times SO(3)$ as well as more ablation studies are deferred to Appendix~\ref{sec_appendix:exp}. 

\vspace{-1pt}
\paragraph{Main results.} 
Applying the canonical diffusion framework to SemlaFlow and CanonFlow, we achieve state-of-the-art results on most metrics across both datasets.

\Cref{tab:unconditional_qm9} shows results on QM9.
While all methods achieve near-saturated atom stability, our canonicalized variants improve upon SemlaFlow in both molecule stability and validity.
Notably, Canon.\ SemlaFlow (Prior + PCS) achieves the lowest Opt-RMSD of 0.17\AA, a 26\% reduction compared to the baseline (0.23\AA), indicating that canonicalization guides the model toward generating geometries closer to energy-minimized conformations.
In addition, we report the training dynamics on QM9 in \Cref{fig:train_traj}. One could observe that with canonicalization, models achieve much higher validity and lower opt-RMSD in validation within the same epochs compared with baseline, and aligned canonical prior further increases the advantages, verifying that canonicalization accelerates training convergence.

\Cref{tab:unconditional_drug} presents results on the more challenging GEOM-DRUG dataset, which contains larger and structurally diverse drug-like molecules that better differentiate model capabilities.
On this benchmark, our canonicalized approach demonstrates clear improvements over baselines.
In particular, Canon.\ SemlaFlow (trained w/ CFG) consistently improves validity and molecule stability over SemlaFlow baseline. Remarkably, our CanonFlow achieves \emph{SOTA} performance across atom stability, molecule stability and validity -- the most important metrics, outperforming all previous models with a large margin (surpassing SemlaFlow baseline $1.1\%$ in molecule stability and $2.0\%$ in validity).

We also report few-step generation performance in \Cref{table:few_step}. The advantages of canonicalized models are still significant with even only 50 steps, which is only ten percent of most previous models. Notably, our method almost induces no overheads in computation or sample time. This additionally validates the efficient and effective guidance signal of canonical conditioning in sampling.

\begin{table}
  \caption{Comparison of few-step generation on GEOM-DRUG (w/o CFG in training and sampling). Sample time is measured by the average number of seconds to generate 1000 molecules. Canonical SemlaFlow significantly outperforms baselines w/ fewer sampling steps, with negligible increased sample time.}
  \label{table:few_step}
  \centering
  \resizebox{1.0\columnwidth}{!}{
  \begin{tabular}{lcccc}
    \toprule
    Model              & Mol Stab $\uparrow$  & Valid $\uparrow$     & Sample Time $\downarrow$  & NFE  $\downarrow$  \\
    \midrule
    EQGAT-diff         & 93.4$_{\pm \text{0.21}}$  & 94.6$_{\pm \text{0.24}}$
        &  2293.0  & 500  \\
    \midrule
    \text{SemlaFlow}$_{50}$ w/ OT    & 97.0$_{\pm \text{0.21}}$  & 93.9$_{\pm \text{0.12}}$  & 49.8    & 50   \\
    \cellcolor{SkyBlue!20}\text{Canon. SemlaFlow}$_{50}$     & \pmerr{97.5}{0.14}  & \pmerr{94.9}{0.07} & 50.1    & 50   \\
    \midrule
    \text{SemlaFlow}$_{100}$ w/ OT   & 97.3$_{\pm \text{0.08}}$  & 93.9$_{\pm \text{0.19}}$  & 99.3    & 100  \\
    \cellcolor{SkyBlue!20}\text{Canon. SemlaFlow}$_{100}$     & \pmerr{97.9}{0.09}  & \pmerr{94.9}{0.18}  & 99.7    & 100   \\
    \midrule
    Data              & 100.0  & 100.0  & --  & -- \\
    \bottomrule
  \end{tabular}
  }
  \vspace{-0.5cm}
\end{table}

\section{Conclusion}
We studied symmetry in diffusion/flow-based generative modeling through the lens of canonicalization and transport. For molecular data, where representations are invariant to permutations and rigid motions, we showed that symmetry can induce latent \emph{group ambiguity} at intermediate noise levels, inflating conditional variance and making few-step solvers less reliable. Canonicalization provides a principled gauge fixing: it collapses each orbit to a unique slice representative, removing the symmetry-induced ambiguity term and enabling the use of expressive non-equivariant backbones with stable, fixed indexing semantics (e.g., rank-based inputs). The remaining within-slice  difficulty can be mitigated by better aligned priors and near-Monge couplings during training. Together, these results clarify when and why canonicalization and OT act complementarily, how to avoid projection- or conditioning-induced mismatches, and how to design practical, scalable symmetry-aware generators for molecules that improve few-step quality while preserving invariance by construction. Canonical diffusion yields state-of-the-art results across several metrics on popular geometric molecule generation benchmarks, with significant advantages in training convergence.

\section*{Acknowledgements}

Cai Zhou would like to sincerely acknowledge Prof. Yusu Wang from UCSD, Prof. Stefanie Jegelka from MIT and TU Munich, Xingang Peng and Xiyuan Wang from Peking University for their constructive discussions.

\section*{Impact Statement}
This paper presents work whose goal is to advance the field of Machine Learning, Generative Models and Molecule Design. There are some potential societal consequences of our work, none which we feel must be specifically highlighted here.

\bibliography{ref}

@inproceedings{vignac2023midi,
  title={Midi: Mixed graph and 3d denoising diffusion for molecule generation},
  author={Vignac, Clement and Osman, Nagham and Toni, Laura and Frossard, Pascal},
  booktitle={Joint European Conference on Machine Learning and Knowledge Discovery in Databases},
  pages={560--576},
  year={2023},
  organization={Springer}
}

@article{axelrod2022geom,
  title={GEOM, energy-annotated molecular conformations for property prediction and molecular generation},
  author={Axelrod, Simon and Gomez-Bombarelli, Rafael},
  journal={Scientific Data},
  volume={9},
  number={1},
  pages={185},
  year={2022},
  publisher={Nature Publishing Group UK London}
}

@inproceedings{xu2023geometric,
  title={Geometric latent diffusion models for 3d molecule generation},
  author={Xu, Minkai and Powers, Alexander S and Dror, Ron O and Ermon, Stefano and Leskovec, Jure},
  booktitle={International Conference on Machine Learning},
  pages={38592--38610},
  year={2023},
  organization={PMLR}
}

@article{zhao2024pard,
  title={Pard: Permutation-invariant autoregressive diffusion for graph generation},
  author={Zhao, Lingxiao and Ding, Xueying and Akoglu, Leman},
  journal={Advances in Neural Information Processing Systems},
  volume={37},
  pages={7156--7184},
  year={2024}
}

@inproceedings{hoogeboom2022equivariant,
  title={Equivariant diffusion for molecule generation in 3d},
  author={Hoogeboom, Emiel and Satorras, V{\i}ctor Garcia and Vignac, Cl{\'e}ment and Welling, Max},
  booktitle={International conference on machine learning},
  pages={8867--8887},
  year={2022},
  organization={PMLR}
}

@article{ramakrishnan2014quantum,
  title={Quantum chemistry structures and properties of 134 kilo molecules},
  author={Ramakrishnan, Raghunathan and Dral, Pavlo O and Rupp, Matthias and Von Lilienfeld, O Anatole},
  journal={Scientific data},
  volume={1},
  number={1},
  pages={1--7},
  year={2014},
  publisher={Nature Publishing Group}
}

@inproceedings{satorras2021n,
  title={E (n) equivariant graph neural networks},
  author={Satorras, V{\i}ctor Garcia and Hoogeboom, Emiel and Welling, Max},
  booktitle={International conference on machine learning},
  pages={9323--9332},
  year={2021},
  organization={PMLR}
}

@article{garcia2021n,
  title={E (n) equivariant normalizing flows},
  author={Garcia Satorras, Victor and Hoogeboom, Emiel and Fuchs, Fabian and Posner, Ingmar and Welling, Max},
  journal={Advances in Neural Information Processing Systems},
  volume={34},
  pages={4181--4192},
  year={2021}
}

@article{wu2022diffusion,
  title={Diffusion-based molecule generation with informative prior bridges},
  author={Wu, Lemeng and Gong, Chengyue and Liu, Xingchao and Ye, Mao and Liu, Qiang},
  journal={Advances in Neural Information Processing Systems},
  volume={35},
  pages={36533--36545},
  year={2022}
}

@inproceedings{rombach2022high,
  title={High-resolution image synthesis with latent diffusion models},
  author={Rombach, Robin and Blattmann, Andreas and Lorenz, Dominik and Esser, Patrick and Ommer, Bj{\"o}rn},
  booktitle={Proceedings of the IEEE/CVF conference on computer vision and pattern recognition},
  pages={10684--10695},
  year={2022}
}

@inproceedings{you2023latent,
  title={Latent 3D Graph Diffusion},
  author={You, Yuning and Zhou, Ruida and Park, Jiwoong and Xu, Haotian and Tian, Chao and Wang, Zhangyang and Shen, Yang},
  booktitle={The Twelfth International Conference on Learning Representations},
  year={2023}
}

@inproceedings{li2024distance,
 author = {Li, Zian and Wang, Xiyuan and Huang, Yinan and Zhang, Muhan},
 booktitle = {Advances in Neural Information Processing Systems},
 pages = {37413--37447},
 title = {Is Distance Matrix Enough for Geometric Deep Learning?},
 volume = {36},
 year = {2023}
}

@article{li2024completeness,
  title={On the Completeness of Invariant Geometric Deep Learning Models},
  author={Li, Zian and Wang, Xiyuan and Kang, Shijia and Zhang, Muhan},
  journal={arXiv preprint arXiv:2402.04836},
  year={2024}
}

@article{ho2022classifier,
  title={Classifier-free diffusion guidance},
  author={Ho, Jonathan and Salimans, Tim},
  journal={arXiv preprint arXiv:2207.12598},
  year={2022}
}

@article{xu2022geodiff,
  title={Geodiff: A geometric diffusion model for molecular conformation generation},
  author={Xu, Minkai and Yu, Lantao and Song, Yang and Shi, Chence and Ermon, Stefano and Tang, Jian},
  journal={arXiv preprint arXiv:2203.02923},
  year={2022}
}

@article{dhariwal2021diffusion,
  title={Diffusion models beat gans on image synthesis},
  author={Dhariwal, Prafulla and Nichol, Alexander},
  journal={Advances in neural information processing systems},
  volume={34},
  pages={8780--8794},
  year={2021}
}

@article{song2024equivariant,
  title={Equivariant flow matching with hybrid probability transport for 3d molecule generation},
  author={Song, Yuxuan and Gong, Jingjing and Xu, Minkai and Cao, Ziyao and Lan, Yanyan and Ermon, Stefano and Zhou, Hao and Ma, Wei-Ying},
  journal={Advances in Neural Information Processing Systems},
  volume={36},
  year={2023}
}

@article{HowPowerfulGNN,
  title={How Powerful are Graph Neural Networks?},
  author={Keyulu Xu and Weihua Hu and Jure Leskovec and Stefanie Jegelka},
  journal={ArXiv},
  year={2018},
  volume={abs/1810.00826}
}

@inproceedings{morris2019weisfeiler,
  title={Weisfeiler and leman go neural: Higher-order graph neural networks},
  author={Morris, Christopher and Ritzert, Martin and Fey, Matthias and Hamilton, William L and Lenssen, Jan Eric and Rattan, Gaurav and Grohe, Martin},
  booktitle={Proceedings of the AAAI conference on artificial intelligence},
  pages={4602--4609},
  year={2019}
}

@article{morris2020weisfeiler,
  title={Weisfeiler and Leman go sparse: Towards scalable higher-order graph embeddings},
  author={Morris, Christopher and Rattan, Gaurav and Mutzel, Petra},
  journal={Advances in Neural Information Processing Systems},
  volume={33},
  pages={21824--21840},
  year={2020}
}

@InProceedings{klWL,
  title = 	 {From Relational Pooling to Subgraph {GNN}s: A Universal Framework for More Expressive Graph Neural Networks},
  author =       {Zhou, Cai and Wang, Xiyuan and Zhang, Muhan},
  booktitle = 	 {Proceedings of the 40th International Conference on Machine Learning},
  pages = 	 {42742--42768},
  year = 	 {2023},
  volume = 	 {202},
  series = 	 {Proceedings of Machine Learning Research},
  publisher =    {PMLR}
}

@article{Kim2023pure,
  title={Pure Transformers are Powerful Graph Learners},
  author={Jinwoo Kim and Tien Dat Nguyen and Seonwoo Min and Sungjun Cho and Moontae Lee and Honglak Lee and Seunghoon Hong},
  journal={ArXiv},
  year={2022},
  volume={abs/2207.02505}
}

@article{Exphormer,
  title={Exphormer: Sparse Transformers for Graphs},
  author={Hamed Shirzad and Ameya Velingker and B. Venkatachalam and Danica J. Sutherland and Ali Kemal Sinop},
  journal={ArXiv},
  year={2023},
  volume={abs/2303.06147}
}

@article{LaplacianCanonization,
  title={Laplacian Canonization: A Minimalist Approach to Sign and Basis Invariant Spectral Embedding},
  author={Ma, Jiangyan and Wang, Yifei and Wang, Yisen},
  journal={arXiv preprint arXiv:2310.18716},
  year={2023}
}

@article{SignBasisNet,
  title={Sign and basis invariant networks for spectral graph representation learning},
  author={Lim, Derek and Robinson, Joshua and Zhao, Lingxiao and Smidt, Tess and Sra, Suvrit and Maron, Haggai and Jegelka, Stefanie},
  journal={arXiv preprint arXiv:2202.13013},
  year={2022}
}

@article{Specformer,
  title={Specformer: Spectral Graph Neural Networks Meet Transformers},
  author={Deyu Bo and Chuan Shi and Lele Wang and Renjie Liao},
  journal={ArXiv},
  year={2023},
  volume={abs/2303.01028}
}

@article{RWSELearnable,
  title={Graph Neural Networks with Learnable Structural and Positional Representations},
  author={Vijay Prakash Dwivedi and Anh Tuan Luu and Thomas Laurent and Yoshua Bengio and Xavier Bresson},
  journal={ArXiv},
  year={2021},
  volume={abs/2110.07875}
}

@inproceedings{song2020score,
  title={Score-Based Generative Modeling through Stochastic Differential Equations},
  author={Yang Song and Jascha Sohl-Dickstein and Diederik P Kingma and Abhishek Kumar and Stefano Ermon and Ben Poole},
  booktitle={International Conference on Learning Representations},
  year={2021},
  url={https://openreview.net/forum?id=PxTIG12RRHS}
}

@article{gebauer2019symmetry,
  title={Symmetry-adapted generation of 3d point sets for the targeted discovery of molecules},
  author={Gebauer, Niklas and Gastegger, Michael and Sch{\"u}tt, Kristof},
  journal={Advances in neural information processing systems},
  volume={32},
  year={2019}
}

@inproceedings{luo2022autoregressive,
  title={An autoregressive flow model for 3d molecular geometry generation from scratch},
  author={Luo, Youzhi and Ji, Shuiwang},
  booktitle={International conference on learning representations (ICLR)},
  year={2022}
}

@article{lipman2022flow,
  title={Flow matching for generative modeling},
  author={Lipman, Yaron and Chen, Ricky TQ and Ben-Hamu, Heli and Nickel, Maximilian and Le, Matt},
  journal={arXiv preprint arXiv:2210.02747},
  year={2022}
}

@article{liu2022flow,
  title={Flow straight and fast: Learning to generate and transfer data with rectified flow},
  author={Liu, Xingchao and Gong, Chengyue and Liu, Qiang},
  journal={arXiv preprint arXiv:2209.03003},
  year={2022}
}

@article{chen2023efficient,
  title={Efficient and degree-guided graph generation via discrete diffusion modeling},
  author={Chen, Xiaohui and He, Jiaxing and Han, Xu and Liu, Li-Ping},
  journal={arXiv preprint arXiv:2305.04111},
  year={2023}
}

@article{qin2024defog,
  title={Defog: Discrete flow matching for graph generation},
  author={Qin, Yiming and Madeira, Manuel and Thanou, Dorina and Frossard, Pascal},
  journal={arXiv preprint arXiv:2410.04263},
  year={2024}
}

@article{luo2023fast,
  title={Fast graph generation via spectral diffusion},
  author={Luo, Tianze and Mo, Zhanfeng and Pan, Sinno Jialin},
  journal={IEEE Transactions on Pattern Analysis and Machine Intelligence},
  year={2023},
  publisher={IEEE}
}

@inproceedings{jang2023hierarchical,
  title={Hierarchical Graph Generation with $K^2$-trees},
  author={Jang, Yunhui and Kim, Dongwoo and Ahn, Sungsoo},
  booktitle={ICML 2023 Workshop on Structured Probabilistic Inference $\{$$\backslash$\&$\}$ Generative Modeling},
  year={2023}
}

@article{vignac2022digress,
  title={Digress: Discrete denoising diffusion for graph generation},
  author={Vignac, Clement and Krawczuk, Igor and Siraudin, Antoine and Wang, Bohan and Cevher, Volkan and Frossard, Pascal},
  journal={arXiv preprint arXiv:2209.14734},
  year={2022}
}

@article{ho2020denoising,
  title={Denoising Diffusion Probabilistic Models},
  author={Jonathan Ho and Ajay Jain and P. Abbeel},
  journal={ArXiv},
  year={2020},
  volume={abs/2006.11239}
}

@article{latentgraphdiffusion,
  title={Unifying Generation and Prediction on Graphs with Latent Graph Diffusion},
  author={Cai Zhou and Xiyuan Wang and Muhan Zhang},
  journal={Advances in Neural Information Processing Systems},
  volume={37},
  year={2024}
}

@inproceedings{zhou2023hodge,
 author = {Zhou, Cai and Wang, Xiyuan and Zhang, Muhan},
 booktitle = {Advances in Neural Information Processing Systems},
 pages = {16172--16206},
 title = {Facilitating Graph Neural Networks with Random Walk on Simplicial Complexes},
 volume = {36},
 year = {2023}
}

@article{le2023navigating,
  title={Navigating the design space of equivariant diffusion-based generative models for de novo 3d molecule generation},
  author={Le, Tuan and Cremer, Julian and No{\'e}, Frank and Clevert, Djork-Arn{\'e} and Sch{\"u}tt, Kristof},
  journal={arXiv preprint arXiv:2309.17296},
  year={2023}
}

@inproceedings{campbell2022discrete,
  title={A Continuous Time Framework for Discrete Denoising Models},
  author={Andrew Campbell and Joe Benton and Valentin De Bortoli and Tom Rainforth and George Deligiannidis and Arnaud Doucet},
  booktitle={Advances in Neural Information Processing Systems},
  editor={Alice H. Oh and Alekh Agarwal and Danielle Belgrave and Kyunghyun Cho},
  year={2022},
  url={https://openreview.net/forum?id=DmT862YAieY}
}

@article{dym2024equivariant,
  title={Equivariant frames and the impossibility of continuous canonicalization},
  author={Dym, Nadav and Lawrence, Hannah and Siegel, Jonathan W},
  journal={arXiv preprint arXiv:2402.16077},
  year={2024}
}

@article{yan2023swingnn,
  title={Swingnn: Rethinking permutation invariance in diffusion models for graph generation},
  author={Yan, Qi and Liang, Zhengyang and Song, Yang and Liao, Renjie and Wang, Lele},
  journal={arXiv preprint arXiv:2307.01646},
  year={2023}
}

@InProceedings{kaba2023canonical,
  title = 	 {Equivariance with Learned Canonicalization Functions},
  author =       {Kaba, S\'{e}kou-Oumar and Mondal, Arnab Kumar and Zhang, Yan and Bengio, Yoshua and Ravanbakhsh, Siamak},
  booktitle = 	 {Proceedings of the 40th International Conference on Machine Learning},
  pages = 	 {15546--15566},
  year = 	 {2023},
  volume = 	 {202},
  series = 	 {Proceedings of Machine Learning Research},
  month = 	 {23--29 Jul},
  publisher =    {PMLR},
  url = 	 {https://proceedings.mlr.press/v202/kaba23a.html}
}

@inproceedings{tahmasebi2025generalization,
  title={Generalization Bounds for Canonicalization: A Comparative Study with Group Averaging},
  author={Behrooz Tahmasebi and Stefanie Jegelka},
  booktitle={The Thirteenth International Conference on Learning Representations},
  year={2025},
  url={https://openreview.net/forum?id=n0lXaskyk5}
}

@article{irwin2024semlaflow,
  title={SemlaFlow--Efficient 3D Molecular Generation with Latent Attention and Equivariant Flow Matching},
  author={Irwin, Ross and Tibo, Alessandro and Janet, Jon Paul and Olsson, Simon},
  journal={arXiv preprint arXiv:2406.07266},
  year={2024}
}

@article{GenerativeGradients,
  title={Generative Modeling by Estimating Gradients of the Data Distribution},
  author={Yang Song and Stefano Ermon},
  journal={ArXiv},
  year={2019},
  volume={abs/1907.05600}
}

@article{albergo2023stochastic,
  title={Stochastic interpolants: A unifying framework for flows and diffusions},
  author={Albergo, Michael S and Boffi, Nicholas M and Vanden-Eijnden, Eric},
  journal={arXiv preprint arXiv:2303.08797},
  year={2023}
}

@article{ho2022video,
  title={Video diffusion models},
  author={Ho, Jonathan and Salimans, Tim and Gritsenko, Alexey and Chan, William and Norouzi, Mohammad and Fleet, David J},
  journal={Advances in neural information processing systems},
  volume={35},
  pages={8633--8646},
  year={2022}
}

@article{singer2022make,
  title={Make-a-video: Text-to-video generation without text-video data},
  author={Singer, Uriel and Polyak, Adam and Hayes, Thomas and Yin, Xi and An, Jie and Zhang, Songyang and Hu, Qiyuan and Yang, Harry and Ashual, Oron and Gafni, Oran and others},
  journal={arXiv preprint arXiv:2209.14792},
  year={2022}
}

@article{li2022diffusionlm,
  title={Diffusion-lm improves controllable text generation},
  author={Li, Xiang and Thickstun, John and Gulrajani, Ishaan and Liang, Percy S and Hashimoto, Tatsunori B},
  journal={Advances in neural information processing systems},
  volume={35},
  pages={4328--4343},
  year={2022}
}

@ARTICLE{I2GNN,
       author = {{Huang}, Yinan and {Peng}, Xingang and {Ma}, Jianzhu and {Zhang}, Muhan},
        title = "{Boosting the Cycle Counting Power of Graph Neural Networks with I$^2$-GNNs}",
      journal = {arXiv e-prints},
         year = 2022,
        month = oct,
          eid = {arXiv:2210.13978},
        pages = {arXiv:2210.13978},
          doi = {10.48550/arXiv.2210.13978},
archivePrefix = {arXiv},
       eprint = {2210.13978},
 primaryClass = {cs.LG},
       adsurl = {https://ui.adsabs.harvard.edu/abs/2022arXiv221013978H}
}

@inproceedings{IDAwareGNN,
  title={Identity-aware Graph Neural Networks},
  author={Jiaxuan You and Jonathan M. Gomes-Selman and Rex Ying and Jure Leskovec},
  booktitle={AAAI Conference on Artificial Intelligence},
  year={2021}
}

@inproceedings{RethinkingGraphCanon,
 author = {Dong, Zehao and Zhang, Muhan and Payne, Philip and Province, Michael and Cruchaga, Carlos and Zhao, Tianyu and Li, Fuhai and Chen, Yixin},
 booktitle = {International Conference on Learning Representations},
 pages = {30797--30817},
 title = {Rethinking the Power of Graph Canonization in Graph Representation Learning with Stability},
 volume = {2024},
 year = {2024}
}

@InProceedings{li2024geometric,
  title = 	 {Geometric Representation Condition Improves Equivariant Molecule Generation},
  author =       {Li, Zian and Zhou, Cai and Wang, Xiyuan and Peng, Xingang and Zhang, Muhan},
  booktitle = 	 {Proceedings of the 42nd International Conference on Machine Learning},
  pages = 	 {36921--36953},
  year = 	 {2025},
  volume = 	 {267},
  series = 	 {Proceedings of Machine Learning Research},
  month = 	 {13--19 Jul},
  publisher =    {PMLR},
  url = 	 {https://proceedings.mlr.press/v267/li25dz.html}
}

@inproceedings{cai2023connection,
  title={On the connection between mpnn and graph transformer},
  author={Cai, Chen and Hy, Truong Son and Yu, Rose and Wang, Yusu},
  booktitle={International conference on machine learning},
  pages={3408--3430},
  year={2023},
  organization={PMLR}
}

@InProceedings{zhou2024KTransformer,
  title = 	 {On the Theoretical Expressive Power and the Design Space of Higher-Order Graph Transformers},
  author =       {Zhou, Cai and Yu, Rose and Wang, Yusu},
  booktitle = 	 {Proceedings of The 27th International Conference on Artificial Intelligence and Statistics},
  pages = 	 {2179--2187},
  year = 	 {2024},
  volume = 	 {238},
  series = 	 {Proceedings of Machine Learning Research},
  month = 	 {02--04 May},
  publisher =    {PMLR}
}

@article{PPGN,
  title={Provably Powerful Graph Networks},
  author={Haggai Maron and Heli Ben-Hamu and Hadar Serviansky and Yaron Lipman},
  journal={ArXiv},
  year={2019},
  volume={abs/1905.11136}
}

@article{GPS,
  title={Recipe for a General, Powerful, Scalable Graph Transformer},
  author={Ladislav Rampasek and Mikhail Galkin and Vijay Prakash Dwivedi and Anh Tuan Luu and Guy Wolf and D. Beaini},
  journal={ArXiv},
  year={2022},
  volume={abs/2205.12454}
}

@article{zhang2021labeling,
  title={Labeling trick: A theory of using graph neural networks for multi-node representation learning},
  author={Zhang, Muhan and Li, Pan and Xia, Yinglong and Wang, Kai and Jin, Long},
  journal={Advances in Neural Information Processing Systems},
  volume={34},
  pages={9061--9073},
  year={2021}
}

@article{gong2022diffuseq,
  title={Diffuseq: Sequence to sequence text generation with diffusion models},
  author={Gong, Shansan and Li, Mukai and Feng, Jiangtao and Wu, Zhiyong and Kong, LingPeng},
  journal={arXiv preprint arXiv:2210.08933},
  year={2022}
}

@article{watson2023rfdiffusion,
  title={De novo design of protein structure and function with RFdiffusion},
  author={Watson, Joseph L and Juergens, David and Bennett, Nathaniel R and Trippe, Brian L and Yim, Jason and Eisenach, Helen E and Ahern, Woody and Borst, Alexander J and Ragotte, Robert J and Milles, Lukas F and others},
  journal={Nature},
  volume={620},
  pages={1089--1100},
  year={2023},
  publisher={Nature Publishing Group}
}

@inproceedings{corso2023diffdock,
  title={DiffDock: Diffusion Steps, Twists, and Turns for Molecular Docking},
  author={Corso, Gabriele and St{"a}rk, Hannes and Jing, Bowen and Barzilay, Regina and Jaakkola, Tommi},
  booktitle={International Conference on Learning Representations},
  year={2023}
}

@misc{knyazev2018spectralpartitioningsignedgraphs,
      title={On spectral partitioning of signed graphs}, 
      author={Andrew V. Knyazev},
      year={2018},
      eprint={1701.01394},
      archivePrefix={arXiv},
      primaryClass={cs.DS},
      url={https://arxiv.org/abs/1701.01394}, 
}

@article{doi:10.1137/0611030,
author = {Pothen, Alex and Simon, Horst D. and Liou, Kang-Pu},
title = {Partitioning Sparse Matrices with Eigenvectors of Graphs},
journal = {SIAM Journal on Matrix Analysis and Applications},
volume = {11},
number = {3},
pages = {430-452},
year = {1990},
doi = {10.1137/0611030}
}

@misc{luo2021graphdfdiscreteflowmodel,
      title={GraphDF: A Discrete Flow Model for Molecular Graph Generation}, 
      author={Youzhi Luo and Keqiang Yan and Shuiwang Ji},
      year={2021},
      eprint={2102.01189},
      archivePrefix={arXiv},
      primaryClass={cs.LG},
      url={https://arxiv.org/abs/2102.01189}, 
}

@misc{jin2019junctiontreevariationalautoencoder,
      title={Junction Tree Variational Autoencoder for Molecular Graph Generation}, 
      author={Wengong Jin and Regina Barzilay and Tommi Jaakkola},
      year={2019},
      eprint={1802.04364},
      archivePrefix={arXiv},
      primaryClass={cs.LG},
      url={https://arxiv.org/abs/1802.04364}, 
}

@misc{simonovsky2018graphvaegenerationsmallgraphs,
      title={GraphVAE: Towards Generation of Small Graphs Using Variational Autoencoders}, 
      author={Martin Simonovsky and Nikos Komodakis},
      year={2018},
      eprint={1802.03480},
      archivePrefix={arXiv},
      primaryClass={cs.LG},
      url={https://arxiv.org/abs/1802.03480}, 
}

@misc{shi2020graphafflowbasedautoregressivemodel,
      title={GraphAF: a Flow-based Autoregressive Model for Molecular Graph Generation}, 
      author={Chence Shi and Minkai Xu and Zhaocheng Zhu and Weinan Zhang and Ming Zhang and Jian Tang},
      year={2020},
      eprint={2001.09382},
      archivePrefix={arXiv},
      primaryClass={cs.LG},
      url={https://arxiv.org/abs/2001.09382}, 
}

@misc{hou2024improvingmoleculargraphgeneration,
      title={Improving Molecular Graph Generation with Flow Matching and Optimal Transport}, 
      author={Xiaoyang Hou and Tian Zhu and Milong Ren and Dongbo Bu and Xin Gao and Chunming Zhang and Shiwei Sun},
      year={2024},
      eprint={2411.05676},
      archivePrefix={arXiv},
      primaryClass={cs.LG},
      url={https://arxiv.org/abs/2411.05676}, 
}

@misc{lee2025fragfmhierarchicalframeworkefficient,
      title={FragFM: Hierarchical Framework for Efficient Molecule Generation via Fragment-Level Discrete Flow Matching}, 
      author={Joongwon Lee and Seonghwan Kim and Seokhyun Moon and Hyunwoo Kim and Woo Youn Kim},
      year={2025},
      eprint={2502.15805},
      archivePrefix={arXiv},
      primaryClass={cs.LG},
      url={https://arxiv.org/abs/2502.15805}, 
}

@misc{feng2023generation3dmoleculespockets,
      title={Generation of 3D Molecules in Pockets via Language Model}, 
      author={Wei Feng and Lvwei Wang and Zaiyun Lin and Yanhao Zhu and Han Wang and Jianqiang Dong and Rong Bai and Huting Wang and Jielong Zhou and Wei Peng and Bo Huang and Wenbiao Zhou},
      year={2023},
      eprint={2305.10133},
      archivePrefix={arXiv},
      primaryClass={cs.LG},
      url={https://arxiv.org/abs/2305.10133}, 
}

@article{
doi:10.26434/chemrxiv-2024-0ckgt-v2,
author = {Jike Wang  and Hao Luo  and Rui Qin  and Mingyang Wang  and Meijing Fang  and Odin Zhang  and Qiaolin Gou  and Qun Su  and Chao Shen  and Ziyi You  and Xiaozhe Wan  and Liwei Liu  and Chang-Yu Hsieh  and Tingjun Hou  and Yu Kang },
title = {3D Molecular Pocket-based Generation with Token-only Large Language Model},
journal = {ChemRxiv},
volume = {2024},
number = {0819},
pages = {},
year = {2024},
doi = {10.26434/chemrxiv-2024-0ckgt-v2},
URL = {https://chemrxiv.org/doi/abs/10.26434/chemrxiv-2024-0ckgt-v2},
eprint = {https://chemrxiv.org/doi/pdf/10.26434/chemrxiv-2024-0ckgt-v2}}

@misc{hassan2024etflowequivariantflowmatchingmolecular,
      title={ET-Flow: Equivariant Flow-Matching for Molecular Conformer Generation}, 
      author={Majdi Hassan and Nikhil Shenoy and Jungyoon Lee and Hannes Stark and Stephan Thaler and Dominique Beaini},
      year={2024},
      eprint={2410.22388},
      archivePrefix={arXiv},
      primaryClass={q-bio.QM},
      url={https://arxiv.org/abs/2410.22388}, 
}

@misc{hong2025accelerating3dmoleculegeneration,
      title={Accelerating 3D Molecule Generation via Jointly Geometric Optimal Transport}, 
      author={Haokai Hong and Wanyu Lin and Kay Chen Tan},
      year={2025},
      eprint={2405.15252},
      archivePrefix={arXiv},
      primaryClass={cs.LG},
      url={https://arxiv.org/abs/2405.15252}, 
}

@misc{huang2023learningjoint2d,
      title={Learning Joint 2D \& 3D Diffusion Models for Complete Molecule Generation}, 
      author={Han Huang and Leilei Sun and Bowen Du and Weifeng Lv},
      year={2023},
      eprint={2305.12347},
      archivePrefix={arXiv},
      primaryClass={q-bio.BM},
      url={https://arxiv.org/abs/2305.12347}, 
}

@misc{hua2024mudiffunifieddiffusioncomplete,
      title={MUDiff: Unified Diffusion for Complete Molecule Generation}, 
      author={Chenqing Hua and Sitao Luan and Minkai Xu and Rex Ying and Jie Fu and Stefano Ermon and Doina Precup},
      year={2024},
      eprint={2304.14621},
      archivePrefix={arXiv},
      primaryClass={cs.LG},
      url={https://arxiv.org/abs/2304.14621}, 
}

@misc{tong2024improvinggeneralizingflowbasedgenerative,
      title={Improving and generalizing flow-based generative models with minibatch optimal transport}, 
      author={Alexander Tong and Kilian Fatras and Nikolay Malkin and Guillaume Huguet and Yanlei Zhang and Jarrid Rector-Brooks and Guy Wolf and Yoshua Bengio},
      year={2024},
      eprint={2302.00482},
      archivePrefix={arXiv},
      primaryClass={cs.LG},
      url={https://arxiv.org/abs/2302.00482}, 
}

@misc{decao2022molganimplicitgenerativemodel,
      title={MolGAN: An implicit generative model for small molecular graphs}, 
      author={Nicola De Cao and Thomas Kipf},
      year={2022},
      eprint={1805.11973},
      archivePrefix={arXiv},
      primaryClass={stat.ML},
      url={https://arxiv.org/abs/1805.11973}, 
}

@misc{jiang2025bureswassersteinflowmatchinggraph,
      title={Bures-Wasserstein Flow Matching for Graph Generation}, 
      author={Keyue Jiang and Jiahao Cui and Xiaowen Dong and Laura Toni},
      year={2025},
      eprint={2506.14020},
      archivePrefix={arXiv},
      primaryClass={cs.LG},
      url={https://arxiv.org/abs/2506.14020}, 
}

@misc{tian2024equiflowequivariantconditionalflow,
      title={EquiFlow: Equivariant Conditional Flow Matching with Optimal Transport for 3D Molecular Conformation Prediction}, 
      author={Qingwen Tian and Yuxin Xu and Yixuan Yang and Zhen Wang and Ziqi Liu and Pengju Yan and Xiaolin Li},
      year={2024},
      eprint={2412.11082},
      archivePrefix={arXiv},
      primaryClass={cs.LG},
      url={https://arxiv.org/abs/2412.11082}, 
}

@misc{kornilov2024optimalflowmatchinglearning,
      title={Optimal Flow Matching: Learning Straight Trajectories in Just One Step}, 
      author={Nikita Kornilov and Petr Mokrov and Alexander Gasnikov and Alexander Korotin},
      year={2024},
      eprint={2403.13117},
      archivePrefix={arXiv},
      primaryClass={stat.ML},
      url={https://arxiv.org/abs/2403.13117}, 
}

@inproceedings{
quotientspace,
title={Quotient-Space Diffusion Model},
author={Anonymous},
booktitle={The Fourteenth International Conference on Learning Representations},
year={2026},
url={https://openreview.net/forum?id=3JPAkwSVc4}
}

@article{lawrence2025improving,
  title={Improving equivariant networks with probabilistic symmetry breaking},
  author={Lawrence, Hannah and Portilheiro, Vasco and Zhang, Yan and Kaba, S{\'e}kou-Oumar},
  journal={arXiv preprint arXiv:2503.21985},
  year={2025}
}
\bibliographystyle{icml2026}

\newpage
\appendix
\onecolumn
\section*{Appendix}

\section{Related Work}

\paragraph{Symmetries in generative models.} Many datasets of interest in generative modeling are naturally defined only up to group actions—e.g., rotations and translations in 3D, or permutations in sets and graphs, so a principled generator should respect these symmetries to avoid spurious modes and overcounting equivalent configurations. A dominant approach is to build symmetry into the architecture by enforcing group equivariance (and thus invariance of the induced distribution), as in E(n)/SE(3)-equivariant message passing networks and their use in equivariant flows and diffusion/score-based models for geometric data~\citep{satorras2021n, garcia2021n, hoogeboom2022equivariant}. Systematic studies further clarify the design space and practical trade-offs of equivariant generative modeling under such geometric symmetries~\cite{le2023navigating,lawrence2025improving}. An alternative architecture-agnostic perspective achieves symmetry by operating on the group orbit structure, for instance via group averaging or canonicalization/quotient representations, which can enable the use of expressive non-equivariant backbones while still yielding equivariant or invariant models; learned canonicalization has been shown to provide a general mechanism for constructing equivariant functions from canonical representatives~\cite{kaba2023canonical}. However, canonicalization is not “free”: for common groups there are fundamental continuity/stability obstructions, motivating softened or averaged constructions such as weighted frames to mitigate discontinuities near symmetric configurations~\cite{dym2024equivariant}. Related theory analyzes when canonicalization or group averaging offers better statistical behavior, highlighting distinct generalization regimes~\cite{tahmasebi2025generalization}. Complementing these perspectives, recent work in graph diffusion argues that enforcing strict invariance can make learning harder by inducing mixture-like objectives over symmetry transformations, and proposes post-hoc group randomization at sampling time to recover invariance without constraining the training model \cite{yan2023swingnn}. Finally, concurrent directions explicitly formulate diffusion on quotient spaces to avoid learning redundant degrees of freedom introduced by symmetries, further underscoring the value of orbit-space viewpoints for efficient generative learning\cite{quotientspace}.

\paragraph{Optimal transport.} In continuous-time generative modeling, optimal transport (OT) is often used as a principle for selecting cost-effective and more structured probability paths between a simple base distribution and the data distribution. A widely used choice in flow matching is the OT displacement interpolation, which yields trajectories closer to minimal-cost transport and empirically leads to faster convergence and fewer sampling steps~\citep{lipman2022flow}. Related ``trajectory straightening'' viewpoints, such as Rectified Flow, can be interpreted as progressively transforming the learned dynamics toward straighter transport-like paths, further improving numerical stability and reducing the number of integration steps required at inference~\citep{liu2022flow,tong2024improvinggeneralizingflowbasedgenerative,kornilov2024optimalflowmatchinglearning}.

In molecular generation, OT-inspired objectives have been adopted more explicitly to enable high-quality few-step generation under geometric symmetries. For 2D molecular graph generation, MolGAN~\citep{decao2022molganimplicitgenerativemodel} was among the first to successfully use the Wasserstein-1 distance to stabilize the training of molecular graph generators. Subsequent work such as BWFlow~\citep{jiang2025bureswassersteinflowmatchinggraph} also demonstrates the effectiveness of OT in 2D molecule design. For 3D molecule generation, various methods have demonstrated that OT can serve not only as a theoretical lens but also as a practical design tool for fast, high-fidelity molecule synthesis~\citep{song2024equivariant, tian2024equiflowequivariantconditionalflow,hong2025accelerating3dmoleculegeneration}.

\paragraph{Molecular generative models.}

Early work largely focused on generating discrete molecular structures via string or graph-based parameterizations, including structured latent-variable models that explicitly construct chemically valid graphs~\citep{jin2019junctiontreevariationalautoencoder,simonovsky2018graphvaegenerationsmallgraphs, jang2023hierarchical}. A major recent line uses diffusion processes on discrete graph attributes. Discrete denoising diffusion models operate directly on categorical attributes and have demonstrated strong performance at scale on molecular benchmarks~\citep{vignac2022digress}. In parallel, flow-based methods provide alternative likelihood-based or transport-inspired formulations for molecular graph generation~\citep{luo2021graphdfdiscreteflowmodel,shi2020graphafflowbasedautoregressivemodel}, and discrete flow matching further improves sampling flexibility and efficiency while retaining strong generation quality~\citep{qin2024defog,hou2024improvingmoleculargraphgeneration,lee2025fragfmhierarchicalframeworkefficient,chen2023efficient,luo2023fast}.

Generating molecules directly in 3D space has been explored with sequential and autoregressive schemes that place atoms step-by-step while maintaining geometric consistency~\citep{gebauer2019symmetry, luo2022autoregressive,feng2023generation3dmoleculespockets,doi:10.26434/chemrxiv-2024-0ckgt-v2}. Diffusion models have become a dominant paradigm for 3D molecular generation by learning to denoise corrupted coordinates with architectures designed to respect Euclidean symmetries~\citep{hoogeboom2022equivariant}. Closely related conditional settings, such as conformer generation given a fixed molecular graph, have also benefited from geometric diffusion and flow matching formulations~\citep{xu2022geodiff,hassan2024etflowequivariantflowmatchingmolecular,hong2025accelerating3dmoleculegeneration, wu2022diffusion}. 

A growing body of work aims to jointly generate discrete molecular graphs and continuous 3D geometries to avoid brittle post-hoc bond inference and to better couple chemical structure with spatial arrangement. MiDi~\citep{vignac2023midi} proposes a mixed discrete--continuous diffusion approach that generates molecular graphs together with conformers in an end-to-end differentiable manner. Subsequent efforts further explore joint diffusion over comprehensive molecular representations~\citep{huang2023learningjoint2d,hua2024mudiffunifieddiffusioncomplete,irwin2024semlaflow}.

There are also latent generative models developed for 2D, 3D, or joint representations of molecules, including GeoLDM~\citep{xu2023geometric}, LGD~\citep{latentgraphdiffusion}, and LDM-3DG~\citep{you2023latent}. More recently, GeoRCG~\citep{li2024geometric} further utilizes a two-stage generation: first generating molecule representations, then using geometric representations to guide the molecule generation. We emphasize that our analysis and proposed methods are also applicable and effective for these latent/representation space generative models under mild conditions.

\section{Omitted Proof}\label{sec_appendix:proof}

This section provides all omitted proof and more detailed discussions on our theoretical results, presented mainly in \Cref{sec:theory}.

\subsection{Proof for \Cref{subsec:correct_superior}}\label{subsec_appendix:proof_correct}
This subsection proves that canonicalized generative models can induce universal invariant distributions and outperform those equivariant baselines trained directly on invariant data.
\subsubsection{Proof for \Cref{subsubsec:correct}}
We first show the correctness and sufficiency of canonical slice modeling. 

\begin{theorem}[Factorization of invariant measures; \Cref{thm:factor} in main text]
Suppose Assumptions~\ref{ass:free} and \ref{ass:center} hold. Let $\mu$ be any $\G$-invariant probability measure on $\M$. Let $\Psi$ be an orbit representative map defined $\mu$-a.s., and let $\nu=\Psi\push\mu$ be the slice distribution on $S=\Psi(\M)$. Then
\begin{equation}
\mu \;=\; \int_{S}\left(\int_{\G}\delta_{g\acts \Z}\,\mathrm{d}\Haar(g)\right)\mathrm{d}\nu(\Z).
\end{equation}
Equivalently, if $\tilde\Z\sim \nu$, $g\sim\Haar$ independent, then $g\acts \tilde\Z\sim \mu$.
\end{theorem}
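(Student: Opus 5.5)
We prove the equivalent probabilistic form: if $\tilde\Z\sim\nu$ and $g\sim\Haar$ are independent, then $g\acts\tilde\Z\sim\mu$. The integral identity~\eqref{eq:factor} is just the law of $g\acts\tilde\Z$ written as a mixture over $\tilde\Z$. The plan is to introduce an auxiliary pair that already has the correct law and then recognise it as $g\acts\Psi(\Z)$ with $g$ Haar and independent of $\Psi(\Z)$.

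\textbf{Step 1: an auxiliary coupling.} Take $\Z\sim\mu$ and $h\sim\Haar$ independent. Because $\mu$ is $\G$-invariant, conditioning on $h$ gives $h\acts\Z\sim h\push\mu=\mu$. Hence $h\acts\Z\sim\mu$ unconditionally. It therefore suffices to show that $h\acts\Z$ has the same law as $g\acts\tilde\Z$.

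\textbf{Step 2: gauge decomposition.} By Assumption~\ref{ass:free}, the stabilizer of $\Z$ is trivial $\mu$-a.s. By property (i) of \Cref{def:map_slice}, $\Psi(\Z)\in\Orb(\Z)$. Together these give a unique element $\kappa(\Z)\in\G$ with $\Z=\kappa(\Z)\acts\Psi(\Z)$. This defines the gauge map a.s.

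We also need $\kappa$ to be measurable. Compactness of $\G=SO(3)\times S_N$ (Assumption~\ref{ass:center}) and continuity of the action give this. For instance, $\{(z,g): g\acts\Psi(z)=z\}$ is a measurable set whose sections are a.s. singletons, so a measurable-selection or graph argument applies. Alternatively, one may simply assume the canonicalizer comes with a measurable gauge, as in \Cref{thm:universal_inv_equi}.

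Write $h\acts\Z=(h\kappa(\Z))\acts\Psi(\Z)$. Set $g:=h\kappa(\Z)$ and $\tilde\Z:=\Psi(\Z)$; then $\tilde\Z\sim\Psi\push\mu=\nu$ by definition.

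\textbf{Step 3: independence and Haar law of $g$.} Condition on $\Z$. Then $h$ is still Haar-distributed, because $h$ is independent of $\Z$. Right-translation invariance of Haar measure on a compact group gives that $h\kappa(\Z)$ is Haar-distributed. This conditional law does not depend on $\Z$.

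Consequently $g$ is independent of $\Z$, and hence of $\tilde\Z=\Psi(\Z)$, with $g\sim\Haar$. So $(g,\tilde\Z)\sim\Haar\otimes\nu$, and $g\acts\tilde\Z=h\acts\Z\sim\mu$. This proves the claim.

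Finally, test against a bounded measurable $f$:
\[
\int f\,\mathrm{d}\mu=\int_S\int_\G f(g\acts\Z)\,\mathrm{d}\Haar(g)\,\mathrm{d}\nu(\Z),
\]
which is exactly~\eqref{eq:factor}.

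\textbf{Main obstacle.} The only delicate point is Step 2: existence and measurability of $\kappa$, together with the a.s. qualifiers. Uniqueness of $\kappa(\Z)$ comes from freeness. If $\Psi$ is only defined $\mu$-a.s., we restrict to a full-measure invariant set, taking its $\G$-saturation using compactness.

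I would first try to sidestep $\kappa$ entirely, since the law of $h\kappa$ does not depend on which $\kappa$ is chosen. Concretely, one can note that for fixed $z$ the map $h\mapsto h\acts z$ pushes $\Haar$ to the orbit measure on $\Orb(z)$. By $\G$-invariance of $\Psi$, this orbit measure equals the pushforward of $\Haar$ under $h\mapsto h\acts\Psi(z)$. That uses only bi-invariance of Haar measure on a compact group. Integrating over $z\sim\mu$ and combining with Step 1 then finishes the proof without any selection argument.
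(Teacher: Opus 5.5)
Your proof is correct, and it takes a genuinely different route from the paper's.

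The paper works with $\Z\sim\mu$ alone. Freeness gives a unique gauge $g(\Z)$ with $\Z=g(\Z)\acts\Psi(\Z)$. The relations $\Psi(h\acts\Z)=\Psi(\Z)$ and $g(h\acts\Z)=h\,g(\Z)$, together with invariance of $\mu$, make the conditional law of $g(\Z)$ given $\Psi(\Z)$ left-invariant. \emph{Uniqueness} of Haar probability then identifies that conditional law with $\Haar$.

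You instead randomize first with an independent $h\sim\Haar$, using invariance of $\mu$ only in the form $h\acts\Z\sim\mu$. You replace uniqueness of Haar by its \emph{right}-invariance, so that $h\kappa(\Z)$ is Haar and independent of $\Z$. This buys you two things. First, you need no regular conditional distributions and no passage from ``for each $h$, almost surely'' to ``almost surely, for all $h$'', a step the paper's one-line argument leaves implicit. Second, the pointwise version in your last paragraph needs no gauge map or measurable selection at all. That version uses only property (i), $\Psi(z)\in\Orb(z)$, joint measurability of the action for Fubini, and bi-invariance of $\Haar$. Note that it is property (i) together with right-invariance that gives the orbit-measure equality there; invariance of $\Psi$ plays no role, contrary to your attribution. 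So that argument shows the factorization holds without Assumption~\ref{ass:free}, and even for a non-invariant orbit selector.

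The paper's route, in exchange, directly yields a stronger structural fact. Under freeness, the gauge of a $\mu$-sample is Haar and independent of its canonical representative, $(\Psi(\Z),g(\Z))\sim\nu\otimes\Haar$. This is exactly the symmetry-mixture model used in \Cref{subsubsec:variance}. You can recover it from your coupling as well, since freeness gives $\kappa(h\acts\Z)=h\kappa(\Z)$ while $\Psi(h\acts\Z)=\Psi(\Z)$.
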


\begin{proof}
This is a known result, included for completeness.
Let $\Z\sim \mu$ and define $\tilde\Z=\Psi(\Z)\in S$. By definition of orbit representative, there exists $g(\Z)\in\G$ with $\Z=g(\Z)\acts \tilde\Z$. Under \ref{ass:free}, $g(\Z)$ is unique a.s. Fix any measurable $A\subseteq \G$. For any $h\in\G$, invariance of $\mu$ implies
\[
\Prob(g(\Z)\in A\mid \tilde\Z) = \Prob(h\,g(\Z)\in A\mid \tilde\Z),
\]
so the conditional law of $g(\Z)$ given $\tilde\Z$ is left-invariant. By uniqueness of Haar probability on compact $\G$, this conditional law is $\Haar$. Therefore,
\begin{equation}
\begin{aligned}
\mu(B)&=\E[\1\{g(\Z)\acts \tilde\Z\in B\}]
=\E\Big[\int_{\G}\1\{g\acts \tilde\Z\in B\}\,\mathrm{d}\Haar(g)\Big]\\
&=\int_S \Big(\int_\G \delta_{g\acts \Z}(B)\,\mathrm{d}\Haar(g)\Big)\mathrm{d}\nu(\Z),
\end{aligned}
\end{equation}
which is \eqref{eq:factor}. \qedhere
\end{proof}

\begin{corollary}[Sufficiency of slice modeling; Corollary~\ref{cor:suff} in main text]
To model any invariant target $\mu$, it suffices to model the slice distribution $\nu$; invariance is recovered by Haar randomization.
\end{corollary}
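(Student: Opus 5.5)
The corollary is a direct consequence of \Cref{thm:factor}, so the plan is to turn the factorization identity into an operational statement about generative models. I will take ``modeling $\nu$'' to mean having a sampler whose output law $\tilde\mu$ equals (or approximates) $\nu$. ``Invariance recovered by Haar randomization'' then means that the pushforward $\mu' := \int_\G (g\acts)\push\tilde\mu\,\mathrm{d}\Haar(g)$ is $\G$-invariant and equals $\mu$ whenever $\tilde\mu=\nu$.

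The key steps, in order, are as follows.

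\emph{Step 1: exact recovery.} Invoke \Cref{thm:factor} under Assumptions~\ref{ass:free} and \ref{ass:center}, with $\nu=\Psi\push\mu$. If $\tilde\Z\sim\nu$ and $g\sim\Haar$ are independent, then $g\acts\tilde\Z\sim\mu$. So an exact slice model yields an exact sampler for $\mu$.

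\emph{Step 2: invariance for any slice model.} Show that $\mu'$ is $\G$-invariant for every probability measure $\tilde\mu$ on $S$, even one different from $\nu$. For $h\in\G$ and measurable $B$,
\[
h\push\mu'(B)=\int_S\int_\G \1\{hg\acts\Z\in B\}\,\mathrm{d}\Haar(g)\,\mathrm{d}\tilde\mu(\Z)=\mu'(B),
\]
using left-invariance of $\Haar$ under $g\mapsto hg$ together with Fubini.

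\emph{Step 3: robustness to approximation.} Show that errors in the slice model are not amplified. The map $\tilde\mu\mapsto\mu'$ is a Markov kernel applied to $\tilde\mu$, so the data-processing inequality gives $\TV(\mu',\mu)\le\TV(\tilde\mu,\nu)$, and likewise for KL. For Wasserstein distances under the orthogonal action, couple the two inputs with a shared $g$; since the action is an isometry, $W_p(\mu',\mu)\le W_p(\tilde\mu,\nu)$.

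The main obstacle is measurability and the meaning of ``suffices.'' Assumption~\ref{ass:free} holds only almost surely, and $\Psi$ may be defined only $\mu$-a.s. I would handle this by restricting to the full-measure free set, on which $(\tilde\Z,g)\mapsto g\acts\tilde\Z$ is a measurable bijection onto a full-measure subset of $\M$. Measure-zero exceptional orbits then contribute nothing to either side. Joint measurability of the action needed for Fubini follows from continuity of the action of the compact group on $\M$.
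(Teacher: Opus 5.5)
Your proposal is correct and follows the paper's route: the corollary is read off directly from \Cref{thm:factor} (your Step 1), and your Step 2 is exactly \Cref{prop:randomize}. Your Step 3 matches the TV-nonexpansiveness of the Haar-randomization kernel used in the proof of \Cref{thm:univ-inv}; the Wasserstein and KL bounds and the measurability remarks are harmless extras.
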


In the next part, we show the universality of canonicalized parameterizations over invariant and equivariant functions.

\begin{proposition}[Universality over equivariant parameterizations; Proposition~\ref{thm:universal_inv_equi} in the main text]
Let $\G$ act continuously on a compact set $K\subset \R^d$. Suppose we have a (measurable) canonicalization map $\Psi:K\to K$ as \Cref{def:map_slice}, and a (measurable) gauge map $\kappa:K\to \G$ s.t.
\begin{equation}
\Psi(g\acts x)=\Psi(x),\ \kappa(g\acts x)=g\kappa(x),\ x=\kappa(x)\acts \Psi(x)
\end{equation}
(defined on a full-measure set; under free actions, $\kappa$ is unique a.s.). Consider the parametrization
\begin{equation}
\phi(x)=\kappa(x)\acts f(\Psi(x)),
\end{equation}
where $f$ is a universal approximator on $\Psi(K)$. Then $\phi$ is a universal approximator of continuous $\G$-equivariant functions on $K$, and $f\circ \Psi$ is universal for continuous $\G$-invariant functions on $K$.
\end{proposition}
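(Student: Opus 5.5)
The plan is to prove both claims with one reduction: every continuous equivariant (resp. invariant) target is reproduced \emph{exactly} by the parametrization once $f$ is replaced by the target restricted to the slice. Universality of $f$ on $\Psi(K)$ then transfers, because the outer operations ($\kappa(x)\acts(\cdot)$ and precomposition with $\Psi$) do not enlarge uniform errors. Equivariance of the parametrization itself is checked first. From \eqref{eq:gauge-decomp},
\[
\phi(g\acts x)=\kappa(g\acts x)\acts f(\Psi(g\acts x))=g\kappa(x)\acts f(\Psi(x))=g\acts\phi(x).
\]
This uses only associativity of the action. Likewise $f\circ\Psi$ is invariant because $\Psi(g\acts x)=\Psi(x)$. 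So the parametrized class consists of equivariant (resp. invariant) maps for every $f$.

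\textbf{Exact representation.} Let $F:K\to\R^{d'}$ be continuous and $\G$-equivariant, with $\G$ acting on the output space. Set $f^\star:=F|_{\Psi(K)}$. For $x\in K$, the decomposition $x=\kappa(x)\acts\Psi(x)$ together with equivariance gives
\[
F(x)=\kappa(x)\acts F(\Psi(x))=\kappa(x)\acts f^\star(\Psi(x)).
\]
For invariant $F$ the same identity reads $F(x)=F(\Psi(x))=(f^\star\circ\Psi)(x)$. Note that $\Psi(K)\subseteq K$, since $\Psi$ maps into orbits and $K$ is $\G$-stable; I will take $\G$-stability of $K$ as implicit, as it is needed for the action on $K$ to make sense. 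Hence $f^\star$ is a continuous function on $\Psi(K)$.

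\textbf{Approximation transfer.} Given $\varepsilon>0$, choose $f$ from the universal class with $\sup_{y\in\Psi(K)}\|f(y)-f^\star(y)\|<\varepsilon$. In the invariant case,
\[
\sup_{x}|f(\Psi(x))-F(x)|\le\sup_{y\in\Psi(K)}|f(y)-f^\star(y)|<\varepsilon,
\]
so we are done. In the equivariant case,
\[
\|\phi(x)-F(x)\|=\|\kappa(x)\acts(f(\Psi(x))-f^\star(\Psi(x)))\|.
\]
Here linearity of the output action is used to pull $\kappa(x)$ outside the difference. The remaining quantity is bounded by $C\varepsilon$, where $C:=\sup_{g\in\G}\|\rho(g)\|_{\mathrm{op}}$ and $\rho$ is the output representation. $C$ is finite because $\G$ is compact and $\rho$ is continuous, and $C=1$ for orthogonal actions (as for $SO(3)\times S_N$ under \Cref{ass:center}). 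This gives uniform approximation with error at most $C\varepsilon$.

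\textbf{Main obstacle.} The hard part is the sense of "universal approximator on $\Psi(K)$". $\Psi$ and $\kappa$ are only measurable, and by \citet{dym2024equivariant} no continuous canonicalization exists in general. So $\Psi(K)$ need not be compact and $f^\star$ need not be uniformly approximable by a class that is only universal on compacta. I would handle this in one of two ways, stating the result in the appropriate one. The first is to take the closure $\overline{\Psi(K)}\subseteq K$, which is compact, and apply universality there; $F$ is continuous on all of $K$, so $F|_{\overline{\Psi(K)}}$ is a continuous target and the sup bound above still holds on $\Psi(K)$. This route works cleanly and requires no continuity of $\Psi$. The second, needed only if the class is restricted to the slice itself, is an $L^p(\mu)$ version on the full-measure set where \eqref{eq:gauge-decomp} holds, which only needs measurability of $\kappa$ and $\Psi$. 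Uniqueness of $\kappa$ a.s. under \Cref{ass:free} is not needed for approximation, only for well-definedness. I would therefore present the closure argument as the main proof and note that all identities hold on the full-measure set where $\kappa$ and $\Psi$ are defined.
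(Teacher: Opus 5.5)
Your proposal is correct and follows the paper's proof: restrict the equivariant target $F$ to the slice, use $x=\kappa(x)\acts\Psi(x)$ and equivariance to get $F(x)=\kappa(x)\acts F(\Psi(x))$, then transfer the uniform error through the norm-preserving action. Your additions are refinements the paper leaves implicit rather than a different route: you verify that $\phi$ is itself equivariant, spell out the invariant case, allow a general bounded linear output representation, and pass to the compact closure $\overline{\Psi(K)}$ because $\Psi$ is only measurable.
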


\begin{proof}
Let $F:K\to \R^d$ be continuous and $\G$-equivariant: $F(g\acts x)=g\acts F(x)$. Define its restriction to the slice $S:=\Psi(K)$ by $F_S(z):=F(z)$ for $z\in S$.
For any $x\in K$, write $x=\kappa(x)\acts \Psi(x)$. By equivariance,
\begin{equation}
F(x)=F(\kappa(x)\acts \Psi(x))=\kappa(x)\acts F(\Psi(x))=\kappa(x)\acts F_S(\Psi(x)).
\end{equation}
Since $f$ is universal on $S$, for any $\varepsilon>0$ there exists $f$ such that $\sup_{z\in S}\norm{f(z)-F_S(z)}<\varepsilon$. Then
\begin{equation}
\sup_{x\in K}\norm{\phi(x)-F(x)}\le \sup_{z\in S}\norm{f(z)-F_S(z)}<\varepsilon,
\end{equation}
using that the group action is norm-preserving.
\end{proof}

\subsubsection{Proof for \Cref{subsubsec:superior}}\label{subsubsec_appendix:proof_superior}
We now show the superiority of canonicalized generative models with expressive (possibly) non-equivariant networks over standard generative models with equivariant backbones. 

\paragraph{Expressivity of equivariant and non-equivariant models.} It is known that the expressivity of message-passing based GNNs are bounded by the first-order Weisfeiler–Lehman (1-WL) algorithm on abstract graph in terms of distinguishing isomorphic graphs~\citep{HowPowerfulGNN}. Following immediately is their inability in counting substructures such as cycles~\citep{I2GNN} and even link prediction~\citep{zhang2021labeling}. Analogous issues exist for transformers due to their equivalence with MPNNs~\citep{cai2023connection}, and equivariant GNNs including EGNNs~\citep{satorras2021n} when processing 3D geometric graphs~\citep{li2024distance,li2024completeness}. There are some main stream methods to overcome the expressivity pitfalls:
\begin{enumerate}
    \item \emph{High-order architectures}: high-order GNNs~\citep{morris2019weisfeiler,morris2020weisfeiler,PPGN} and high-order transformers~\citep{Kim2023pure,zhou2024KTransformer} have provably more powerful expressivity within and beyond the $k$-WL framework, at the cost of exponential more computation costs; for example, SwinGNN~\citep{yan2023swingnn} adopts $3$-WL equivalent PPGN~\citep{PPGN} as the denoising backbone. We argue, however, stronger expressivity does not need to emerge from high-order networks with exponential complexity: symmetry breaking also improves expressivity as detailed as follows.
    \item \emph{Symmetry breaking}: by introducing asymmetry intentionally, the expressivity of underlying network can be provably enhanced. A widely adopted technique is ``labeling" the nodes with extra identity features~\citep{IDAwareGNN,zhang2021labeling,I2GNN,klWL}, and the expressivity monotonically increases with the number of IDs. Unfortunately, these methods need the relational pooling or averaging to produce invariant outputs, which again introduce complexity at the scale of order of the group - canonicalization offers a way to produce appropriate and stable outputs~\citep{RethinkingGraphCanon} without the costly traverse and averaging procedure.
    \item \emph{Positional encodings and structural encodings}: graph structure-based PEs significantly improve the expressivity and practical performance of neural backbones~\citep{GPS}. Popular PEs include spatial structure-based (e.g. RWSE~\citep{RWSELearnable} and RRWP~\citep{Exphormer}) and spectra-based (using eigenvalues and eigenvectors of graph Laplacians~\citep{LaplacianCanonization,SignBasisNet,Specformer} or Hodge Laplacians~\citep{zhou2023hodge}). However, different from the fixed positional encodings in images or texts based on absolute indices of tokens, the calculations of above PEs on graphs rely on ground truth structures, which is not available in diffusion generation. Some generative models like DeFoG~\citep{qin2024defog} utilizes an estimated PE from the currect noisy graph, which we argue might not be the optimal choice. In comparison, our canonicalization rank can also be viewed as spatial and spectral PEs, yet yields almost no computation overhead and no train-test gap.
\end{enumerate}

\paragraph{Provable smaller TV error with canonicalized non-equivariant backbones.}
Following immediately is the superior expressivity of combining non-equivariant denoising networks with canonicalization.

\begin{assumption}[Generative modeling error of equivariant models]\label{ass:univ}
For any target invariant distribution $\mu$, suppose there exist parameters $\theta(\text{eq})$ such that the equivariant generative model $\hat\nu_{\theta( \text{equi})}$ satisfies $\TV(\hat\mu_{\theta(\text{eq})},\mu)<\varepsilon$.
\end{assumption}
We hence conclude that the generative modeling error could be smaller for canonicalized non-equivariant models.

\begin{theorem}[Canonicalized models match (or improve) equivariant baselines in TV]\label{thm:univ-inv}
Under Assumptions \ref{ass:free} and \ref{ass:center}, let $\mu$ be any $\G$-invariant target distribution on $\M$ and let $\nu:=\Psi_{\#}\mu$ be its canonical-slice distribution on $S=\Psi(\M)$.
Assume there exists an \emph{equivariant baseline} (defined in ambient space) producing an invariant model distribution $\hat\mu_{\theta(\mathrm{eq})}$ such that
\begin{equation}
\TV\big(\hat\mu_{\theta(\mathrm{eq})},\mu\big)<\varepsilon.
\end{equation}
Let $\hat\nu_{\theta(\mathrm{eq})}:=\Psi_{\#}\hat\mu_{\theta(\mathrm{eq})}$ be the induced slice distribution.
If the slice model class $\{\hat\nu_{\theta(\mathrm{free})}\}$ is at least as expressive on $S$ in the sense that
\begin{equation}\label{eq:ass-slice-dominates}
\inf_{\theta(\mathrm{free})}\TV\big(\hat\nu_{\theta(\mathrm{free})},\nu\big)
\;\le\;
\TV\big(\hat\nu_{\theta(\mathrm{eq})},\nu\big),
\end{equation}
then there exists $\theta(\mathrm{free})$ such that the Haar-randomized model
\begin{equation}
\hat\mu_{\theta(\mathrm{free})}
:=\int_{S}\left(\int_{\G}\delta_{g\acts \Z}\,\mathrm{d}\Haar(g)\right)\mathrm{d}\hat\nu_{\theta(\mathrm{free})}(\Z)
\end{equation}
satisfies $\TV\big(\hat\mu_{\theta(\mathrm{free})},\mu\big)<\varepsilon$.
\end{theorem}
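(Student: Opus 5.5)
The argument reduces the ambient TV error to a slice TV error and back. Define the Haar-randomization operator $R(\nu') := \int_S \big(\int_\G \delta_{g\acts\Z}\,\mathrm{d}\Haar(g)\big)\,\mathrm{d}\nu'(\Z)$ on probability measures on $S$. By \Cref{thm:factor}, $R(\Psi_\#\mu')=\mu'$ for every $\G$-invariant $\mu'$ whose stabilizers are a.s. trivial. We apply this twice: $\mu = R(\nu)$, and, since the equivariant baseline is invariant, $\hat\mu_{\theta(\mathrm{eq})} = R(\hat\nu_{\theta(\mathrm{eq})})$. The second application needs free action under $\hat\mu_{\theta(\mathrm{eq})}$. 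I would handle this by noting that non-free points form a $\G$-invariant set $N$ with $\mu(N)=0$. One can then either assume the model charges $N$ with mass zero (true for absolutely continuous model laws), or split off the mass on $N$, which costs at most $\hat\mu_{\theta(\mathrm{eq})}(N)\le \TV(\hat\mu_{\theta(\mathrm{eq})},\mu)$ and so stays inside the budget.

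\textbf{Two contraction facts.} First, $\Psi_\#$ is a measurable pushforward, so $\TV(\Psi_\#\alpha,\Psi_\#\beta)\le\TV(\alpha,\beta)$ by data processing. This gives
\[
\TV(\hat\nu_{\theta(\mathrm{eq})},\nu)\le \TV(\hat\mu_{\theta(\mathrm{eq})},\mu)<\varepsilon .
\]
Second, $R$ is a Markov kernel $\Z\mapsto \int_\G\delta_{g\acts\Z}\,\mathrm{d}\Haar(g)$ applied to $\nu'$, so it is also TV-nonexpansive. This gives $\TV(R\alpha,R\beta)\le\TV(\alpha,\beta)$. Conversely, $\Psi_\# R = \mathrm{id}$ on measures supported on $S$, because $\Psi(g\acts\Z)=\Psi(\Z)=\Z$ for $\Z\in S$. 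Hence $R$ is actually a TV isometry on slice measures. We do not need that, but it confirms the reduction is tight.

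\textbf{Choosing the parameters.} By \eqref{eq:ass-slice-dominates},
\[
\inf_{\theta(\mathrm{free})}\TV(\hat\nu_{\theta(\mathrm{free})},\nu)\le \TV(\hat\nu_{\theta(\mathrm{eq})},\nu)<\varepsilon .
\]
The inequality is strict, so the infimum is strictly below $\varepsilon$, and some $\theta(\mathrm{free})$ attains $\TV(\hat\nu_{\theta(\mathrm{free})},\nu)<\varepsilon$. We do not need the infimum to be attained. Then
\[
\TV(\hat\mu_{\theta(\mathrm{free})},\mu)=\TV(R\hat\nu_{\theta(\mathrm{free})},R\nu)\le \TV(\hat\nu_{\theta(\mathrm{free})},\nu)<\varepsilon .
\]

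\textbf{Main obstacle.} The only delicate step is justifying $\hat\mu_{\theta(\mathrm{eq})}=R(\Psi_\#\hat\mu_{\theta(\mathrm{eq})})$, i.e. the free-action and measurability caveats for the model distribution. Strictly, this identity is not needed for the final chain: we only use data processing through $\Psi_\#$, and $\mu=R(\nu)$ from \Cref{thm:factor}. The only remaining requirement is that $\Psi$ is measurable and defined $\hat\mu_{\theta(\mathrm{eq})}$-a.s. If $\Psi$ is defined only $\mu$-a.s., extend it measurably and arbitrarily elsewhere. Data processing holds for any measurable extension, so the argument goes through.
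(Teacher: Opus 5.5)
Your proof is correct and follows the paper's argument: data processing through $\Psi_{\#}$ gives $\TV(\hat\nu_{\theta(\mathrm{eq})},\nu)<\varepsilon$, the expressivity hypothesis then supplies $\theta(\mathrm{free})$, and the Haar-randomization Markov kernel is TV-nonexpansive with $\mu = R(\nu)$ by \Cref{thm:factor}. Obtaining $\theta(\mathrm{free})$ from the strict inequality is slightly more careful than the paper, which tacitly assumes the infimum is attained; your discussion of $\hat\mu_{\theta(\mathrm{eq})}=R(\hat\nu_{\theta(\mathrm{eq})})$ is, as you note yourself, not needed.
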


\begin{proof}
First, pushforward cannot increase TV (data processing inequality). Hence
\begin{equation}
\TV(\hat\nu_{\theta(\mathrm{eq})},\nu)
=\TV\big(\Psi_{\#}\hat\mu_{\theta(\mathrm{eq})},\Psi_{\#}\mu\big)
\le \TV\big(\hat\mu_{\theta(\mathrm{eq})},\mu\big)
<\varepsilon.
\end{equation}
By \eqref{eq:ass-slice-dominates} (note that this usually holds according to our discussions above), choose $\theta(\mathrm{free})$ such that $\TV(\hat\nu_{\theta(\mathrm{free})},\nu)\le\TV(\hat\nu_{\theta(\mathrm{eq})},\nu)<\varepsilon$.
Finally, Haar randomization is a Markov kernel $T$ (cf.\ Theorem~\ref{thm:factor}), and thus is TV-nonexpansive:
\begin{equation}
\TV\big(T(\hat\nu_{\theta(\mathrm{free})}),T(\nu)\big)\le \TV\big(\hat\nu_{\theta(\mathrm{free})},\nu\big).
\end{equation}
Since $T(\nu)=\mu$ (Theorem~\ref{thm:factor}), we conclude $\TV(\hat\mu_{\theta(\mathrm{free})},\mu)<\varepsilon$. \qedhere
\end{proof}

\subsection{Proof for \Cref{subsec:accelerate}}\label{subsec_appendix:proof_accelerate}
In this subsection, we provide more detailed analysis and complete proof for our results regarding accelerating diffusion and flow model training through canonicalization.

\subsubsection{Proof for \Cref{subsubsec:score}}
This part mainly considers score-based diffusion modeling, showing that the existence of group symmetry in invariant data induces the mixture structure of the score function.
\begin{proposition}[Score of a symmetry mixture]\label{prop:mixture-score}
Assume $q$ is differentiable and positive where needed. Then
\begin{equation}
\grad \log p(x)=\sum_{m=1}^M w_m(x)\, g_m\acts \grad \log q(g_m^{-1}\acts x),
\quad
w_m(x):=\frac{q(g_m^{-1}\acts x)}{\sum_{j=1}^M q(g_j^{-1}\acts x)}.
\end{equation}
\end{proposition}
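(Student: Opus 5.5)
The plan is a direct computation from the definition of the mixture $p(x)=\frac{1}{M}\sum_{m=1}^M q(g_m^{-1}\acts x)$ in \eqref{eq:finite-mixture}. I would write $\grad\log p(x)=\grad p(x)/p(x)$ and compute $\grad p$ term by term. The $1/M$ prefactor cancels between numerator and denominator, so it can be ignored from the start.

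The key step is the gradient of a single component $x\mapsto q(g_m^{-1}\acts x)$. Since $\G$ acts linearly and orthogonally on $\R^d$, I identify $g_m$ with an orthogonal matrix $O_m$, so that $g_m^{-1}\acts x=O_m^{\top}x$. The chain rule then gives
\begin{equation*}
\grad_x\, q(O_m^{\top}x)=(O_m^{\top})^{\top}(\grad q)(O_m^{\top}x)=O_m\,(\grad q)(g_m^{-1}\acts x)=g_m\acts \grad q(g_m^{-1}\acts x).
\end{equation*}
Orthogonality is used exactly here: the Jacobian transpose of $g_m^{-1}$ equals $g_m$. Next, on the set where $q(g_m^{-1}\acts x)>0$, I rewrite $\grad q(y)=q(y)\grad\log q(y)$. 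Linearity of the action lets the scalar $q(g_m^{-1}\acts x)$ pass through $g_m\acts(\cdot)$, giving
\begin{equation*}
\grad p(x)=\frac{1}{M}\sum_{m=1}^M q(g_m^{-1}\acts x)\, g_m\acts\grad\log q(g_m^{-1}\acts x).
\end{equation*}

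Dividing by $p(x)=\frac{1}{M}\sum_j q(g_j^{-1}\acts x)$ turns the coefficients into the responsibilities $w_m(x)$, which yields the claimed formula. The weights are nonnegative and sum to one, so the score is a convex combination of rotated component scores.

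The only delicate point is the regularity caveat "positive where needed". The identity $\grad q=q\grad\log q$ requires $q(g_m^{-1}\acts x)>0$. I would handle this by noting that any term with $q(g_m^{-1}\acts x)=0$ has $w_m(x)=0$ and, being at a minimum of the nonnegative differentiable $q$, also has $\grad q=0$. Such terms therefore contribute nothing to either side, under the convention $0\cdot\grad\log q:=0$. Finally, $p(x)>0$ is required for $\log p$ to be defined, which holds as soon as at least one component is positive at $x$. For an infinite compact group, the same argument applies with the sum replaced by a Haar integral, justifying differentiation under the integral by dominated convergence.
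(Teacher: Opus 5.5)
Your proposal is correct and takes the same route as the paper: differentiate $\log p$, use the chain rule plus orthogonality to get $\grad_x q(g_m^{-1}\acts x)=g_m\acts\grad q(g_m^{-1}\acts x)$, then divide by $p$ to obtain the responsibilities $w_m(x)$. Your handling of zero-density components and the Haar-integral extension adds care beyond the paper's one-line argument, but neither is needed for the stated finite-group claim.
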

\begin{proof}
Differentiate $\log p(x)=\log\left(\frac{1}{M}\sum_m q(g_m^{-1}\acts x)\right)$ and apply the chain rule. Orthogonality implies $\grad_x q(g_m^{-1}\acts x)=g_m\acts \grad q(g_m^{-1}\acts x)$. \qedhere
\end{proof}

\subsubsection{Proof for \Cref{subsubsec:variance}}\label{subsubsec_appendix:proof_variance}
This part highlights our central result: canonical training reduces conditional variance in flow-matching. 

\paragraph{Irreducible error in flow matching.} First we prove a standard lemma, stating that the irreducible flow-matching error is the conditional variance.
\begin{lemma}[Irreducible flow-matching error = conditional variance]\label{lem:variance}
Let $\hat v$ be any measurable predictor of $v_t(\Z_t)$ from $(t,\Z_t)$. Then the minimum achievable MSE satisfies
\begin{equation}
\inf_{\hat v}\E\big[\norm{\hat v(t,\Z_t)-(\Z_1-\Z_0)}^2\big]
=\E\big[\mathrm{Var}(\Z_1-\Z_0\mid t,\Z_t)\big].
\end{equation}
\end{lemma}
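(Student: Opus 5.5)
This is the standard $L^2$-projection argument: the conditional expectation is the best measurable predictor under squared loss. Write $U:=\Z_1-\Z_0$ and assume $\E\norm{U}^2<\infty$, which holds whenever $p_0$ and the prior have finite second moments. Then $m(t,\Z_t):=\E[U\mid t,\Z_t]$ exists as a measurable function of $(t,\Z_t)$. By \eqref{eq:marginal-field}, it coincides with the optimal marginal field $v_t(\Z_t)$ under the linear path.

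For any measurable $\hat v$ with $\E\norm{\hat v(t,\Z_t)}^2<\infty$, expand
\begin{equation*}
\norm{\hat v-U}^2=\norm{\hat v-m}^2+2\langle \hat v-m,\,m-U\rangle+\norm{m-U}^2 .
\end{equation*}
Take expectations, conditioning the cross term on $(t,\Z_t)$ via the tower property. Since $\hat v-m$ is $(t,\Z_t)$-measurable, the cross term equals $\E\langle \hat v-m,\E[m-U\mid t,\Z_t]\rangle=0$. This gives the Pythagorean identity
\begin{equation*}
\E\norm{\hat v-U}^2=\E\norm{\hat v-m}^2+\E\big[\E[\norm{U-m}^2\mid t,\Z_t]\big].
\end{equation*}
The inner conditional expectation is exactly $\mathrm{Var}(U\mid t,\Z_t)$, read as the trace of the conditional covariance. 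So every predictor's risk is at least $\E[\mathrm{Var}(U\mid t,\Z_t)]$, with equality at $\hat v=m$. Hence the infimum is attained and equals the claimed quantity.

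The one point needing care is predictors with infinite second moment, for which the cross-term manipulation is not justified. Such predictors have infinite risk: if $\E\norm{\hat v}^2=\infty$ then $\E\norm{\hat v-U}^2=\infty$ because $U\in L^2$. So they do not affect the infimum. Existence and measurability of $m$ follow from the regular conditional distribution on the Polish space $[0,1]\times\M$, with $t\sim\mathrm{Unif}[0,1]$ treated as part of the conditioning variable.
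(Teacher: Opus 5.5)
Your proposal is correct and follows the same route as the paper, whose proof simply cites the standard $L^2$ regression identity: the conditional expectation minimizes the risk, and the minimum is the expected conditional variance by the law of total variance. Your Pythagorean expansion with the vanishing cross term writes that identity out, and you also make explicit details the paper leaves implicit: the assumption $\E\norm{\Z_1-\Z_0}^2<\infty$ and the fact that predictors with infinite second moment have infinite risk.
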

\begin{proof}
This is the standard $L^2$ regression identity: the conditional expectation $\E[\Z_1-\Z_0\mid t,\Z_t]$ is the unique minimizer, and the minimum risk equals the expected conditional variance (law of total variance). \qedhere
\end{proof}

\begin{corollary}[Deterministic couplings yield zero irreducible error]\label{cor:det}
If $\Z_1-\Z_0$ is (a.s.) a deterministic function of $(t,\Z_t)$, then the right-hand side is $0$, and in principle the flow can be learned with arbitrarily small error (up to approximation and optimization).
\end{corollary}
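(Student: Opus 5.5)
The plan is to reduce the corollary directly to Lemma~\ref{lem:variance}, which identifies the minimum achievable MSE with $\E[\mathrm{Var}(\Z_1-\Z_0\mid t,\Z_t)]$. Under the hypothesis, there is a measurable map $h$ such that $\Z_1-\Z_0 = h(t,\Z_t)$ almost surely. The argument then has two steps.

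\textbf{First step.} Show the conditional variance vanishes. Since $h(t,\Z_t)$ is $\sigma(t,\Z_t)$-measurable, and assuming square-integrability so the variance is defined, we have $\E[\Z_1-\Z_0\mid t,\Z_t]=h(t,\Z_t)$ a.s. Hence
\[
\mathrm{Var}(\Z_1-\Z_0\mid t,\Z_t)=\E\big[\norm{h(t,\Z_t)-h(t,\Z_t)}^2\mid t,\Z_t\big]=0
\]
almost surely. Taking expectations, the right-hand side of Lemma~\ref{lem:variance} is $0$. Equivalently, the marginal field $v_t$ of \eqref{eq:marginal-field} equals $h$, and the predictor $\hat v=h$ attains zero risk.

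\textbf{Second step.} Interpret ``learned with arbitrarily small error.'' Since the Bayes risk is zero, the excess risk of any predictor $\hat v$ equals $\E\norm{\hat v(t,\Z_t)-h(t,\Z_t)}^2$. Thus the total flow-matching loss reduces to pure approximation error. If the network class is dense in $L^2$ of the law of $(t,\Z_t)$ (e.g.\ by universality, or by Proposition~\ref{thm:universal_inv_equi} on the slice), this error can be made smaller than any $\varepsilon$. Optimization error is the only other contribution and is excluded by the statement's caveat.

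\textbf{Main subtlety.} The only real obstacle is measure-theoretic. The hypothesis must be read as ``a.s.\ equal to a measurable function of $(t,\Z_t)$,'' together with a finite second moment of $\Z_1-\Z_0$. This ensures the conditional expectation is well defined and coincides with $h$. I would state these two conditions explicitly and otherwise keep the proof to a few lines.
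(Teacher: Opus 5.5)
Your proposal is correct and follows the same route the paper intends: the paper gives no separate proof and treats the corollary as immediate from Lemma~\ref{lem:variance}, because a $\sigma(t,\Z_t)$-measurable target has zero conditional variance, exactly as in your first step. Your explicit square-integrability and $L^2$-density caveats are reasonable clarifications that the paper leaves implicit in its ``up to approximation and optimization'' phrasing.
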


Below we give a concrete example of this irreducible error, showing that the minimum achievable MSE loss is strictly positive at the presence of multi-group ambiguity.

\paragraph{Quantifying symmetry ambiguity through posterior collision bounds}\label{sec:quantify-ambiguity-and-misalignment}
We now strengthens the variance-decomposition viewpoint by (i) giving an exact identity for the
\emph{symmetry-ambiguity} term in \Cref{thm:canon-var-decomp},
(ii) deriving lower bounds purely in terms of the posterior $\varrho(g\mid Z_t)$ over the latent symmetry.

Under the group-aligned lifted coupling, write the posterior over the latent symmetry as $\varrho(g\mid z):=\mathbb{P}(G=g\mid Z_t=z)$ (finite group case;
for compact groups interpret $\varrho(\cdot\mid z)$ as a posterior density w.r.t.\ Haar).

We first show that ambiguity term can be written as a pairwise mean-separation identity.
The following identity is standard (\emph{variance via i.i.d.\ copies}), but its specialization here yields an explicit
geometric handle on the symmetry-ambiguity term.

\begin{lemma}[Symmetry ambiguity as pairwise conditional-mean separation]
\label{lem:ambiguity-pairwise}
Define the group-conditioned conditional mean drift
\begin{equation}
m_g(z) := \mathbb{E}[U\mid Z_t=z,\;G=g],
\qquad
m(z) := \mathbb{E}[U\mid Z_t=z].
\end{equation}
Then the ambiguity term in \Cref{thm:canon-var-decomp} satisfies, for each $z$,
\begin{align}
\mathrm{Var}\!\left(\mathbb{E}[U\mid Z_t,G]\mid Z_t=z\right)
&= \mathbb{E}_{g\sim \varrho(\cdot\mid z)}\big[\|m_g(z)-m(z)\|^2\big] \label{eq:ambiguity-mean-var}\\
&= \frac{1}{2}\;\mathbb{E}_{g,g'\,\overset{\text{i.i.d.}}{\sim}\, \varrho(\cdot\mid z)}\big[\|m_g(z)-m_{g'}(z)\|^2\big]. \label{eq:ambiguity-pairwise}
\end{align}
\end{lemma}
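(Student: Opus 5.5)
The plan is a direct computation with conditional expectations, conditioning on $Z_t=z$ throughout. The only randomness left inside the outer variance is then the latent symmetry $G$, distributed according to the posterior $\varrho(\cdot\mid z)$. By definition, $\mathbb{E}[U\mid Z_t,G]$ evaluated at $Z_t=z$ is the random vector $m_G(z)$ with $G\sim\varrho(\cdot\mid z)$. By the tower property, its mean is
\[
\mathbb{E}_{g\sim\varrho(\cdot\mid z)}[m_g(z)]=\mathbb{E}\big[\mathbb{E}[U\mid Z_t,G]\mid Z_t=z\big]=m(z).
\]
The vector-valued conditional variance here is read, as elsewhere in \Cref{thm:canon-var-decomp}, as the trace, i.e.\ the expected squared norm of the deviation from the conditional mean. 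With that convention, \eqref{eq:ambiguity-mean-var} is simply the definition of variance applied to $m_G(z)$ around its mean $m(z)$.

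For \eqref{eq:ambiguity-pairwise} I use the i.i.d.\ copies identity. Take $g,g'$ independent with law $\varrho(\cdot\mid z)$, and expand
\[
\|m_g-m_{g'}\|^2=\|(m_g-m)-(m_{g'}-m)\|^2 .
\]
This gives two squared terms, each with expectation equal to the right-hand side of \eqref{eq:ambiguity-mean-var}, plus the cross term
\[
-2\,\mathbb{E}\big\langle m_g-m,\;m_{g'}-m\big\rangle .
\]
By independence and the centering $\mathbb{E}_g[m_g-m]=0$, the cross term factors as $-2\langle \mathbb{E}[m_g-m],\mathbb{E}[m_{g'}-m]\rangle=0$. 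Hence the pairwise expectation equals twice the variance, which gives the factor $\tfrac12$.

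The only delicate step is the measure-theoretic one in the compact-group case. There $\varrho(\cdot\mid z)$ is a posterior density with respect to Haar, and $g\mapsto m_g(z)$ must be a jointly measurable version of the regular conditional expectation. To handle this, I would invoke the existence of regular conditional distributions of $(U,G)$ given $Z_t$, which holds because everything lives in Polish spaces. Under that version, the computations above hold for $Z_t$-almost every $z$. Integrability only requires $\mathbb{E}\|U\|^2<\infty$, which follows from the finite second moments of $\widetilde Z_0,\widetilde Z_1$ together with orthogonality of the action, since $\|U\|=\|\Delta\|$. In the finite-group case all of this is a sum and no such care is needed.
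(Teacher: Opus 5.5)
Your proposal is correct and follows the paper's proof. The first equality is the definition of the conditional variance of $m_G(z)$ under $\varrho(\cdot\mid z)$, with mean $m(z)$ by the tower property. The second is the i.i.d.-copies identity $\mathrm{Var}(X)=\frac12\mathbb{E}\|X-X'\|^2$, which you verify explicitly by expanding the square and killing the cross term. Your remarks on the trace convention, regular conditional distributions in the compact-group case, and the second-moment condition add rigor the paper omits without changing the argument.
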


\begin{proof}
Eq.~\eqref{eq:ambiguity-mean-var} is the definition of conditional variance of a discrete random variable taking values
$m_G(z)$. Eq.~\eqref{eq:ambiguity-pairwise} follows from the standard identity
$\mathrm{Var}(X)=\frac{1}{2}\mathbb{E}\|X-X'\|^2$ for i.i.d.\ copies $(X,X')$ (applied conditionally on $Z_t=z$).
\end{proof}

We hereby provide the lower bounds from posterior uncertainty (collision probability).
Lemma~\ref{lem:ambiguity-pairwise} implies that ambiguity is large whenever the posterior $\varrho(g\mid Z_t)$ spreads mass
over group elements with meaningfully different conditional mean drifts.

\begin{proposition}[Posterior-collision lower bound]
\label{prop:collision-lb}
Fix $z$ and let $\mathcal{G}_\Delta(z)\subseteq\mathcal{G}$ be any subset such that for all $g\in\mathcal{G}_\Delta(z)$,
\begin{equation}
\|m_g(z)-m(z)\|\ge \Delta(z).
\end{equation}
Then
\begin{equation}
\mathrm{Var}\!\left(\mathbb{E}[U\mid Z_t,G]\mid Z_t=z\right)
\;\ge\;
\Delta(z)^2\;\mathbb{P}\big(G\in \mathcal{G}_\Delta(z)\mid Z_t=z\big).
\end{equation}
In particular, if there exist two disjoint subsets $\mathcal{A}(z),\mathcal{B}(z)$ such that
$\|m_g(z)-m_{g'}(z)\|\ge \Delta(z)$ for all $g\in\mathcal{A}(z)$ and $g'\in\mathcal{B}(z)$, then
\begin{equation}
\mathrm{Var}\!\left(\mathbb{E}[U\mid Z_t,G]\mid Z_t=z\right)
\;\ge\;
\frac{\Delta(z)^2}{2}\;
\Big(1-\sum_{g} \varrho(g\mid z)^2\Big),
\end{equation}
where $1-\sum_g \varrho(g\mid z)^2$ is the complement of the posterior collision probability.
\end{proposition}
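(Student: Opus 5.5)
The plan is to derive both bounds directly from Lemma~\ref{lem:ambiguity-pairwise}, working conditionally on $Z_t=z$ throughout. Because the finite-group posterior $\varrho(\cdot\mid z)$ is a probability vector, every step reduces to bounding a nonnegative sum by one of its sub-sums.

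\textbf{First bound.} I will start from the mean-variance form \eqref{eq:ambiguity-mean-var}:
\[
\mathrm{Var}\!\left(\mathbb{E}[U\mid Z_t,G]\mid Z_t=z\right)=\sum_{g}\varrho(g\mid z)\,\|m_g(z)-m(z)\|^2 .
\]
All summands are nonnegative, so I drop those with $g\notin\mathcal{G}_\Delta(z)$. On the remaining terms the hypothesis gives $\|m_g(z)-m(z)\|^2\ge\Delta(z)^2$. This leaves $\Delta(z)^2\sum_{g\in\mathcal{G}_\Delta(z)}\varrho(g\mid z)=\Delta(z)^2\,\mathbb{P}(G\in\mathcal{G}_\Delta(z)\mid Z_t=z)$. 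For compact groups the same argument works with the sum replaced by an integral against the posterior density with respect to $\Haar$.

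\textbf{Second bound.} Here I will use the pairwise form \eqref{eq:ambiguity-pairwise} with $g,g'$ drawn i.i.d.\ from $\varrho(\cdot\mid z)$. I restrict the expectation to the event $E=\{g\in\mathcal{A},g'\in\mathcal{B}\}\cup\{g\in\mathcal{B},g'\in\mathcal{A}\}$, where the integrand is at least $\Delta^2$. This gives
\[
\mathrm{Var}\ge \tfrac12\Delta^2\,\mathbb{P}(E)=\Delta^2\,\varrho(\mathcal{A}\mid z)\,\varrho(\mathcal{B}\mid z).
\]
The remaining step is to compare $\mathbb{P}(E)$ with $\mathbb{P}(g\neq g')=1-\sum_g\varrho(g\mid z)^2$.

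\textbf{Main obstacle.} That comparison is where I expect the difficulty. For a single fixed bipartition, the event $\{g\ne g'\}$ also contains distinct pairs lying inside $\mathcal{A}$ or inside $\mathcal{B}$, so $\mathbb{P}(E)\le\mathbb{P}(g\neq g')$ and the inequality runs the wrong way. I will therefore read the hypothesis as holding for every choice of disjoint $\mathcal{A},\mathcal{B}$ in the posterior support, in particular for singletons $\mathcal{A}=\{g\}$, $\mathcal{B}=\{g'\}$. Under that reading the hypothesis says exactly that distinct posterior-supported elements have $\Delta(z)$-separated drifts, i.e.\ $\|m_g-m_{g'}\|\ge\Delta(z)\,\mathbf{1}\{g\neq g'\}$ on the support. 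Plugging this into \eqref{eq:ambiguity-pairwise} gives $\tfrac12\Delta(z)^2\,\mathbb{P}(g\neq g')$, which is the stated collision bound.

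I would state this interpretation explicitly in the proof. I would also record that under a single fixed bipartition the correct conclusion is the weaker product bound $\Delta^2\varrho(\mathcal{A}\mid z)\varrho(\mathcal{B}\mid z)$. This product bound coincides with the collision form when the posterior is supported on two elements, one in $\mathcal{A}$ and one in $\mathcal{B}$.
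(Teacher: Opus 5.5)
Your first bound is exactly the paper's argument: discard the nonnegative terms with $g\notin\mathcal{G}_\Delta(z)$ in \eqref{eq:ambiguity-mean-var} and use $\|m_g(z)-m(z)\|^2\ge\Delta(z)^2$ on the remaining terms.

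For the second bound your diagnosis is right. The defect is in the statement and in the paper's own proof, not in your proposal. The paper does what you set out to do: it restricts \eqref{eq:ambiguity-pairwise} to $E=\mathcal{A}\times\mathcal{B}\cup\mathcal{B}\times\mathcal{A}$. It then cites $\mathbb{P}(g\neq g'\mid z)=1-\sum_g\varrho(g\mid z)^2$ and declares the bound. That step silently assumes $\mathbb{P}(E)\ge\mathbb{P}(g\neq g')$, and since $E\subseteq\{g\neq g'\}$ the inequality goes the other way.

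The literal claim is in fact false, so a repair is forced, not optional. With $\mathcal{B}=\emptyset$ the hypothesis holds vacuously for every $\Delta(z)$. Even with both sets nonempty it fails: take a posterior uniform on $\{g_1,g_2,g_3\}$ with $m_{g_1}(z)=m_{g_2}(z)=0$, $\|m_{g_3}(z)\|=\Delta$, $\mathcal{A}=\{g_1,g_2\}$ and $\mathcal{B}=\{g_3\}$. The ambiguity term is then $\tfrac29\Delta^2$, below the claimed $\tfrac13\Delta^2$. Your product bound $\Delta^2\cdot\tfrac23\cdot\tfrac13$ is tight here.

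Both of your repairs are valid. The product bound is the more useful one. When $\varrho(\mathcal{A}\mid z)+\varrho(\mathcal{B}\mid z)=1$ it equals $\tfrac{\Delta^2}{2}\bigl(1-\varrho(\mathcal{A}\mid z)^2-\varrho(\mathcal{B}\mid z)^2\bigr)$. That is the collision bound for the posterior coarsened to the two blocks, so it keeps the paper's interpretation under the paper's own hypothesis. The inequality $\sum_g\varrho(g\mid z)^2\le\varrho(\mathcal{A}\mid z)^2+\varrho(\mathcal{B}\mid z)^2$ then shows exactly how far the stated version overshoots.

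Your strengthened reading makes $\mathcal{A},\mathcal{B}$ superfluous. It amounts to $\Delta(z)$-separation of the drifts of all distinct elements in the posterior support, so if you keep it, state that hypothesis directly. Note that it is much stronger, and it is only meaningful for finite groups, where a collision probability makes sense.
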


\begin{proof}
The first bound is immediate from \eqref{eq:ambiguity-mean-var} by keeping only $g\in\mathcal{G}_\Delta(z)$ and using
$\|m_g(z)-m(z)\|^2\ge \Delta(z)^2$ there.
For the second bound, use \eqref{eq:ambiguity-pairwise} and lower bound the pairwise distance by $\Delta(z)$ whenever
$(g,g')\in\mathcal{A}(z)\times\mathcal{B}(z)\cup \mathcal{B}(z)\times\mathcal{A}(z)$. Then
\begin{equation}
\mathbb{P}(g\neq g'\mid z)=1-\sum_g \varrho(g\mid z)^2,
\end{equation}
and the stated inequality follows (up to the factor $1/2$ from \eqref{eq:ambiguity-pairwise}).
\end{proof}

The collision term $1-\sum_g \varrho(g\mid Z_t)^2$ is $0$ iff $G$ is a.s.\ determined by $Z_t$.
Thus, unless the conditional mean drifts $m_g(Z_t)$ coincide across $g$, symmetry ambiguity creates a strictly positive
variance component whenever $G$ remains uncertain.

\paragraph{Discussions on lifted coupling.}
A subtle but important point is that the group-aligned lift in~\eqref{eq:group-aligned-lift} shares the same
latent $G$ across $(Z_0,Z_1)$, which \emph{a priori} could introduce dependence.
However, for \emph{standard isotropic Gaussian} priors and orthogonal group actions (including
$S_N$ permutations and $SO(3)$ rotations), this dependence disappears.

\begin{proposition}[Lift of a product slice coupling equals the ambient product coupling]
\label{prop:lift-product}
Assume $\mathcal G$ acts orthogonally on $\mathbb R^d$.
Let $G\sim \lambda$ and $(\widetilde Z_0,\widetilde Z_1)\sim q_0\otimes \mathcal N(0,I)$ be independent, and define
$(Z_0,Z_1)=(G\cdot \widetilde Z_0,\;G\cdot \widetilde Z_1)$.
Then (i) $Z_1\sim \mathcal N(0,I)$ and $Z_1\perp G$, and (ii) $Z_1\perp Z_0$.
Moreover, if the invariant data distribution satisfies the disintegration
$p_0 = \int (g\cdot)_{\#} q_0\, d\lambda(g)$, then the lifted ambient coupling satisfies
\begin{equation}
(Z_0,Z_1)\sim p_0\otimes \mathcal N(0,I).
\end{equation}
\end{proposition}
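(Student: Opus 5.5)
The argument conditions on $G$ and uses the rotation invariance of the standard Gaussian; it then checks a joint factorization of $(G,Z_0,Z_1)$.

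\emph{Part (i).} Fix $g\in\mathcal G$. Since the action is orthogonal, $g$ acts as an orthogonal matrix $O_g$. The standard Gaussian is invariant under orthogonal maps, so $(g\cdot)_{\#}\mathcal N(0,I)=\mathcal N(0,I)$. Because $\widetilde Z_1$ is independent of $G$, the conditional law of $Z_1=G\cdot\widetilde Z_1$ given $G=g$ is $(g\cdot)_{\#}\mathcal N(0,I)=\mathcal N(0,I)$. This conditional law does not depend on $g$. Hence $Z_1\sim\mathcal N(0,I)$ and $Z_1\perp G$.

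\emph{Part (ii).} I would strengthen (i) to: $Z_1$ is independent of the pair $(G,\widetilde Z_0)$. Condition on $(G,\widetilde Z_0)=(g,\tilde z_0)$. Since $\widetilde Z_1$ is independent of $(G,\widetilde Z_0)$, we get $Z_1\mid(g,\tilde z_0)\sim(g\cdot)_{\#}\mathcal N(0,I)=\mathcal N(0,I)$, which is constant in $(g,\tilde z_0)$. So $Z_1\perp(G,\widetilde Z_0)$. Now $Z_0=G\cdot\widetilde Z_0$ is a measurable function of $(G,\widetilde Z_0)$, so $Z_1\perp Z_0$. Concretely, for bounded measurable $\varphi,\psi$, I would condition on $(G,\widetilde Z_0)$ and compute
\[
\mathbb E[\varphi(Z_0)\psi(Z_1)]=\mathbb E\big[\varphi(G\cdot\widetilde Z_0)\,\mathbb E[\psi(G\cdot\widetilde Z_1)\mid G,\widetilde Z_0]\big]=\mathbb E[\varphi(Z_0)]\,\mathbb E_{\mathcal N(0,I)}[\psi].
\]

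\emph{Product coupling.} The marginal of $Z_0$ is $\int (g\cdot)_{\#}q_0\,d\lambda(g)$, because $G\sim\lambda$ is independent of $\widetilde Z_0\sim q_0$. By the assumed disintegration this equals $p_0$. Combining this with (i) and (ii) gives $(Z_0,Z_1)\sim p_0\otimes\mathcal N(0,I)$.

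The only real subtlety is the interchange of conditioning. For general compact $\mathcal G$ this requires measurability of the action map $(g,z)\mapsto g\cdot z$, so that conditional laws are regular and Fubini applies; under the continuous-action standing assumptions this is routine. I would use Fubini on the product measure $\lambda\otimes q_0\otimes\mathcal N(0,I)$ directly rather than abstract conditional expectations, which avoids the issue entirely.
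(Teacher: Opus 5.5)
Your proof is correct and follows essentially the same route as the paper: condition on $G$ for (i) and on $(G,\widetilde Z_0)$ for (ii), use orthogonal invariance of $\mathcal N(0,I)$ to see that the conditional law of $Z_1$ does not depend on what you conditioned on, and read off the product law from the disintegration of $p_0$. Your explicit observation that $\widetilde Z_1$ is independent of $(G,\widetilde Z_0)$, and your plan to apply Fubini on $\lambda\otimes q_0\otimes\mathcal N(0,I)$, only spell out steps the paper leaves implicit.
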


\begin{proof}
For any measurable $B$ and any $g$, orthogonal invariance of $\mathcal N(0,I)$ gives
$\mathbb P(Z_1\in B\mid G=g)=\mathbb P(g\cdot \widetilde Z_1\in B)=\mathbb P(\widetilde Z_1\in B)$,
which does not depend on $g$, hence $Z_1\perp G$ and $Z_1\sim \mathcal N(0,I)$.
For any measurable $A,B$,
\begin{equation}
\mathbb P(Z_0\in A, Z_1\in B)
=\mathbb E\!\left[\mathbf 1_{\{Z_0\in A\}}\mathbb P(Z_1\in B\mid G,\widetilde Z_0)\right]
=\mathbb P(\widetilde Z_1\in B)\,\mathbb P(Z_0\in A),
\end{equation}
since $\widetilde Z_1$ is independent of $(G,\widetilde Z_0)$ and $\mathbb P(g\cdot \widetilde Z_1\in B)$ is constant in $g$.
Thus $Z_1\perp Z_0$. If additionally $Z_0\sim p_0$ via the stated disintegration, the joint law is
$p_0\otimes\mathcal N(0,I)$.
\end{proof}
Therefore, the commonly used ambient product coupling $\gamma_{\mathrm{mix}}=p_0\otimes\mathcal N(0,I)$ can be viewed as a special case of slice training with product coupling and isotropic Gaussian slice prior, thus \emph{conditional variances of two paradigms become directly comparable}.
Canonicalization does \emph{not} change the coupling in this case; it changes the \emph{representation} so that the group ambiguity term in \Cref{thm:canon-var-decomp} becomes explicit and avoidable on the slice. Hence our theory covers most practical training regimes without optimal transport.

However, this generally does not hold if $\tilde Z_1$ is group-aware (e.g., learned or group-statistics related prior in our implementation), or $\tilde Z_1$ and $\tilde Z_0$ are not independent (e.g., optimal transport on the canonicalized slice). In particular, if the slice prior is \emph{not} isotropic (e.g., an aligned Gaussian $\mathcal N(\mu,\Sigma)$ with $g\Sigma g^\top\neq \Sigma$ for some $g$), then $g\cdot \widetilde Z_1$ depends on $g$ in distribution, so $Z_1$ is no longer independent of $G$, and the lifted coupling generally is \emph{not} a product.
Likewise, if the slice coupling $\widetilde\gamma$ is non-product (e.g., OT/Monge), the lifted coupling remains non-product.

\paragraph{Canonicalization reduces conditional variance via symmetry ambiguity elimination.} We now prove our central result on the advantages of canonicalized flow matching.
\begin{theorem}[Variance decomposition under group-aligned lift; \Cref{thm:canon-var-decomp} in main text]
Assume $\mathcal{G}$ acts orthogonally. Under the group-aligned lifted coupling
$G\sim\lambda$ independent of $(\widetilde Z_0,\widetilde Z_1)$ and $(Z_0,Z_1)=(G\cdot\widetilde Z_0,G\cdot\widetilde Z_1)$,
let $Z_t=(1-t)Z_0+tZ_1$ and $\widetilde Z_t=(1-t)\widetilde Z_0+t\widetilde Z_1$.
With $U:=Z_1-Z_0=G\cdot\Delta$ and $\Delta:=\widetilde Z_1-\widetilde Z_0$, we have
\begin{equation}
\mathrm{Var}(U\mid Z_t)
=
\underbrace{\mathbb{E}\!\left[\mathrm{Var}(\Delta\mid \widetilde Z_t)\mid Z_t\right]}_{\text{within-slice difficulty}}
+
\underbrace{\mathrm{Var}\!\left(\mathbb{E}[U\mid Z_t,G]\mid Z_t\right)}_{\text{symmetry ambiguity}\ge 0}.
\end{equation}
Consequently,
\begin{equation}
\mathbb{E}\big[\mathrm{Var}(U\mid Z_t)\big]\;\ge\;\mathbb{E}\big[\mathrm{Var}(\Delta\mid \widetilde Z_t)\big].
\end{equation}
\end{theorem}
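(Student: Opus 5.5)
The plan is to apply the law of total variance with respect to the latent group element $G$, and then to identify the inner conditional variance with the within-slice conditional variance, using orthogonality of the action and independence of $G$ from the slice coupling. Throughout, I read $\mathrm{Var}(\cdot\mid\cdot)$ as the scalar (trace) conditional variance $\mathbb{E}[\norm{X-\mathbb{E}[X\mid\cdot]}^2\mid\cdot]$, which is what enters Lemma~\ref{lem:variance}. This reading matters: with the full covariance matrix, the inner term would be conjugated by $G$ and would not literally equal $\mathrm{Var}(\Delta\mid\widetilde Z_t)$.

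\textbf{Step 1 (total variance).} Since $\sigma(Z_t)\subseteq\sigma(Z_t,G)$ and $U$ is square-integrable, the conditional law of total variance gives
\begin{equation*}
\mathrm{Var}(U\mid Z_t)=\mathbb{E}\!\left[\mathrm{Var}(U\mid Z_t,G)\mid Z_t\right]+\mathrm{Var}\!\left(\mathbb{E}[U\mid Z_t,G]\mid Z_t\right).
\end{equation*}
The second term is the symmetry-ambiguity term, and it is nonnegative because it is a variance.

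\textbf{Step 2 (identify the inner term).} By linearity of the action, $Z_t=G\cdot\widetilde Z_t$, so $\widetilde Z_t=G^{-1}\cdot Z_t$. Hence $\sigma(Z_t,G)=\sigma(\widetilde Z_t,G)$, and
\begin{equation*}
\mathrm{Var}(U\mid Z_t,G)=\mathrm{Var}(G\cdot\Delta\mid \widetilde Z_t,G).
\end{equation*}
Conditioning on $G=g$, orthogonality gives $\norm{g\cdot(\Delta-\mathbb{E}[\Delta\mid\cdot])}=\norm{\Delta-\mathbb{E}[\Delta\mid\cdot]}$, so the expression reduces to $\mathrm{Var}(\Delta\mid\widetilde Z_t,G=g)$. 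Because $G$ is independent of $(\widetilde Z_0,\widetilde Z_1)$, and therefore of the pair $(\Delta,\widetilde Z_t)$, this equals $\mathrm{Var}(\Delta\mid\widetilde Z_t)$. Substituting into Step 1 yields the decomposition~\eqref{eq:canon-var-decomp}.

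\textbf{Step 3 (the inequality).} Take expectations of~\eqref{eq:canon-var-decomp}, drop the nonnegative ambiguity term, and apply the tower property:
\begin{equation*}
\mathbb{E}\big[\mathbb{E}[\mathrm{Var}(\Delta\mid\widetilde Z_t)\mid Z_t]\big]=\mathbb{E}[\mathrm{Var}(\Delta\mid\widetilde Z_t)].
\end{equation*}
This gives~\eqref{eq:canon-var-ineq}.

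\textbf{Main obstacle.} I expect the hardest part to be making Step 2 rigorous for a compact, non-finite group such as $SO(3)\times S_N$. Finite-group discrete conditioning no longer applies, so I would work with regular conditional distributions on the Polish space $\R^d\times\G$.

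The first thing to check is measurability. The map $(z,g)\mapsto g^{-1}\cdot z$ must be jointly measurable, which follows from continuity of the action.

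The second thing is the independence step. If $h$ is a version of $\mathrm{Var}(\Delta\mid\widetilde Z_t=\cdot)$, I need $h(\widetilde Z_t)$ to also be a version of $\mathrm{Var}(\Delta\mid\widetilde Z_t,G)$. I would verify this by testing against indicators $\1\{\widetilde Z_t\in A,\,G\in B\}$ and factoring the expectation using $G\perp(\Delta,\widetilde Z_t)$.

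Once these two points are settled, the rest is routine $L^2$ bookkeeping.
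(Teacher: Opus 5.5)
Your proposal is correct and follows the paper's proof essentially step for step: the law of total variance with respect to $G$, then identifying $\mathrm{Var}(U\mid Z_t,G)$ with $\mathrm{Var}(\Delta\mid\widetilde Z_t)$ via $\widetilde Z_t=G^{-1}\cdot Z_t$, orthogonality, and $G\perp(\Delta,\widetilde Z_t)$, and finally the tower property for the inequality. Your explicit reading of $\mathrm{Var}$ as the scalar (trace) conditional variance is a useful clarification that the paper leaves implicit, since its step $\mathrm{Var}(G\cdot X\mid\cdot)=\mathrm{Var}(X\mid\cdot)$ holds for the trace but not for the full covariance matrix.
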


\begin{proof}
Apply the standard law of total variance with respect to the latent symmetry variable $G$:
\begin{equation}
\label{eq:ltv}
\begin{aligned}
\mathrm{Var}(U\mid Z_t)=
\mathbb{E}\!\left[\mathrm{Var}(U\mid Z_t,G)\mid Z_t\right]
+\mathrm{Var}\!\left(\mathbb{E}[U\mid Z_t,G]\mid Z_t\right).
\end{aligned}
\end{equation}
The second term is nonnegative. For the first term, conditioning on $(Z_t,G)$ determines $\widetilde Z_t = G^{-1}\cdot Z_t$. Under the lifted coupling, $(\Delta,\widetilde Z_t)$ is independent of $G$, and orthogonality implies $\mathrm{Var}(G\cdot X\mid \cdot)=\mathrm{Var}(X\mid \cdot)$. Therefore,
\begin{equation}
\mathrm{Var}(U\mid Z_t,G)=\mathrm{Var}(G\cdot\Delta\mid \widetilde Z_t,G)
=\mathrm{Var}(\Delta\mid \widetilde Z_t).
\end{equation}
Substituting into~\eqref{eq:ltv} yields~\eqref{eq:canon-var-decomp}. Taking expectations over $Z_t$ and using the tower property gives~\eqref{eq:canon-var-ineq}.
\end{proof}

The lower bound in Theorem~\ref{thm:canon-var-decomp} is the ``within-slice'' conditional variance; the gap is exactly the additional uncertainty induced by not knowing which symmetry element generated the observation. This formalizes the intuition that symmetry induces posterior multi-modality over group elements, which inflates the conditional variance and creates an irreducible error floor for coarse solvers.

\begin{remark}[What this does \emph{and does not} compare]
The lemma above compares ``with vs.\ without marginalizing $G$'' \emph{under the same lifted coupling}.
It does \emph{not} imply that a canonicalized training paradigm universally dominates any other paradigm under arbitrary choices of couplings. Its value is to isolate a nonnegative variance component that arises purely from symmetry ambiguity and is absent on the slice. However, as discussed in Proposition~\ref{prop:lift-product}, the lifted coupling can easily recover the standard product with Gaussian noises used in most practical flow matching models (w/o OT).
\end{remark}

In the next part we will further explain Remark~\ref{remark:non-equi-benefit}.
Fix a coupling $\gamma$ on $(Z_0,Z_1)$ and define the linear interpolation
\begin{equation}
Z_t=(1-t)Z_0+tZ_1,\qquad U:=Z_1-Z_0.
\end{equation}
For any predictor $v:\mathbb R^d\to\mathbb R^d$, the population squared loss at time $t$ is
\begin{equation}
\mathcal L_t(v):=\mathbb E\big[\|v(Z_t)-U\|^2\big].
\end{equation}
It is standard that the Bayes regressor is $v^\star(z)=\mathbb E[U\mid Z_t=z]$ and the Bayes risk equals the
conditional variance:
\begin{equation}
\inf_v \mathcal L_t(v)=\mathbb E\big[\mathrm{Var}(U\mid Z_t)\big].
\label{eq:bayes-risk-ambient}
\end{equation}

\paragraph{Canonicalization does not improve the ambient Bayes risk of equivariant models.}
\label{para:eq-no-gain-ambient}

Let a compact group $\mathcal G$ act orthogonally on $\mathbb R^d$ (e.g.\ permutations and rotations).
Let $\gamma$ be \emph{diagonal-invariant}:
\begin{equation}
(g,g)_{\#}\gamma=\gamma\quad \forall g\in\mathcal G,
\end{equation}
which holds in particular for the ambient product coupling $p_0\otimes\mathcal N(0,I)$ when $p_0$ is $\mathcal G$-invariant,
and also for symmetrized OT couplings.

\begin{proposition}[Equivariance of the Bayes regressor under diagonal invariance]
\label{prop:bayes-equivariant}
Assume $\mathcal G$ acts orthogonally and $\gamma$ is diagonal-invariant.
Then the Bayes regressor $v^\star(z)=\mathbb E[U\mid Z_t=z]$ is $\mathcal G$-equivariant:
\begin{equation}
v^\star(g\cdot z)=g\cdot v^\star(z)\qquad \forall g\in\mathcal G.
\end{equation}
Consequently, restricting to the equivariant function class
$\mathcal H_{\mathrm{eq}}:=\{v:\ v(g\cdot z)=g\cdot v(z)\}$ does \emph{not} increase the minimal population loss:
\begin{equation}
\inf_{v\in\mathcal H_{\mathrm{eq}}}\mathcal L_t(v)=\inf_v \mathcal L_t(v)=\mathbb E[\mathrm{Var}(U\mid Z_t)].
\end{equation}
\end{proposition}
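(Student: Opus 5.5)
We first show that the joint law of $(Z_t,U)$ is invariant under the diagonal action, and then transport conditional expectations through the group action. Fix $g\in\G$. Diagonal invariance gives $(gZ_0,gZ_1)\overset{d}{=}(Z_0,Z_1)$. The action is linear (orthogonal), so $gZ_t=(1-t)gZ_0+t\,gZ_1$ and $gU=gZ_1-gZ_0$. Hence
\begin{equation*}
(gZ_t,\,gU)\overset{d}{=}(Z_t,\,U).
\end{equation*}

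The next step identifies the Bayes regressor under this symmetry. Let $h:\R^d\to\R$ be bounded measurable. Using the invariance above and then the tower property,
\begin{equation*}
\E\big[h(Z_t)\,g^{-1}U\big]=\E\big[h(g^{-1}Z_t)\,g^{-1}U\big]\circ\text{(shift)}\;=\;\E\big[h(Z_t)\,g^{-1}v^\star(gZ_t)\big].
\end{equation*}
Concretely, apply the invariance with $g$ to the functional $(z,u)\mapsto h(z)u$ evaluated at $(gZ_t,gU)$. Then use the tower property with respect to $gZ_t$, writing $\E[gU\mid gZ_t]=v^\star(gZ_t)$. On the other hand, linearity of the action gives $\E[h(Z_t)\,g^{-1}U]=\E[h(Z_t)\,g^{-1}v^\star(Z_t)]$.

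This calculation shows that $z\mapsto g^{-1}v^\star(gz)$ is a version of $\E[U\mid Z_t=z]$. Alternatively, one can phrase it as saying that $\tilde v(z):=g^{-1}v^\star(gz)$ attains the same $L^2$ risk as $v^\star$, because
\begin{equation*}
\mathcal L_t(\tilde v)=\E\|g^{-1}v^\star(gZ_t)-U\|^2=\E\|v^\star(gZ_t)-gU\|^2=\mathcal L_t(v^\star),
\end{equation*}
using orthogonality and then the invariance. By uniqueness of the $L^2$ minimizer (Lemma~\ref{lem:variance}), $\tilde v=v^\star$ holds $p_t$-a.e. This is exactly $v^\star(gz)=g\,v^\star(z)$ a.e. I prefer this risk-based argument since it avoids manipulating conditional expectations directly.

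The main obstacle is that the equality holds only almost everywhere, separately for each $g$, while equivariance is required for all $g$ simultaneously. For finite groups the union of the null sets is still null, and we can redefine $v^\star$ on that null set. For compact groups I would instead replace $v^\star$ by its symmetrization
\begin{equation*}
\bar v(z):=\int_\G h^{-1}v^\star(hz)\,\mathrm d\Haar(h).
\end{equation*}
This $\bar v$ is exactly equivariant by left-invariance of Haar measure. By Fubini and the a.e. identity above, it equals $v^\star$ $p_t$-a.e., since the marginal $p_t$ is $\G$-invariant. So $\bar v$ is itself a version of the Bayes regressor.

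Finally, since $\bar v\in\mathcal H_{\mathrm{eq}}$, we have $\inf_{\mathcal H_{\mathrm{eq}}}\mathcal L_t\le\mathcal L_t(\bar v)=\inf_v\mathcal L_t(v)$. The reverse inequality is trivial because $\mathcal H_{\mathrm{eq}}$ is a subclass of all measurable predictors. Combining with \eqref{eq:bayes-risk-ambient} gives the stated chain of equalities.
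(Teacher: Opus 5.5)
Your proof is correct and rests on the same key fact as the paper's: diagonal invariance of $\gamma$ and linearity of the action give $(g\cdot Z_t,g\cdot U)\overset{d}{=}(Z_t,U)$, and the conditional mean inherits the symmetry.

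\textbf{Comparison with the paper.} The paper uses the defining identity $\mathbb{E}[U\mathbf{1}_{\{Z_t\in A\}}]=g\cdot\mathbb{E}[U\mathbf{1}_{\{Z_t\in g^{-1}A\}}]$ together with a Radon--Nikodym argument. You instead use the variational characterization: the candidate $z\mapsto g^{-1}\cdot v^\star(g\cdot z)$ has the same risk by orthogonality and invariance, so it equals $v^\star$ $p_t$-a.e.\ by uniqueness of the $L^2$ minimizer. These two routes are interchangeable.

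The genuine addition is your Haar-averaging step. Either argument gives equivariance only $p_t$-a.e.\ for each fixed $g$, with a null set that depends on $g$. For an uncountable group such as $SO(3)$, this does not by itself produce a single function equivariant for all $g$, which is what membership in $\mathcal H_{\mathrm{eq}}$ requires. The paper simply asserts $v^\star\in\mathcal H_{\mathrm{eq}}$. Your $\bar v$ is an exactly equivariant version with the same risk, which is precisely what the equality of infima needs.

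\textbf{Repairs.}

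\emph{The test-function chain in your second paragraph is wrong as written.} With $g^{-1}U$ on the left, combined with your ``on the other hand'' identity, it would force $v^\star(g\cdot Z_t)=v^\star(Z_t)$, i.e.\ invariance rather than equivariance. The correct computation, for bounded $\varphi$, is $\mathbb{E}[\varphi(Z_t)U]=g^{-1}\cdot\mathbb{E}[\varphi(Z_t)\,g\cdot U]=\mathbb{E}[\varphi(Z_t)\,g^{-1}\cdot v^\star(g\cdot Z_t)]$. This uses $\sigma(g\cdot Z_t)=\sigma(Z_t)$ and $\mathbb{E}[g\cdot U\mid g\cdot Z_t]=v^\star(g\cdot Z_t)$. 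Since your risk argument stands on its own, you can simply delete that chain.

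\emph{Invariance of Haar measure.} The identity $\bar v(g\cdot z)=g\cdot\bar v(z)$ uses the substitution $h\mapsto hg$, i.e.\ right-invariance of Haar measure, not left-invariance. This holds because compact groups are unimodular. Alternatively, define $\bar v(z)=\int_{\mathcal G}h\cdot v^\star(h^{-1}\cdot z)\,\mathrm{d}\lambda(h)$, which uses left-invariance directly.

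\emph{Well-definedness of $\bar v$.} $\bar v(z)$ is defined only where $\int_{\mathcal G}\|v^\star(h\cdot z)\|\,\mathrm{d}\lambda(h)<\infty$. By Tonelli and invariance of $p_t$, this holds $p_t$-a.e. The exceptional set is $\mathcal G$-invariant, so setting $\bar v=0$ there preserves exact equivariance.

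\emph{Integrability.} The uniqueness step tacitly needs $\mathbb{E}\|U\|^2<\infty$.
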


\begin{proof}
Diagonal invariance implies $(Z_t,U)\stackrel{d}{=}(g\cdot Z_t, g\cdot U)$.
Hence for any measurable set $A$,
\begin{equation}
\mathbb E[U\mathbf 1_{\{Z_t\in A\}}]
=\mathbb E[g\cdot U\ \mathbf 1_{\{g\cdot Z_t\in A\}}]
=g\cdot \mathbb E[U\mathbf 1_{\{Z_t\in g^{-1}A\}}],
\end{equation}
which yields $v^\star(g\cdot z)=g\cdot v^\star(z)$ by Radon--Nikodym characterization of conditional expectation.
The equality of infima follows because $v^\star\in\mathcal H_{\mathrm{eq}}$.
\end{proof}

\begin{remark}
If one keeps the \emph{same ambient regression problem}~\eqref{eq:bayes-risk-ambient}, then canonicalizing the inputs does not improve the population optimum for an equivariant model class, because the Bayes regressor is already equivariant (Proposition~\ref{prop:bayes-equivariant}). 
\end{remark}

\paragraph{Canonical conditions expand the effective hypothesis class of non-equivariant models.}
\label{para:noneq-gain}

Canonicalization is most useful when the model is \emph{not} architecturally equivariant.
Intuitively, a non-equivariant model in ambient space must learn symmetry averaging from data.
Canonicalization and canonical conditions provide a \emph{gauge} (e.g.\ a canonicalizer $\Psi$ and an associated group element $\kappa(\cdot)$) that allows a non-equivariant model to implement equivariant behavior via an explicit formula.

We assume an exact canonicalizer model, i.e., a canonicalizer provides $\Psi:\mathbb R^d\to S$ (slice) and $\kappa:\mathbb R^d\to\mathcal G$ such that
$\Psi(x)=\kappa(x)^{-1}\cdot x$ on a full-measure set (up to stabilizers).

\begin{proposition}[Canonical-condition lifting realizes equivariant functions]
\label{prop:canon-lift-class}
Let $\mathcal A$ be any function class on the slice $S$ (not necessarily equivariant). Define the induced ambient class using the canonical condition:
\begin{equation}
\mathcal H_{\mathrm{lift}}(\mathcal A):=
\big\{\, h(x)=\kappa(x)\cdot a(\Psi(x))\ :\ a\in\mathcal A \,\big\}.
\end{equation}
Then every $h\in \mathcal H_{\mathrm{lift}}(\mathcal A)$ is $\mathcal G$-equivariant (on the set where $\Psi,\kappa$ are consistent).
Moreover, if $\mathcal A$ is universal on $S$, then $\mathcal H_{\mathrm{lift}}(\mathcal A)$ can approximate any equivariant target
function on $\mathbb R^d$ up to the usual stabilizer caveats.
\end{proposition}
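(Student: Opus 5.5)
The plan is to argue directly from the gauge identities, exactly as in the proof of Proposition~\ref{thm:universal_inv_equi}. We only need the exact-canonicalizer relations on a full-measure $\mathcal G$-invariant set $\Omega$. There $\Psi(g\cdot x)=\Psi(x)$ and $\kappa(g\cdot x)=g\,\kappa(x)$, and these follow from $\Psi(x)=\kappa(x)^{-1}\cdot x$.

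\textbf{Step 1 (equivariance of every lifted function).} Fix $a\in\mathcal A$ and $h(x)=\kappa(x)\cdot a(\Psi(x))$. For $x\in\Omega$ and $g\in\mathcal G$, compute
\[
h(g\cdot x)=\kappa(g\cdot x)\cdot a(\Psi(g\cdot x))=(g\,\kappa(x))\cdot a(\Psi(x))=g\cdot h(x).
\]
This uses invariance of $\Psi$, left-equivariance of $\kappa$, and the fact that the action is a group action, so $(g\kappa)\cdot y=g\cdot(\kappa\cdot y)$. No property of $a$ is needed, which is the point: $\mathcal A$ may be arbitrary. The only care required is to state that $\Omega$ is $\mathcal G$-stable. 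Under Assumption~\ref{ass:free} we may take $\Omega$ to be the set of points with trivial stabilizer, which is a union of orbits. On $\Omega$ the element $\kappa(x)$ is then uniquely determined by $x=\kappa(x)\cdot\Psi(x)$, so the identity $\kappa(g\cdot x)=g\kappa(x)$ is forced rather than assumed.

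\textbf{Step 2 (approximation).} Let $F$ be an equivariant target. As in Proposition~\ref{thm:universal_inv_equi}, equivariance gives $F(x)=\kappa(x)\cdot F(\Psi(x))$ on $\Omega$, so $F\in\mathcal H_{\mathrm{lift}}$ with $a=F|_S$ exactly. Given $a_\varepsilon\in\mathcal A$ with $\sup_S\|a_\varepsilon-F|_S\|<\varepsilon$, orthogonality of the action yields
\[
\|h_\varepsilon(x)-F(x)\|=\|\kappa(x)\cdot(a_\varepsilon(\Psi(x))-F(\Psi(x)))\|<\varepsilon
\]
uniformly on $\Omega$. For $L^2(\mu)$ approximation instead of uniform approximation, the same computation gives $\|h_\varepsilon-F\|_{L^2(\mu)}=\|a_\varepsilon-F|_S\|_{L^2(\Psi_\#\mu)}$, which requires only $L^2$-universality on the slice.

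\textbf{Main obstacle.} The delicate point is the meaning of ``universal on $S$'' and the stabilizer caveat. $\Psi$ and $\kappa$ are generally discontinuous, as noted following \citet{dym2024equivariant}, and $S$ may be non-compact or non-closed. For unbounded $\mathbb R^d$ we therefore phrase the approximation either on compact invariant subsets $K$, taking $S=\Psi(K)$, or in $L^2$ of an invariant measure. We also restrict to $\Omega$ and note that its complement is null.
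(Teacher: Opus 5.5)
Your proposal is correct and follows the paper's proof: Step 1 is exactly the paper's computation, and Step 2 spells out the paper's one-line remark that universality is inherited because an equivariant $F$ is determined by $F|_S$, using the same orthogonality bound as in the proof of Proposition~\ref{thm:universal_inv_equi}. The only slip is your claim that $\Psi(g\cdot x)=\Psi(x)$ follows from $\Psi(x)=\kappa(x)^{-1}\cdot x$; it does not (take $\kappa\equiv e$), since $\Psi$-invariance is part of the definition of a canonicalizer, and only the equivariance of $\kappa$ is then forced by freeness, as you correctly note later.
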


\begin{proof}
Equivariance follows from $\Psi(g\cdot x)=\Psi(x)$ and $\kappa(g\cdot x)=g\kappa(x)$ (in the free-action regime):
\begin{equation}
h(g\cdot x)=\kappa(g\cdot x)\cdot a(\Psi(g\cdot x))=g\kappa(x)\cdot a(\Psi(x))=g\cdot h(x).
\end{equation}
Universality is inherited because any equivariant function is determined by its restriction to the slice.
\end{proof}

\begin{remark}[Why this is a real ``shortcut'' for non-equivariant networks]
A generic non-equivariant network that only sees $x$ must \emph{implicitly} learn the equivariant structure.
Providing $(\Psi(x),\kappa(x))$ (or a discrete proxy such as canonical rank/frame ID) allows implementing equivariance
by construction as in Proposition~\ref{prop:canon-lift-class}. This can reduce approximation complexity and improve optimization.
\end{remark}

To summarize, slice training does \emph{not} claim to reduce the ambient Bayes risk~\eqref{eq:bayes-risk-ambient}. The empirical Bayes risk that can be actually achieved still depends on the architectures and could not be improved for equivariant models (Proposition~\ref{prop:bayes-equivariant}).
Instead, it learns a \emph{different} regression problem on the slice (predicting $\Delta$ from $\widetilde Z_t$), whose irreducible variance is the first term in \Cref{thm:canon-var-decomp} and is provably no larger than the ambient one. This is where canonicalization can enable easier training and fewer steps sampling for non-equivariant models.
s

\subsubsection{Proof for \Cref{subsubsec:align_ot}}
\label{subsubsec:app_supp_323_compact}
This part provides supplementary to aligned slice priors, OT couplings, and train--test consistency.

Canonicalization eliminates the symmetry-ambiguity term, but the remaining within-slice conditional variance
$\mathbb{E}[\mathrm{Var}(\Delta\mid \widetilde Z_t)]$ depends on the \emph{slice prior} $q_1$ and the \emph{slice coupling}
$\widetilde\gamma$. 
\Cref{subsubsec:align_ot} explains that after canonicalization, the remaining irreducible term is the \emph{within-slice difficulty}
$\mathbb{E}[\operatorname{Var}(\Delta\mid \widetilde Z_t)]$, which depends on (i) the slice prior $q_1$ and (ii) the slice coupling
$\widetilde\gamma$ (Theorem~3.5). This appendix subsubsection collects three short technical supplements:
(i) a closed-form expression showing how \emph{misaligned} Gaussian priors can inflate within-slice variance under product coupling,
(ii) why OT/near-Monge couplings are complementary but can be non-unique under symmetry, and
(iii) a minimal consistency statement clarifying when training-time OT is compatible with inference and when conditioning causes mismatch.

\begin{figure}[t]
  \centering
  \includegraphics[width=\linewidth]{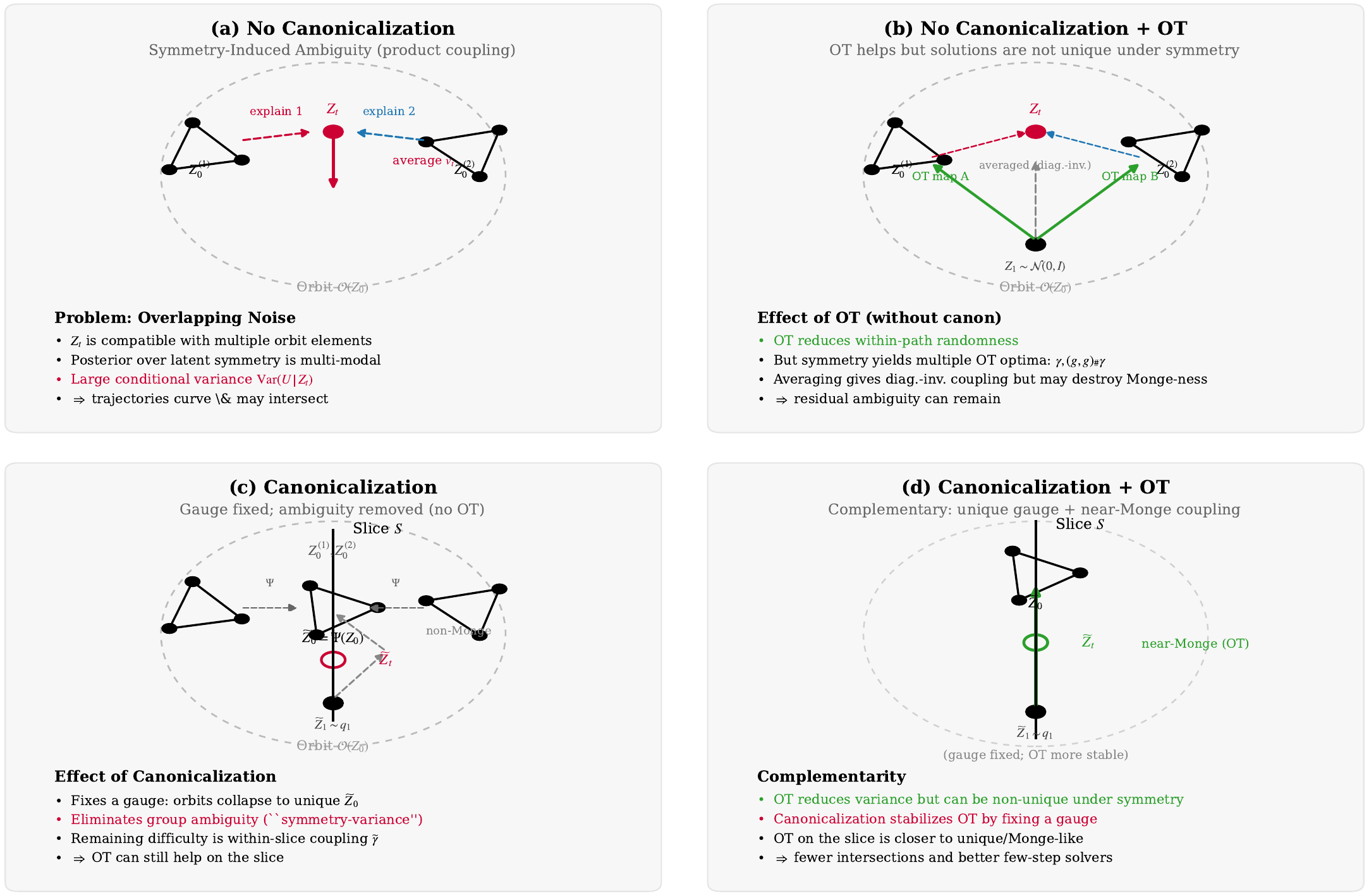}
  \caption{OT and canonicalization act complementarily. OT can reduce conditional variances with or without canonicalization, but OT solutions are generally non-unique in the presence of symmetry (if $\gamma$ is optimal then $(g,g)_\#\gamma$ is also optimal); averaging yields a diagonal-invariant optimum but may destroy Monge-ness. Canonicalization fixes a gauge (collapsing each orbit to a unique slice representative), eliminating group ambiguity and stabilizing OT on the slice.}
  \label{fig:ot-canon-complementary}
\end{figure}

\paragraph{Gaussian misalignment under product coupling: an exact within-slice variance formula.}
Canonicalization removes the symmetry-ambiguity term, but even on the slice the conditional variance can remain large if the
prior is poorly aligned with the slice geometry. The simplest (and common) baseline is the product coupling
$\widetilde\gamma=q_0\otimes q_1$. The following proposition gives an exact expression for the within-slice conditional covariance
in the linear-path (rectified-flow) setting, illustrating how mismatch between $\Sigma_0$ and $\Sigma_1$ controls the irreducible error.

\begin{proposition}[Closed-form within-slice conditional variance for independent Gaussians]
\label{prop:gaussian-product-condvar}
Assume the slice data and slice prior are Gaussian and independent:
\[
\widetilde Z_0\sim \mathcal{N}(\mu_0,\Sigma_0),\qquad
\widetilde Z_1\sim \mathcal{N}(\mu_1,\Sigma_1),\qquad
\widetilde Z_0 \perp \widetilde Z_1.
\]
Let $\widetilde Z_t = (1-t)\widetilde Z_0+t\widetilde Z_1$ and $\Delta=\widetilde Z_1-\widetilde Z_0$. Then
\begin{equation}
\mathrm{Cov}(\Delta\mid \widetilde Z_t)
= \frac{1}{t^2}\;\mathrm{Cov}(\widetilde Z_0\mid \widetilde Z_t),
\end{equation}
and the conditional covariance admits the closed form
\begin{equation}
\mathrm{Cov}(\widetilde Z_0\mid \widetilde Z_t)=
\Sigma_0 - (1-t)^2\;\Sigma_0\Big((1-t)^2\Sigma_0+t^2\Sigma_1\Big)^{-1}\Sigma_0.
\label{eq:gaussian-condcov}
\end{equation}
Consequently, the within-slice irreducible error in flow matching is
\begin{equation}
\mathbb{E}\big[\mathrm{Var}(\Delta\mid \widetilde Z_t)\big]=
\mathrm{tr}\!\left(
\frac{1}{t^2}\Sigma_0 - \frac{(1-t)^2}{t^2}\;\Sigma_0\Big((1-t)^2\Sigma_0+t^2\Sigma_1\Big)^{-1}\Sigma_0
\right).
\end{equation}
\end{proposition}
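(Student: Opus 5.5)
The plan is to exploit joint Gaussianity and reduce everything to one conditional covariance of $\widetilde Z_0$ given $\widetilde Z_t$, computed by the Gaussian conditioning (Schur complement) formula. Fix $t\in(0,1]$.

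\textbf{Step 1 (reduce $\Delta$ to $\widetilde Z_0$).} Since $\widetilde Z_1 = (\widetilde Z_t-(1-t)\widetilde Z_0)/t$, we get
\[
\Delta=\widetilde Z_1-\widetilde Z_0=\tfrac{1}{t}\widetilde Z_t-\tfrac{1}{t}\widetilde Z_0 .
\]
Conditionally on $\widetilde Z_t$, the first term is a constant. Hence $\mathrm{Cov}(\Delta\mid\widetilde Z_t)=t^{-2}\,\mathrm{Cov}(\widetilde Z_0\mid\widetilde Z_t)$, which is the first identity.

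\textbf{Step 2 (Gaussian conditioning).} The pair $(\widetilde Z_0,\widetilde Z_t)$ is jointly Gaussian, being a linear image of the independent Gaussian pair $(\widetilde Z_0,\widetilde Z_1)$. Independence gives the blocks
\[
\mathrm{Cov}(\widetilde Z_t)=(1-t)^2\Sigma_0+t^2\Sigma_1=:\Sigma_t,\qquad \mathrm{Cov}(\widetilde Z_0,\widetilde Z_t)=(1-t)\Sigma_0 .
\]
The Schur complement formula then gives
\[
\mathrm{Cov}(\widetilde Z_0\mid\widetilde Z_t)=\Sigma_0-(1-t)^2\Sigma_0\Sigma_t^{-1}\Sigma_0,
\]
which is \eqref{eq:gaussian-condcov}. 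For Gaussians this conditional covariance is deterministic, i.e.\ it does not depend on the value of $\widetilde Z_t$.

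\textbf{Step 3 (irreducible error).} Take traces, since $\mathrm{Var}$ of a vector means the trace of its covariance. Because the conditional covariance is constant in $\widetilde Z_t$, the outer expectation is trivial. Combining with Step 1 yields the stated closed form. By Lemma~\ref{lem:variance}, applied on the slice, this is the within-slice irreducible flow-matching error.

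\textbf{Main obstacle.} The delicate point is invertibility of $\Sigma_t$. It holds automatically when $\Sigma_1\succ0$ and $t>0$. If $\Sigma_t$ is singular (for example, degenerate slice data together with a degenerate prior), I would replace $\Sigma_t^{-1}$ by the Moore--Penrose pseudoinverse, for which the Gaussian conditioning formula remains valid. I would state the result under $\Sigma_1\succ0$ and note the pseudoinverse extension. The endpoint $t=0$ is excluded, since there the $1/t^2$ factor blows up.
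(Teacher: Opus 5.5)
Your proposal is correct and follows the paper's proof essentially step for step: the algebraic identity $\Delta=(\widetilde Z_t-\widetilde Z_0)/t$, Gaussian conditioning with the blocks $(1-t)\Sigma_0$ and $(1-t)^2\Sigma_0+t^2\Sigma_1$, and then taking traces. Your extra remarks are sound refinements that the paper leaves implicit: invertibility of $\Sigma_t$ (with a pseudoinverse fallback), exclusion of $t=0$, and the fact that the Gaussian conditional covariance is non-random, so the outer expectation is trivial.
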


\begin{proof}
The identity $\Delta = (\widetilde Z_t - \widetilde Z_0)/t$ is algebraic, hence
$\mathrm{Cov}(\Delta\mid \widetilde Z_t)=\frac{1}{t^2}\mathrm{Cov}(\widetilde Z_0\mid \widetilde Z_t)$.
Since $(\widetilde Z_0,\widetilde Z_t)$ is jointly Gaussian with
$\mathrm{Cov}(\widetilde Z_0,\widetilde Z_t)=(1-t)\Sigma_0$ and
$\mathrm{Cov}(\widetilde Z_t)=(1-t)^2\Sigma_0+t^2\Sigma_1$, the standard Gaussian conditioning formula yields
\eqref{eq:gaussian-condcov}. Taking traces gives the final expression.
\end{proof}

Therefore, if one uses an isotropic prior $q_1=\mathcal N(0,I)$ while $q_0$ is strongly anisotropic (large condition number in $\Sigma_0$),
then the trace above remains large along directions where $\Sigma_0$ dominates $I$, formalizing why a \emph{misaligned} simple prior can
degrade few-step accuracy even after canonicalization. A practical remedy is to choose an \emph{aligned} prior within a tractable family,
e.g.\ the moment-matched Gaussian $q_1^\star=\mathcal N(\mathbb{E}_{q_0}[\widetilde Z_0],\,\operatorname{Cov}_{q_0}(\widetilde Z_0))$
(KL projection onto Gaussians), or a learned prior.
While it is possible to learn the canonical prior $q_1$, using KL projections of $q_0$ is a simple yet effective way. The computation only occurs in pre-processing, and does not induce any overhead in training.
For instance, among Gaussians, the moment-matched Gaussian is the KL projection of $q_0$:
\begin{proposition}[Moment-matched Gaussian is the KL-optimal Gaussian approximation]
\label{prop:kl-gaussian-projection}
Let $q_0$ be any distribution on $\mathbb{R}^d$ with finite second moments. Consider the family
$\mathcal{N}(\mu,\Sigma)$ with $\Sigma\succ 0$. The minimizer of $\mathrm{KL}(q_0\|\mathcal{N}(\mu,\Sigma))$ is
\begin{equation}
\mu^\star = \mathbb{E}_{q_0}[\widetilde Z_0],\qquad
\Sigma^\star = \mathrm{Cov}_{q_0}(\widetilde Z_0).
\end{equation}
\end{proposition}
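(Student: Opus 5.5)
The plan is to reduce the KL minimization to a Gaussian log-likelihood maximization that depends on $q_0$ only through its first two moments, and then solve that concave problem in closed form. First I dispose of the degenerate case. If $q_0$ is not absolutely continuous with respect to Lebesgue measure, then $\mathrm{KL}(q_0\|\mathcal N(\mu,\Sigma))=+\infty$ for every $(\mu,\Sigma)$. In that case the statement should be read through the cross-entropy objective below; equivalently, the argument works with that objective directly. When $q_0$ has a density with finite entropy, I write
\[
\mathrm{KL}(q_0\|\mathcal N(\mu,\Sigma)) = -H(q_0) - \E_{q_0}\big[\log \mathcal N(\widetilde Z_0;\mu,\Sigma)\big].
\]
The entropy term does not depend on $(\mu,\Sigma)$, so it suffices to minimize the cross-entropy
\[
J(\mu,\Sigma):=\tfrac12\log\det\Sigma+\tfrac12\E_{q_0}\big[(\widetilde Z_0-\mu)^\top\Sigma^{-1}(\widetilde Z_0-\mu)\big],
\]
dropping the constant $\tfrac d2\log 2\pi$. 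This quantity is finite because $q_0$ has finite second moments.

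Next I separate the mean from the covariance. Let $m:=\E_{q_0}[\widetilde Z_0]$ and $C:=\mathrm{Cov}_{q_0}(\widetilde Z_0)$. The bias–variance expansion gives
\[
\E_{q_0}\big[(\widetilde Z_0-\mu)^\top\Sigma^{-1}(\widetilde Z_0-\mu)\big]=\mathrm{tr}(\Sigma^{-1}C)+(m-\mu)^\top\Sigma^{-1}(m-\mu).
\]
For any fixed $\Sigma\succ0$, the second term is nonnegative and vanishes only at $\mu=m$, so $\mu^\star=m$. What remains is to minimize
\[
f(\Sigma)=\tfrac12\big(\log\det\Sigma+\mathrm{tr}(\Sigma^{-1}C)\big).
\]

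To minimize $f$, I reparametrize by the precision matrix $P=\Sigma^{-1}$, which turns the objective into $\tfrac12(-\log\det P+\mathrm{tr}(PC))$. This function is strictly convex in $P$ on the cone $P\succ0$. Its gradient is $\tfrac12(-P^{-1}+C)$, so the stationarity condition is $P^{-1}=C$, i.e.\ $\Sigma^\star=C$. For a self-contained check when $C\succ0$, I would instead write $A:=C^{1/2}\Sigma^{-1}C^{1/2}$ with eigenvalues $\lambda_i>0$. Then $f(\Sigma)-f(C)=\tfrac12\sum_i(\lambda_i-1-\log\lambda_i)\ge0$, with equality iff all $\lambda_i=1$, which gives uniqueness as well.

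I expect the main obstacle to be the case where $C$ is singular, for example because $q_0$ lives on a low-dimensional slice. Then no minimizer exists in $\{\Sigma\succ0\}$: along the null directions of $C$, $f$ decreases to $-\infty$ as $\Sigma$ degenerates. I would therefore state that the result holds under $C\succ0$, which is implicit if $q_0$ has a density. For singular $C$, I would note that the moment-matched Gaussian remains the infimizing limit once the problem is restricted to the affine support of $q_0$.
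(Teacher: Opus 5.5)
Your proof is correct and is essentially the paper's argument made explicit. The paper's one-line proof appeals to the standard exponential-family projection: the Gaussian log-density is affine in the sufficient statistics $(x,xx^\top)$, so the objective sees $q_0$ only through its first two moments and is minimized by moment matching. Your bias--variance split followed by minimization over the precision $P=\Sigma^{-1}$ (a natural parameter) is exactly that computation, and your eigenvalue bound $\sum_i(\lambda_i-1-\log\lambda_i)\ge 0$ gives global optimality and uniqueness directly.

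Your caveats are also right and sharpen the statement. As written it needs $\mathrm{Cov}_{q_0}(\widetilde Z_0)\succ 0$, plus the cross-entropy reading whenever $q_0$ has no density (e.g.\ an empirical distribution). This matters in the paper's own setting: zero-centered coordinates satisfy $\sum_i \mathbf{x}_i=0$, so they already have singular covariance in $\mathbb{R}^{N\times 3}$ unless one restricts to that subspace, as you suggest.
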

\begin{proof}
This is a standard exponential-family projection: the Gaussian log-density is an affine function of the sufficient
statistics $(x,xx^\top)$, so minimizing $\mathrm{KL}(q_0\|\mathcal{N}(\mu,\Sigma))$ yields moment matching.
\end{proof}

\paragraph{OT/near-Monge couplings: complementary to canonicalization, but symmetry can destroy uniqueness.}
Given $(q_0,q_1)$, choosing $\widetilde\gamma$ close to an OT coupling can make transport more deterministic (Monge-like), shrinking
$\mathbb{E}[\operatorname{Var}(\Delta\mid \widetilde Z_t)]$ and straightening trajectories (hence benefiting few-step solvers).
However, when working in ambient space with symmetric marginals and symmetric costs, OT solutions are generally \emph{not unique}:
if $\gamma$ is optimal then $(g,g)_{\#}\gamma$ is also optimal. Averaging over $g$ yields a diagonal-invariant optimum, but can destroy
Monge-ness by turning a deterministic map into a mixture. Canonicalization fixes a gauge and thus \emph{stabilizes} OT: on the slice,
the symmetry-induced degeneracy is reduced, and OT solutions are empirically closer to unique/Monge-like. This is the precise sense in
which canonicalization and OT act \emph{complementarily}: canonicalization removes group ambiguity (Theorem~3.5), while OT targets the
remaining within-slice difficulty. \Cref{fig:ot-canon-complementary} provides an illustration.

\paragraph{Training-time OT is compatible with inference; mismatch comes from conditioning variables.}
A frequent confusion is whether using OT in training forces OT at inference. In flow matching / rectified flow, OT changes the \emph{training
coupling} $\widetilde\gamma$ (and thus the supervision signal), but inference only requires sampling from the \emph{marginal prior} $q_1$ and
integrating the learned dynamics. In particular, if the population vector field is well-defined and sufficiently regular, sampling is
\emph{coupling-free} given the correct field: one draws $\widetilde Z_1\sim q_1$ and integrates the learned ODE/SDE to obtain a sample from
$q_0$; no paired endpoints are needed at inference.

The actual train--test mismatch risk arises when the model is conditioned on an auxiliary variable $C$ (e.g.\ ``canonical rank'' PE, frame ID,
or any deterministic/stochastic output of a canonicalizer). Let $\widetilde\pi^{\mathrm{tr}}_1(C,\widetilde Z_1)$ be the training-time joint
law at the start, and $\widetilde\pi^{\mathrm{inf}}_1(C,\widetilde Z_1)$ the inference-time one. A sufficient condition for \emph{no mismatch} is:
\begin{equation}
\widetilde\pi^{\mathrm{tr}}_1(C,\widetilde Z_1)=\widetilde\pi^{\mathrm{inf}}_1(C,\widetilde Z_1).
\end{equation}
In particular, if $C$ is a deterministic index inherent to the canonical coordinate system (e.g.\ the coordinate index $i$ \emph{is} the canonical
rank), then this equality holds for any choice of $q_1$; conversely, if inference computes $C$ as a function of sampled noise in a way that differs
from training, mismatch is unavoidable. Finally, note that \emph{canonicalizing noise at inference} (e.g.\ setting $\widetilde Z_1:=\Psi(\varepsilon)$
with $\varepsilon\sim\mathcal N(0,I)$) generally induces a complicated slice prior $\Psi_{\#}\mathcal N(0,I)$; unless training uses the same induced
prior, this introduces an avoidable prior mismatch and can increase within-slice variance.

\section{Implementation Details}\label{sec_appendix:imple_details}

\subsection{Details of Canonicalization Methods}\label{subsec_appendix:canon}

\subsubsection{Geometric Spectral Ordering}

We define a deterministic canonicalization function $\kappa: \mathcal{M} \to \mathcal{M}$ that maps each molecular graph to a unique representative within its orbit under $S_N \times SE(3)$. The key insight is to leverage the geometric Laplacian constructed from 3D atomic coordinates, whose Fiedler vector encodes the molecular connectivity structure in a continuous, rotation-invariant manner.

\paragraph{Geometric Laplacian construction and canonical permutation.} The construction of the geometric Laplacian and the calculation of the (signed) Fiedler vector can be cleanly summarized as the pseudo-algorithm in \Cref{alg:geom-lap-fiedler}.

\begin{algorithm}[ht]
\caption{Geometric Laplacian and signed Fiedler vector}
\label{alg:geom-lap-fiedler}
\begin{algorithmic}
\STATE \textbf{Input:} coordinates $\mathbf{X} = (\mathbf{x}_1,\dots,\mathbf{x}_N) \in \mathbb{R}^{N\times 3}$, bond set $E$.
\STATE Compute mean bond length $\bar d_{\text{bond}} \gets \frac{1}{|E|}\sum_{(i,j)\in E} \|\mathbf{x}_i-\mathbf{x}_j\|$.
\STATE Set bandwidth $\sigma^2 \gets 4\,\bar d_{\text{bond}}^2$.
\FOR{$i=1,\dots,N$}
  \FOR{$j=1,\dots,N$}
    \IF{$i=j$}
      \STATE $W_{ij} \gets 0$.
    \ELSE
      \STATE $W_{ij} \gets \exp\big(-\|\mathbf{x}_i-\mathbf{x}_j\|^2/(2\sigma^2)\big)$.
    \ENDIF
  \ENDFOR
\ENDFOR
\STATE Degree matrix $\D \gets \mathrm{diag}(\sum_j W_{ij})$.
\STATE Random-walk Laplacian $L_{\mathrm{rw}} \gets \D^{-1}(\D-\W)$.
\STATE Compute eigenpairs of $L_{\mathrm{rw}}$: $0=\lambda_1\le \lambda_2\le \cdots$ with eigenvectors $\{u_k\}$.
\STATE Fiedler vector $u_2 \gets$ eigenvector of $\lambda_2$.
\STATE Centroid $\bar{\mathbf{x}} \gets \frac{1}{N}\sum_i \mathbf{x}_i$ and mean radius $\bar d \gets \frac{1}{N}\sum_i \|\mathbf{x}_i-\bar{\mathbf{x}}\|$.
\STATE Fix sign by centroid-direction convention:
\STATE \hspace{1em}$u_2 \gets \mathrm{sign}\big(\sum_i u_{2,i}(\|\mathbf{x}_i-\bar{\mathbf{x}}\| - \bar d)\big)\cdot u_2$.
\STATE \textbf{Output:} signed Fiedler vector $u_2$.
\end{algorithmic}
\end{algorithm}

\begin{algorithm}[htbp]
\caption{Spectral $SO(3)$ Canonicalization}
\label{alg:spectral-so3-canon}
\begin{algorithmic}[1]
\REQUIRE Atomic coordinates $\mathbf{X} \in \mathbb{R}^{N \times 3}$ and signed Fiedler vector $v$ (computed by Algorithm~\ref{alg:geom-lap-fiedler})
\ENSURE Canonicalized coordinates $\mathbf{X}'$
\STATE Apply spectral ordering $\pi^* = \operatorname{argsort}(v)$ and enforce sign $\sum_i v_i^3 > 0$
\STATE Let head index $h$ be rank $0$ and tail index $t$ be rank $N-1$
\STATE Choose anchor index
$$
a = \operatorname{argmax}_{k \in [\lfloor N/3 \rfloor, \lfloor 2N/3 \rfloor)} \|(\mathbf{x}_k - \mathbf{x}_{h}) \times (\mathbf{x}_{t} - \mathbf{x}_{h})\|
$$
\STATE Compute longitudinal axis $\mathbf{e}_1 \gets \frac{\mathbf{x}_{t} - \mathbf{x}_{h}}{\|\mathbf{x}_{t} - \mathbf{x}_{h}\|}$
\STATE Compute plane normal $\mathbf{n} \gets \mathbf{e}_1 \times (\mathbf{x}_{a} - \mathbf{x}_{h})$
\STATE Normalize normal $\mathbf{e}_3 \gets \frac{\mathbf{n}}{\|\mathbf{n}\|}$
\STATE Complete right-handed basis $\mathbf{e}_2 \gets \mathbf{e}_3 \times \mathbf{e}_1$
\STATE Assemble rotation matrix $\mathbf{R} \gets [\mathbf{e}_1, \mathbf{e}_2, \mathbf{e}_3]^\top$
\STATE Center and rotate $\mathbf{X}' \gets (\mathbf{X} - \bar{\mathbf{X}})\mathbf{R}^\top$
\end{algorithmic}
\end{algorithm}

\begin{figure*}[t]
  \centering
  \begin{subfigure}[t]{0.6\textwidth}
    \centering
    \includegraphics[width=\linewidth]{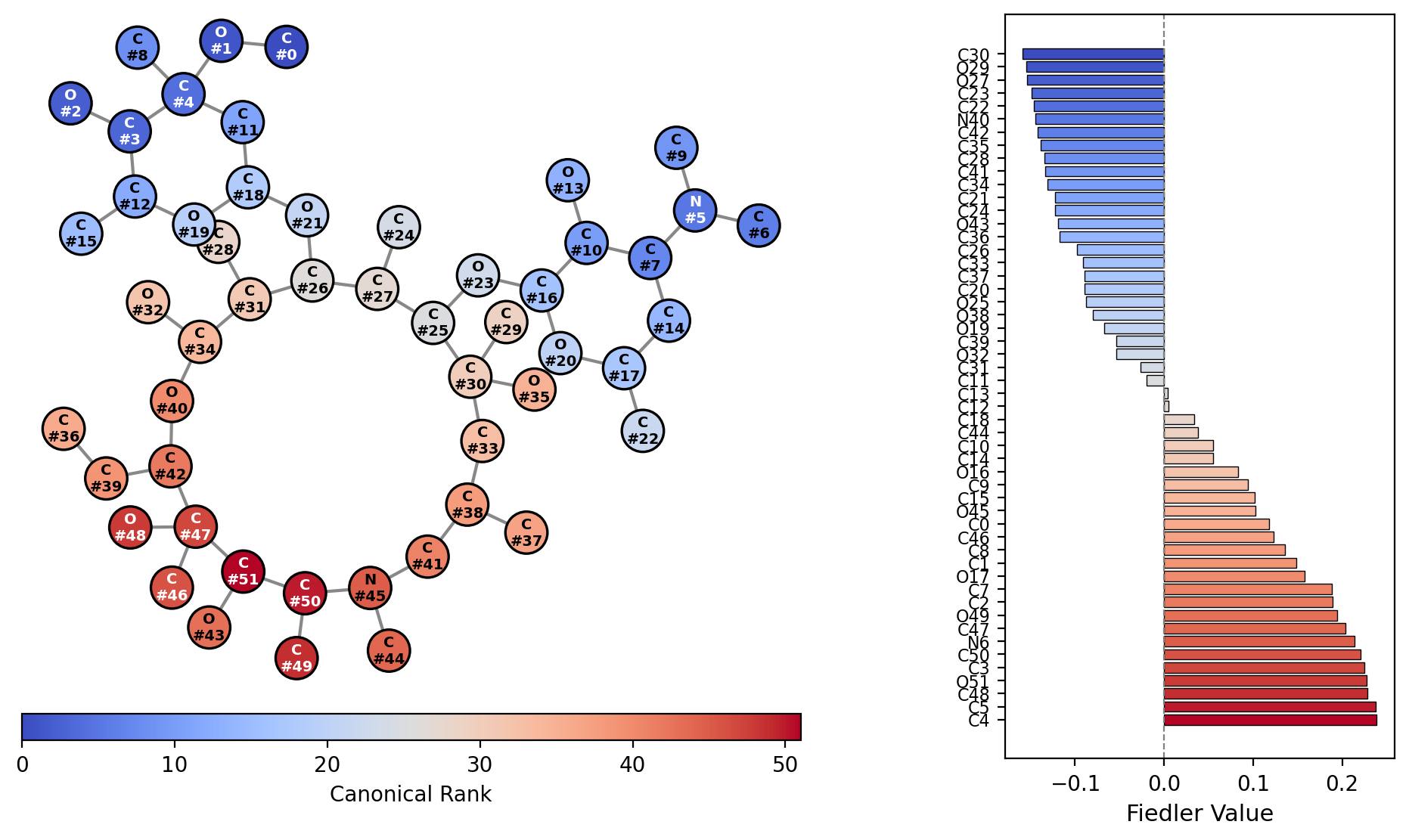}
    \caption{Azithromycin (high symmetry / macrocycle): the spectral ordering is core-centric, concentrating the macrocycle near the spectral center and mapping peripheral sugar moieties toward the sequence terminals.}
    \label{fig:spectral-canon-azithro}
  \end{subfigure}
  \hfill
  \begin{subfigure}[t]{0.6\textwidth}
    \centering
    \includegraphics[width=\linewidth]{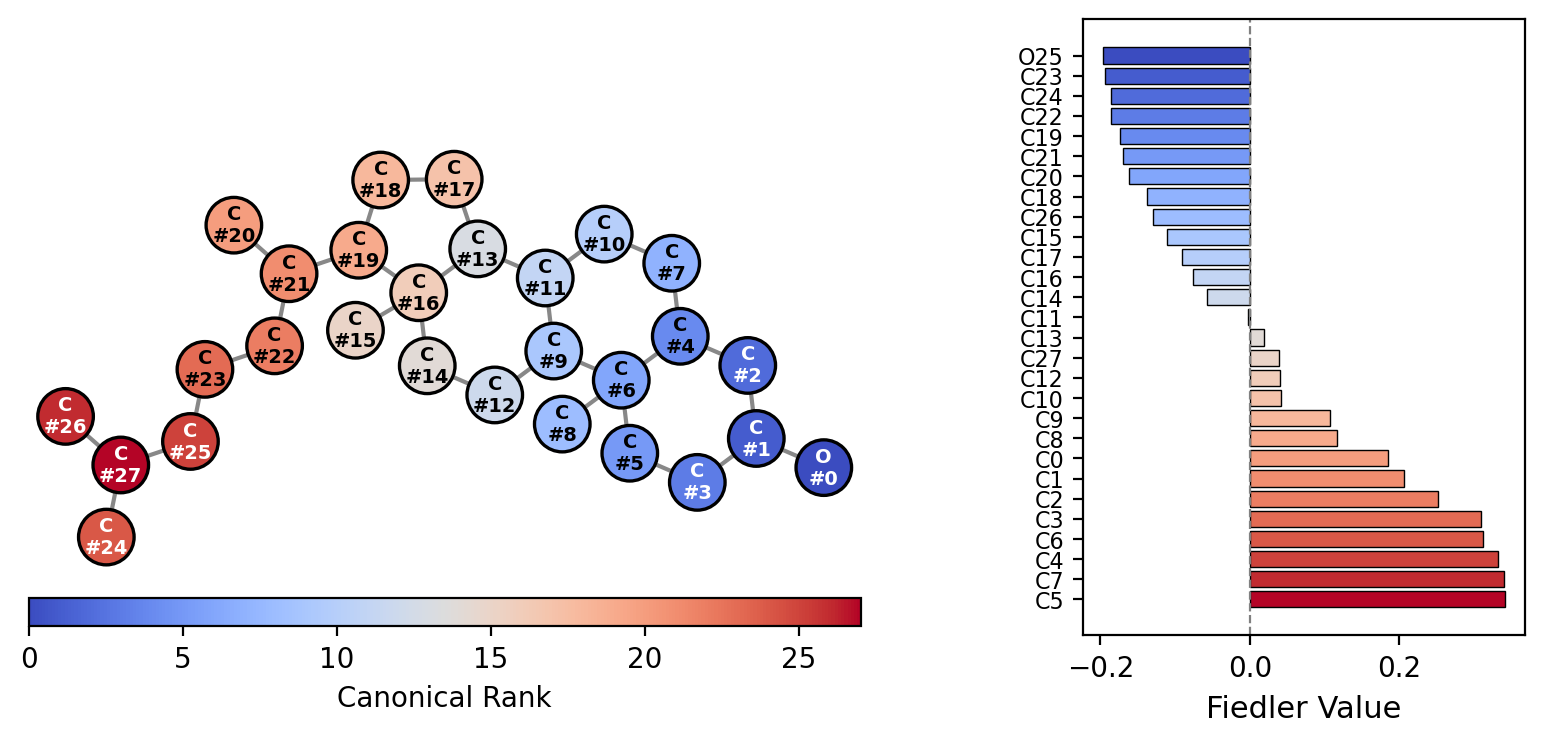}
    \caption{Cholesterol (low symmetry / anisotropic): the spectral ordering unfolds along the principal axis from the polar head through the rigid tetracyclic core to the flexible hydrocarbon tail, while preserving local connectivity.}
    \label{fig:spectral-canon-chol}
  \end{subfigure}
  \caption{Spectral canonicalization via the (signed) Fiedler vector produces stable, locality-preserving atom orderings for both highly symmetric and weakly symmetric molecules (blue: start, red: end), demonstrating robust performance across diverse symmetry regimes.}
  \label{fig:spectral-canonicalization}
  \vspace{-2pt}
\end{figure*}

The canonical ordering is then: $\pi^* = \text{argsort}(u_2)$
and the canonicalized molecule is $\kappa(\mathcal{M}) = \pi^*(\mathcal{M})$.
The Fiedler vector approximately captures the graph Laplacian's fundamental mode of oscillation, providing a ``core-to-periphery" ordering. Atoms with similar Fiedler values tend to be spatially close and structurally equivalent, which induces a natural generation order from molecular core to functional groups; see \Cref{fig:spectral-canonicalization} for illustrative examples.

\paragraph{Canonical $SO(3)$ frame.} The above procedure determines a canonical order on $S_N$, based on which we further (optionally) process the $SO(3)$ symmetry using \Cref{alg:spectral-so3-canon}.

\subsubsection{Alternative Orderings} 
Graph canonicalization is a widely studied topic and there are also other existing canonicalization methods. However, most of them are defined for permutation in abstract graphs~\citep{zhao2024pard,LaplacianCanonization,RethinkingGraphCanon} instead of geometric graphs.

We also implement two simpler orderings for ablation:
\begin{enumerate}
    \item Structural Multihop (inspired by PARD~\citep{zhao2024pard}): Iterative degree peeling using weighted multihop degrees $w_{K}(v) = \sum_{k=1}^{K} d_k(v) \cdot N^{K-k}$, where $d_k(v)$ counts nodes at exactly $k$ hops.
    \item Atomic Numbering: Priority-based ordering by atomic number (e.g., heavy/rare atoms first, hydrogen last).
\end{enumerate}

However, we experimentally find that our structure-aware geometric spectral ordering outperforms all these methods, validating the effectiveness.

\subsection{Canonical Diffusion and CanonFlow}\label{subsec_appendix:arch} 

We present additional algorithmic and architectural design for the canonical diffusion and flow matching.

\subsubsection{Details of canonical diffusion and flow matching}
We already describe the overall framework of canonical diffusion or flow matching in \Cref{subsec:algorithm}. We now provide additional details in this part.

\paragraph{Canonical positional encoding.} To explicitly break permutation equivariance, we inject positional information derived from the canonical rank into the model architecture. For each atom $i$ with normalized rank $r_i = \text{rank}_i / N \in [0, 1)$, we compute a sinusoidal positional encoding:
\begin{equation}
\text{PE}(r, 2k) = \sin\left(\frac{r \cdot M}{10000^{2k/d_{\text{pe}}}}\right), \qquad \text{PE}(r, 2k+1) = \cos\left(\frac{r \cdot M}{10000^{2k/d_{\text{pe}}}}\right)
\end{equation}
where $M = 10000$ is the max position scale and $d_{\text{pe}}$ is the encoding dimension. This encoding is concatenated with atom features before the initial projection.
This completely breaks permutation equivariance at the architecture level, allowing the model to distinguish atoms by their structural role. Note that this PE can be applied to either our novel Canon architecture, or other existing architectures to make a (non-equivariant) canonicality-aware counterpart.
The \emph{PE-drop} mechanism replaces $\mathbf R$ with a learned ``fake PE'' embedding with some probability $p_{\text{drop}}$ during training, which is used for classifier-free guidance w.r.t.\ canonical conditions. 

\paragraph{Adaptive position-dependent prior.} Standard flow matching samples noise from a uniform prior $p_0$. We replace this with a position-dependent prior that encodes empirical correlations between canonical position and atom type:
\begin{equation}
    p_0^{\text{eff}}(c \mid r) = \beta_r \cdot p_{\text{prior}}(c) + (1 - \beta_r) \cdot p_r(c \mid r)
\end{equation}
where $r = r_i / N$ is the relative position, $\beta_r$ is a mixing coefficient (typically $0.1$ for categorical ), $p_{\text{prior}}$ is the prior distribution such as $p_{\text{uniform}}$, and $p_r(\cdot|r)$ is the empirical distribution estimated from the training set.

For \emph{prior estimation}, we discretize $r \in [0, 1)$ into $K$ bins and compute:
\begin{equation}P(c \mid B_k) = \frac{\text{Count}(c, k) + \epsilon}{\sum_{c'} (\text{Count}(c', k) + \epsilon)}\end{equation}
for categorical data; we use a Gaussian with sufficient statistics mean and variance in the dataset for continuous coordinates.
At runtime, we use linear interpolation between adjacent bins for continuous $r$:
\begin{equation}
    p_r(c|r) = (1 - \delta) \cdot P(c | B_{k_0}) + \delta \cdot P(c | B_{k_1})
\end{equation}
where $k_0 = \lfloor r \cdot K \rfloor$, $k_1 = \min(k_0 + 1, K-1)$, and $\delta = r \cdot K - k_0$.

\subsubsection{Canon Architecture}

The Semla architecture proposed by \citep{irwin2024semlaflow} maintains two hidden states corresponding to $\Xc, \Hc$ in each layer. In our \emph{Canon} architecture, we additionally incorporate a third hidden state to update and refine canonical information in each layer by interacting with other atom and bond features.
\Cref{fig:canon-arch} summarizes the Canon architecture. 

\paragraph{Input processing.}
We implement a three-stream molecular dynamics network, \textbf{Canon}, which augments a Semla-like equivariant message passing trunk with an explicit \emph{canonical-rank stream} and optional canonical positional encodings (PE). Concretely, at diffusion/flow time $t$, the model takes as input
\[
\Xc_t\in\mathbb{R}^{B\times N\times 3},\qquad
\Hc_t\in\mathbb{R}^{B\times N\times d_{h}},\qquad
\mathbf R\in\mathbb{R}^{B\times N},
\]
where $\Xc_t$ are coordinates, $\Hc_t$ are node-wise invariant features (e.g.\ atom features, optional conditioning), and $\mathbf R$ is the canonical rank (or its normalized variant). The positional encoding processing is identical to the general description above. Optional self-conditioning provides a previous estimate $\hat \Xc_0$, and optionally a previous rank estimate $\hat{\mathbf R}$. In the implementation, self-conditioning is realized by stacking $(\Xc_t,\hat{\Xc}_0)$ and projecting into $K$ coordinate sets via a linear map $\Pi_{CS}$:
\begin{equation}
\mathbf{CS}_t \;=\;\Pi_{CS}\big([\Xc_t,\hat{\Xc}_0]\big)\in\mathbb{R}^{B\times K\times N\times 3},
\end{equation}
with masking applied on padded atoms. 

We append a size embedding $\mathrm{Emb}(|V|)$ and optional canonical PE to the invariant features and project them into the trunk width $d$:
\begin{equation}
\widetilde \Hc_t \;=\; \phi_{\text{in}}\Big([\Hc_t,t,\mathrm{Emb}(|V|),\mathrm{PE}(\mathbf R)]\Big)\in\mathbb{R}^{B\times N\times d},
\end{equation}
where $\phi_{\text{in}}$ is a two-layer MLP with SiLU nonlinearity. 

\paragraph{CanonDynamics: message-passing layers with node/coord/rank streams.}
The trunk (\texttt{CanonDynamics}) stacks $L$ copies of a message passing layer, optionally with edge features. Each layer updates three coupled states:
\[
(\mathbf{CS},\Hc,\mathbf R)\mapsto(\mathbf{CS}',\Hc',\mathbf R'),
\]
In each \emph{edge message layer}, we construct pairwise messages using \emph{(i)} projected node features, \emph{(ii)} projected rank features, and \emph{(iii)} geometric features derived from coord-set dot-products. Specifically, after normalization we compute
\begin{equation}
\mathbf G_{ij}^{(k)} \;=\;\langle \mathbf{CS}^{(k)}_i,\; \mathbf {CS}^{(k)}_j\rangle,\qquad k=1,\dots,K,
\end{equation}
and concatenate
\begin{equation}
\bm m_{ij} \;=\; \mathrm{MLP}\Big(\,[\W_h \bm h_i,\; \W_h \bm h_j,\; \W_r \bm r_i,\; \W_r \bm r_j,\; \mathbf{G}_{ij}^{(1:K)},\; \bm{e}_{ij}]\,\Big),
\end{equation}
where $\bm{e}_{ij}$ is an optional edge feature input, and $\W_h, \W_r$ are learnable projection weights.

Next we adopt attention-based updates for node/coord/rank features.
The message tensor is split into three channels (plus optional edge-out):
\begin{equation}
\bm{m}_{ij} \;=\; \big(\bm{m}^{\mathrm{node}}_{ij},\; \bm{m}^{\mathrm{coord}}_{ij},\; \bm m^{\mathrm{rank}}_{ij},\; \bm m^{\mathrm{edge}}_{ij}\big),
\end{equation}
which drive three attention modules:
\begin{equation}
\Hc \leftarrow \Hc + \mathrm{Attn}_{\mathrm{node}}(\Hc, \bm m^{\mathrm{node}}),\quad
\mathbf {CS} \leftarrow \mathbf{CS} + \mathrm{Attn}_{\mathrm{coord}}(\mathbf{CS}, \bm m^{\mathrm{coord}}),\quad
\mathbf R \leftarrow \mathbf R + \mathrm{Attn}_{\mathrm{rank}}(\mathbf R, \bm m^{\mathrm{rank}}),
\end{equation}
interleaved with feed-forward residual blocks consisting of equivariant or invariant MLPs for $(\Hc,\mathbf {CS})$ and $\mathbf R$. 

\paragraph{Output heads.}
After $L$ layers, Canon produces coordinate updates through a normalization + linear head:
\begin{equation}
\widehat \Xc \;=\; \mathrm{Head}_X\big(\mathrm{Norm}(\mathbf{CS})\big)\in\mathbb{R}^{B\times N\times 3},
\end{equation}
and predicts atom type and charge logits via linear classifiers on the final node features. If enabled, it also predicts a rank score per node via a linear head and min--max normalization to $[0,1]$. Optional edge features are refined by a bond-refinement module and projected to edge-type logits.

\begin{figure}[t]
  \centering
  \includegraphics[width=\linewidth]{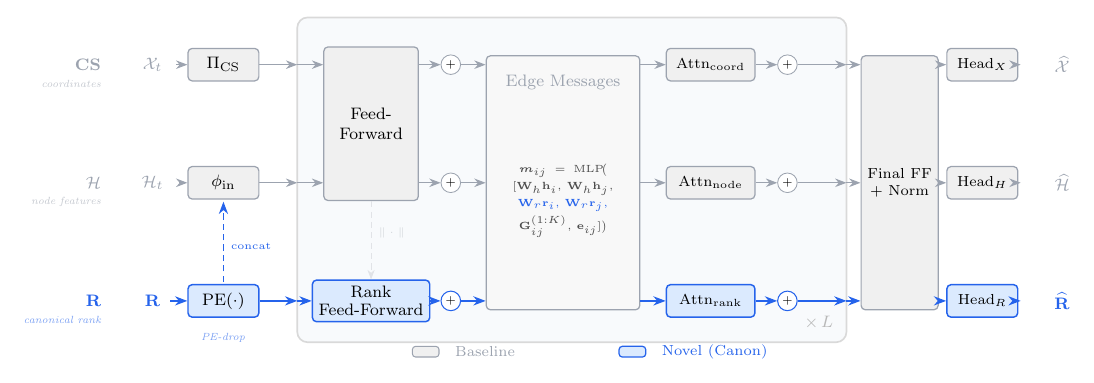}
  \caption{Overview of the Canon architecture. Three parallel streams---coordinates ($\mathbf{CS}$), node features ($\mathcal{H}$), and canonical rank ($\mathbf{R}$)---are updated through $L$ message-passing layers consisting of feed-forward blocks, pairwise edge messages, and attention-based aggregation. Gray blocks denote components inherited from Semla; blue blocks denote novel canonical-rank components introduced by Canon.}
  \label{fig:canon-arch}
\end{figure}

\subsection{Training}\label{subsec_appendix:training}

\paragraph{Rank noise for robustness.} To reduce the train-inference gap (since ground-truth ranks are unavailable during sampling), we add noise to canonical ranks during training:
\begin{equation}
r_{\text{noised}} = r + \mathcal{N}(0, \sigma_r^2) \cdot (1 - t)
\end{equation}
The noise decays with $t$ to avoid large perturbations when the molecule is nearly reconstructed.

\paragraph{Auxiliary rank prediction.} Optionally, the model predicts canonical rank as an auxiliary task:
\begin{equation}
\hat{r} = \sigma\big(\text{MLP}(h_i^{(L)})\big) \in [0, 1]
\end{equation}
with min-max normalization across atoms and training loss:
\begin{equation}
\mathcal{L}_{\text{rank}} = \lambda_r \cdot \mathbb{E}\left[\|r^* - \hat{r}\|^2\right]
\end{equation}
This enables dynamic rank estimation during inference.

\begin{algorithm}[ht]
\caption{Gap-free canonical training with optional OT coupling (no OT needed at inference)}
\label{alg:gapfree}
\begin{algorithmic}[1]
\REQUIRE Dataset $(G,X)$; symmetry group $G\in\{S_N,\;S_N\times SO(3)\}$; canonicalizer $\Psi_G$ producing rank-ordered canonical coordinates $\widetilde X$.
\STATE \textbf{Group setup:} choose $\Psi_G$ and the canonical-space prior $q_1$ accordingly.
\STATE Canonicalize each training example once: $\widetilde Z_0=\mathrm{vec}(\Psi(X))$; treat rank as the fixed coordinate index.
\STATE Choose a marginal prior $q_1$ in canonical space (Aligned or Isotropic).
\FOR{each training iteration}
  \STATE Sample minibatch $\{\widetilde z_0^i\}\sim q_0$ and noise minibatch $\{\widetilde z_1^j\}\sim q_1$.
  \STATE Construct a coupling:
    \begin{itemize}
    \item Product: pair $(\widetilde z_0^i,\widetilde z_1^i)$; or
    \item OT/Sinkhorn: compute plan $\Pi$ minimizing $\sum_{i,j}\Pi_{ij}\|\widetilde z_0^i-\widetilde z_1^j\|^2$, then sample pairs from $\Pi$.
    \end{itemize}
  \STATE Train the flow/diffusion model on canonical coordinates using fixed/predicted rank embeddings $\mathrm{PE}(i)$.
\ENDFOR
\end{algorithmic}
\end{algorithm}

\paragraph{Optimal transport annealing.} We observe that optimal transport (OT) matching between noise and target molecules reduces flow variance but may limit generalization. We implement \textbf{OT annealing}: the probability of using OT decreases linearly during training:
\begin{equation}
p_{\text{OT}}(\text{epoch}) = \max\left(0, 1 - \frac{\text{epoch}}{\text{max epochs}}\right)
\end{equation}
This allows the model to benefit from OT alignment early in training while learning to handle arbitrary noise-target pairings later.

\paragraph{Training objective.}
The overall training loss combines coordinate regression, categorical cross-entropy for discrete features, and optional rank prediction:
\begin{align}
    \mathcal{L}_{\text{coord}} &= \frac{1}{N} \sum_{i=1}^{N} \|\hat{\X}_{1,i} - \X_{1,i}\|^2, \qquad 
    \mathcal{L}_{\text{type}} = \frac{1}{N} \sum_{i=1}^{N} \text{CE}(\hat{\mathbf{h}}_{1,i}, \mathbf{h}_{1,i}) \\
    \mathcal{L}_{\text{bond}} &= \frac{1}{N^2} \sum_{i,j} \text{CE}(\hat{\mathbf{E}}_{1,ij}, \mathbf{E}_{1,ij}), \qquad 
    \mathcal{L}_{\text{rank}} = \frac{1}{N} \sum_{i=1}^{N} (\hat{r}_i - r_i^*)^2\\
    \mathcal{L} &= \mathcal{L}_{\text{coord}} + \lambda_{\text{type}} \mathcal{L}_{\text{type}} + \lambda_{\text{bond}} \mathcal{L}_{\text{bond}} + \lambda_{\text{charge}} \mathcal{L}_{\text{charge}} + \lambda_{\text{rank}} \mathcal{L}_{\text{rank}}
\end{align}
where default weights are $\lambda_{\text{type}} = 0.2 $, $ \lambda_{\text{bond}} = \lambda_{\text{charge}} = 1.0$ and $\lambda_{\text{rank}} = 0.1$. Following \cite{irwin2024semlaflow}, we set $\lambda_{\text{bond}}=0.5$ for QM9 dataset.

The complete training procedure can be summarized into \Cref{alg:gapfree}.

\subsection{Sampling}

\begin{algorithm}[ht]
\caption{Unified few-step canonical sampling (Regime A/B + options)}
\label{alg:unified-sampling}
\begin{algorithmic}[1]
\REQUIRE Symmetry group $G\in\{S_N,\;S_N\times SO(3)\}$; trained model $v_\theta$; baseline prior $q_1$; step times $1=t_K>\cdots>t_0=0$.
\REQUIRE Sampling regime flag $\mathsf{Regime}\in\{\mathrm{A},\mathrm{B}\}$.
\REQUIRE Options: (i) aligned prior; (ii) PCS (if enabled, apply $\Psi$ after each step; requires $\mathsf{Regime}=\mathrm{B}$).
\STATE \textbf{Note (rank conditioning):} if rank is used as a condition/PE, treat it as the \emph{fixed coordinate index} and \emph{do not} recompute it from noise.
\STATE \textbf{(Optional prior choice):} optionally replace $q_1$ with an alternative prior.
\STATE Sample $\widetilde Z_{t_K}\sim q_1$ in the fixed canonical coordinate system.
\FOR{$k=K,\dots,1$}
  \STATE $\widetilde Z_{t_{k-1}}\leftarrow \mathrm{Step}(\widetilde Z_{t_k}, v_\theta, t_k\to t_{k-1})$ \COMMENT{ODE/SDE solver}
  \IF{$\mathsf{Regime}=\mathrm{B}$}
    \STATE $\widetilde Z_{t_{k-1}}\leftarrow \Psi(\widetilde Z_{t_{k-1}})$
  \ELSE
    \STATE \textbf{Note:} do \emph{not} re-canonicalize / re-rank / project during sampling in Regime~A; keep rank indices fixed.
  \ENDIF
\ENDFOR
\STATE Output $\widetilde Z_{t_0}$.
\STATE \textbf{Optional invariance restoration:} sample $g\sim\Haar(G)$ and output $g\acts \widetilde Z_{t_0}$.
\end{algorithmic}
\end{algorithm}

\paragraph{Recovering invariance through Haar randomization.}
The critical observation is that canonical samples, while not themselves drawn from an invariant distribution, can be \emph{lifted} to an invariant distribution through a simple post-processing step. As an immediate result of \Cref{thm:factor}, the following theorem, which generalizes the observation in \cite{yan2023swingnn} to arbitrary compact groups, provides the theoretical guarantee in sampling invariant distributions:
\begin{proposition}[Post-hoc randomization yields invariant sampling]\label{prop:randomize}
Let $\tilde\mu$ be any distribution on $\M$. Define the randomized distribution
\begin{equation}
\mu := \int_{\G} (g\acts)\push \tilde\mu \,\mathrm{d}\Haar(g).
\end{equation}
Then $\mu$ is $\G$-invariant. Moreover, sampling $g\sim\Haar$ and $\tilde\Z\sim\tilde\mu$ and outputting $g\acts \tilde\Z$ produces $\mu$.
\end{proposition}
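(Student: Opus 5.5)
The plan is to verify invariance directly from the definition of $\mu$ as a Haar mixture of pushforwards, and to get the sampling statement from Fubini. We do not need Assumption~\ref{ass:free} or any canonicalizer; only compactness of $\G$ (Assumption~\ref{ass:center}) is used, so that $\Haar$ is a probability measure. It is also used for left-invariance of $\Haar$ and for measurability of the action map $(g,\Z)\mapsto g\acts\Z$, which holds since the action is continuous.

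\textbf{Sampling representation.} Let $g\sim\Haar$ and $\tilde\Z\sim\tilde\mu$ be independent. For any measurable $B\subseteq\M$, Fubini/Tonelli applied to the indicator $\1\{g\acts\tilde\Z\in B\}$ on $\G\times\M$ gives
\begin{equation*}
\Prob(g\acts\tilde\Z\in B)=\int_{\G}\tilde\mu\big(\{\Z: g\acts\Z\in B\}\big)\,\mathrm{d}\Haar(g)=\int_{\G}\big((g\acts)\push\tilde\mu\big)(B)\,\mathrm{d}\Haar(g)=\mu(B).
\end{equation*}
This equality simultaneously shows that $\mu$ is a well-defined probability measure (the integrand is measurable in $g$, by Tonelli) and proves the second claim.

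\textbf{Invariance.} Fix $h\in\G$ and a measurable $B$. Then
\begin{equation*}
(h\push\mu)(B)=\mu(h^{-1}B)=\int_{\G}\tilde\mu\big(\{\Z: hg\acts\Z\in B\}\big)\,\mathrm{d}\Haar(g).
\end{equation*}
Substitute $g'=hg$. Left-invariance of Haar measure means the integral of $F(hg)$ against $\Haar$ equals the integral of $F(g')$, so the right-hand side equals $\mu(B)$. Hence $h\push\mu=\mu$ for all $h$. Equivalently, in random-variable language, $h\acts(g\acts\tilde\Z)=(hg)\acts\tilde\Z$ and $hg\sim\Haar$ independent of $\tilde\Z$, so the law is unchanged.

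\textbf{Main obstacle.} There is no real difficulty; the only step needing care is the measure-theoretic one. We must justify that $g\mapsto((g\acts)\push\tilde\mu)(B)$ is measurable and that the change of variables is valid. Both follow from joint measurability of the action on the compact (hence second-countable, for $SO(3)\times S_N$) group and Tonelli's theorem. As a consistency check, when $\tilde\mu=\nu=\Psi\push\mu_{\mathrm{target}}$, the result reduces to Theorem~\ref{thm:factor}.
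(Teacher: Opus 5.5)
Your proof is correct and follows the same route as the paper: invariance comes from the substitution $g'=hg$ together with left-invariance of $\Haar$, which is exactly the paper's one-line pushforward computation. Your Fubini argument for the sampling representation, including measurability of $g\mapsto((g\acts)\push\tilde\mu)(B)$, only makes explicit what the paper leaves implicit.
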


\begin{proof}
For any $h\in\G$,
\begin{equation}
(h\acts)\push \mu = \int_{\G} (h g\acts)\push \tilde\mu\,\mathrm{d}\Haar(g)
= \int_{\G} (g'\acts)\push \tilde\mu\,\mathrm{d}\Haar(g')=\mu,
\end{equation}
where we changed variables $g'=hg$ and used left-invariance of Haar. \qedhere
\end{proof}

\begin{remark}[Canonicalization vs. randomization]
Randomization alone enforces invariance but does not simplify the learning problem. Canonicalization enforces a \emph{gauge choice} that can substantially reduce multimodality/variance induced by symmetry and thereby accelerate training convergence, while randomization then restores invariance at the end. 
The combination of canonicalized training with post-hoc randomization thus achieves the best of both worlds: efficient learning and provably invariant generation. 
\end{remark}

\paragraph{Dynamic rank estimation.} Since sampling can be done using a fixed canonical rank as the condition, the dynamic updates of rank estimation is completely optional. 

A practical few-step stabilizer is to project intermediate states back to the slice (Regime B in \Cref{alg:unified-sampling}). In particular, to keep rank conditions and partially noisy samples consistent, we allow projection-to-slice sampling, which we termed as Projected Canonical Sampling (PCS) as detailed in \Cref{alg:pcs}. Note that projecting the data to the canonical slice while keeping the PE ranks fixed is conceptually equivalent to keeping the data in the original order but update the canonical rank to align the two.

We support two strategies to estimate canonical rank in the absence of ground truth in \Cref{alg:pcs}:
\begin{enumerate}
    \item Predict Mode: Use the model's rank prediction output $\hat{r}$ directly for warping.

    \item Canonicalize Mode: Periodically recompute canonical rank from intermediate predictions: Discretize current atom type predictions: $\hat{a}_i = \text{argmax}(\text{atomics}_i)$. Then apply the same canonicalization algorithm to the predicted molecule. Update rank every $K$ steps when $t \geq T$.
\end{enumerate}
We experimentally find that the predict mode is always better, likely due to the instability of canonicalization algorithms for intermediate noisy states; instead, self-prediction integrates this procedure into the model learning, facilitating robustness and expressivity.

Remarkably, if the dynamic rank estimation is disabled (Regime A in \Cref{alg:unified-sampling}), the model always conducts a ``conditional generation" task given the canonical rank conditions, which is also a valid task with no train-test gap.

The complete sampling procedure can be summarized into \Cref{alg:unified-sampling}.

\begin{algorithm}[ht]
\caption{Projected Canonical Sampling (PCS)}\label{alg:pcs}
\begin{algorithmic}[1]
\REQUIRE Canonicalizer $\Psi_\phi$; learned slice model (score or vector field) $m_\theta$; time grid $1=t_K>\cdots>t_0=0$.
\STATE Initialize $\tilde\Z_{t_K}\sim q_1$.
\FOR{$k=K,\dots,1$}
  \STATE Take one reverse step on the slice: $\widehat{\tilde\Z}\gets \mathrm{Step}(\tilde\Z_{t_k},m_\theta,t_k\to t_{k-1})$.
  \STATE Project: $\tilde\Z_{t_{k-1}}\gets \Psi_\phi(\widehat{\tilde\Z})$.
\ENDFOR
\STATE Output canonical sample $\hat\Z_0=\tilde\Z_{t_0}$ and optionally randomize by $g\sim\Haar$.
\end{algorithmic}
\end{algorithm}

\section{Experimental Details and Additional Results}\label{sec_appendix:exp}

\subsection{Experimental Details}

\subsubsection{Evaluation metrics}

We evaluate generated molecules using the following metrics. Let $\mathcal{M} = \{m_i\}_{i=1}^N$ denote the set of generated molecules, and let $\tilde{m}_i$ denote the RDKit force-field optimised conformation of $m_i$.
\begin{itemize}
    \item \textbf{Validity.} The fraction of generated molecules that pass RDKit chemical validity checks.
    \item \textbf{Atom Stability.} The proportion of all generated atoms whose explicit valence lies within a predefined allowed range.
    \item \textbf{Molecule Stability.} The fraction of generated molecules in which \emph{every} atom is stable.
    \item \textbf{Uniqueness.} Among molecules that can be converted to canonical SMILES, the fraction of distinct SMILES strings.
    \item \textbf{Novelty.} Among molecules with valid canonical SMILES, the fraction absent from the training set.
    \item \textbf{Opt-RMSD.} The average root-mean-square deviation between generated and optimised conformations:
    \[
\text{Opt-RMSD}
= \frac{1}{|\mathcal{I}|}
  \sum_{i \in \mathcal{I}} \mathrm{RMSD}(m_i,\, \tilde{m}_i),
\]
where $\mathcal{I} = \{i : \text{optimisation succeeds and RMSD is finite}\}$.
This quantifies how far the generated 3D geometry deviates from a local energy minimum.
    \item \textbf{NFE.} The Number of Function Evaluations, i.e.\ the total number of neural network forward passes required by the ODE solver during sampling. NFE serves as a hardware-agnostic measure of generation cost.
    \item \textbf{Sampling Time.} The wall-clock time to generate the full evaluation set ($N{=}1{,}000$ molecules).
\end{itemize}

\subsubsection{Hyper-parameters.}

Our most hyper-parameters completely follow \citet{irwin2024semlaflow} for both Canonical SemlaFlow and our CanonFlow. All of our backbones consist of $L{=}12$ equivariant message-passing layers
($d_\text{model}{=}384$, $d_\text{msg}{=}128$, $d_\text{edge}{=}128$,
$H{=}32$ attention heads, $S{=}64$ coordinate sets with per-set length normalisation), with two edge-aware layers at the input and output.
A learnable molecule-size embedding of dimension~64 is concatenated to each atom's input features.
All output heads (atom type, charge, bond type, and the optional canonical-rank head) are two-layer MLPs with SiLU activation. Self-conditioning is employed by default: during training, with probability~$0.5$ the model first produces a preliminary prediction with zeroed conditioning inputs, which is then fed back as additional context; during inference it is applied at every integration step.

Models are trained for 200 epochs on GEOM-Drugs and 300 epochs on QM9
with Adam ($\text{lr}{=}3{\times}10^{-4}$, \texttt{amsgrad}, no weight decay),
a linear warm-up (10\,000 steps for GEOM-Drugs, 2\,000 for QM9) followed by a constant learning rate,
gradient clipping at global norm~1.0,
and bucket-based dynamic batching with a cost budget of~4096 and linear cost scaling.
An exponential moving average (EMA, decay~$0.999$) of model parameters is maintained and used for all inference.
All training is conducted in FP32 precision. The total training time is approximately 10 hours for QM9 and 28 hours for GEOM-DRUG on a single NVIDIA A100 80GB gpu. 

Coordinates are interpolated linearly,
$\mathbf{x}_t = (1{-}t)\,\mathbf{x}_0 + t\,\mathbf{x}_1$,
with additive Gaussian noise ($\sigma{=}0.2$).
Atom types and bond types use the uniform-sample categorical interpolation~\cite{campbell2022discrete}.
The interpolation time is drawn from $\mathrm{Beta}(2,1)$.
Equivariant optimal transport (Kabsch alignment + Hungarian matching) is optionally applied to coordinate pairs.
When classifier-free guidance (CFG) is enabled, the positional-encoding dropout rate defaults to $p_{\text{drop}}{=}0.1$.


\begin{table}[t]
  \caption{CanonFlow w/ our novel Canon architecture on GEOM-DRUG. Canonicalization is conducted on $S_N$.}
  \label{table:cfg_rank3}
  \centering
  \begin{tabular}{lcccc}
    \toprule
    Model         & Mol Stab $\uparrow$  & Valid $\uparrow$     & NFE   \\
    \midrule
    EQGAT-diff        & 93.4$_{\pm \text{0.21}}$  & 94.6$_{\pm \text{0.24}}$ & 500  \\
    \midrule
    \text{SemlaFlow}$_{50}$ w/ OT   & 97.0$_{\pm \text{0.21}}$  & 93.9$_{\pm \text{0.12}}$  & 50   \\
    \cellcolor{SkyBlue!20}\text{CanonFlow}$_{50}$     & \pmerr{\textbf{98.0}}{0.08}  & \pmerr{\textbf{95.4}}{0.09}  & 50   \\
    \midrule
    \text{SemlaFlow}$_{100}$ w/ OT   & 97.3$_{\pm \text{0.08}}$  & 93.9$_{\pm \text{0.19}}$     & 100  \\
    \cellcolor{SkyBlue!20}\text{CanonFlow}$_{100}$    & \pmerr{\textbf{98.4}}{0.02}  & \pmerr{\textbf{95.9}}{0.08}     & 100   \\
    \midrule
    Data           & 100.0  & 100.0  & -- \\
    \bottomrule
  \end{tabular}
  \vspace{-2pt}
\end{table}

\subsection{Additional Results}

In addition to our state-of-the-art main results presented in \Cref{sec:exp}, we provide more comprehensive ablation studies here.

\paragraph{Superior few-step generation with CanonFlow.} \Cref{table:cfg_rank3} presents few-shot generation results of our CanonFlow (trained with OT annealing), which consistently outperforms the SemlaFlow baseline w/o canonicalization, and the advantage is still significant even with only 50 steps. This further support our claim that canonicalization not only improves the upper bound of generative models, but also provides strong guidance signal through the conditions to facilitate few-step generation.

\begin{table}[ht]
  \caption{Canonical SemlaFlow on GEOM-DRUG dataset w/ classifier-free guidance (CFG). Canonicalization is conducted on $S_N$.}
  \label{table:cfg_sn}
  \centering
  \begin{tabular}{lcccc}
    \toprule
    Model      & CFG scale        & Mol Stab $\uparrow$  & Valid $\uparrow$     & NFE   \\
    \midrule
    EQGAT-diff    & --     & 93.4$_{\pm \text{0.21}}$  & 94.6$_{\pm \text{0.24}}$ & 500  \\
    \midrule
    \text{SemlaFlow}$_{50}$   & --  & 97.0$_{\pm \text{0.21}}$  & 93.9$_{\pm \text{0.12}}$  & 50   \\
    \cellcolor{SkyBlue!20}\text{Canon. SemlaFlow}$_{50}$   & 1.0  & \pmerr{\textbf{97.6}}{0.13}  & \pmerr{94.6}{0.32}  & 50   \\
    \cellcolor{SkyBlue!20}\text{Canon. SemlaFlow}$_{50}$   & 1.5  & \pmerr{97.4}{0.07}  & \pmerr{\textbf{94.7}}{0.12}  & 50   \\
    \cellcolor{SkyBlue!20}\text{Canon. SemlaFlow}$_{50}$   & 2.0  & \pmerr{96.7}{0.04}  & \pmerr{94.5}{0.03}  & 50   \\
    \midrule
    \text{SemlaFlow}$_{100}$  & --  & 97.3$_{\pm \text{0.08}}$  & 93.9$_{\pm \text{0.19}}$     & 100  \\
    \cellcolor{SkyBlue!20}\text{Canon. SemlaFlow}$_{100}$  & 1.0   & \pmerr{\textbf{98.1}}{0.03}  & \pmerr{95.0}{0.20}     & 100   \\
    \cellcolor{SkyBlue!20}\text{Canon. SemlaFlow}$_{100}$  & 1.5   & \pmerr{97.8}{0.03}  & \pmerr{95.0}{0.09}     & 100   \\
    \cellcolor{SkyBlue!20}\text{Canon. SemlaFlow}$_{100}$  & 2.0   & \pmerr{97.5}{0.08}  & \pmerr{\textbf{95.1}}{0.07}     & 100   \\
    \midrule
    Data      & --        & 100.0  & 100.0  & -- \\
    \bottomrule
  \end{tabular}
  \vspace{-2pt}
\end{table}

\begin{table}[ht]
  \caption{Canonical SemlaFlow on GEOM-DRUG dataset w/ classifier-free guidance (CFG). Canonicalization is conducted on $S_N\times SO(3)$.}
  \label{table:cfg_so3}
  \centering
  \begin{tabular}{lcccc}
    \toprule
    Model      & CFG scale        & Mol Stab $\uparrow$  & Valid $\uparrow$     & NFE   \\
    \midrule
    EQGAT-diff    & --     & 93.4$_{\pm \text{0.21}}$  & 94.6$_{\pm \text{0.24}}$ & 500  \\
    \midrule
    \text{SemlaFlow}$_{100}$  & --  & 97.3$_{\pm \text{0.08}}$  & 93.9$_{\pm \text{0.19}}$     & 100  \\
    \cellcolor{SkyBlue!20}\text{Canon. SemlaFlow}$_{100}$  & 1.0   & \pmerr{\textbf{97.9}}{0.09}  & \pmerr{93.8}{0.30}     & 100   \\
    \cellcolor{SkyBlue!20}\text{Canon. SemlaFlow}$_{100}$  & 1.5   & \pmerr{97.7}{0.06}  & \pmerr{94.2}{0.23}     & 100   \\
    \cellcolor{SkyBlue!20}\text{Canon. SemlaFlow}$_{100}$  & 2.0   & \pmerr{97.6}{0.09}  & \pmerr{\textbf{94.4}}{0.05}     & 100   \\
    \midrule
    Data      & --        & 100.0  & 100.0  & -- \\
    \bottomrule
  \end{tabular}
  \vspace{-2pt}
\end{table}

\paragraph{Effects of classifier-free guidance.} We further ablate the sampling quality with CFG. To comprehensively evaluate the effects of canonicalized symmetry groups, we report the performance of canonical SemlaFlow on both $S_N$ (trained w/ OT annealing) in~\Cref{table:cfg_sn} and $S_N\times SO(3)$ (trained w/o OT) in~\Cref{table:cfg_so3}. Within an appropriate range, a larger CFG scale tends to improve the validity metric, at the cost of slightly decreasing the molecule stability. Interestingly, the improvement is more significant on the $S_N\times SO(3)$ canonicalization, since the prediction relies more on the canonical information, which benefits from extrapolating the conditional shortcut and unconditional baseline.

\paragraph{Ablation on aligned prior and OT.} In \Cref{table:prior} we comprehensively study the effects of aligned prior and OT when applied to both equivariant and non-equivariant backbones. We incorporate the $S_N$ symmetry group, and evaluate various sampling steps. Notably, without the aligned prior, an equivariant SemlaFlow (w/o PE) with isotropic priors would be identical to the baseline; in other words, the aligned prior is also a contributing component of our canonical generative models, and even equivariant models could benefit from the better prior-data coupling without the help of PE.

\begin{table}[ht]
  \caption{Canonical SemlaFlow with aligned prior on GEOM-DRUG. Canonicalization is conducted on  $S_N$, and we study both equivariant and non-equivariant backbones w/ or w/o OT. Note that the prior is computed consistently with different group $G$}
  \label{table:prior}
  \centering
  \begin{tabular}{lcccccc}
    \toprule
    Group & Model   & PE  & OT   & Mol Stab $\uparrow$  & Valid $\uparrow$     & NFE   \\
    \midrule
    & EQGAT-diff & - & -       & 93.4$_{\pm \text{0.21}}$  & 94.6$_{\pm \text{0.24}}$ & 500  \\
    \midrule
    & \text{SemlaFlow}$_{20}$  & - & equivariant   & 95.3$_{\pm \text{0.14}}$  & 93.0$_{\pm \text{0.10}}$ & 20   \\
    & \text{SemlaFlow}$_{50}$  & - & equivariant  & 97.0$_{\pm \text{0.21}}$  & 93.9$_{\pm \text{0.12}}$  & 50   \\
    & \text{SemlaFlow}$_{100}$  & - & equivariant  & 97.3$_{\pm \text{0.08}}$  & 93.9$_{\pm \text{0.19}}$     & 100  \\
    \midrule
    & \cellcolor{SkyBlue!20}\text{Canon. SemlaFlow}$_{20} $  & - & equivariant   & \pmerr{95.1}{0.09}  & \pmerr{93.5}{0.17}  & 20   \\
    $\quad S_N$  & \cellcolor{SkyBlue!20}\text{Canon. SemlaFlow}$_{50}$ & - & equivariant    & \pmerr{97.3}{0.08}  & \pmerr{94.4}{0.02}  & 50   \\ 
    & \cellcolor{SkyBlue!20}\text{Canon. SemlaFlow}$_{100}$ & - & equivariant   & \pmerr{97.5}{0.10}  & \pmerr{94.2}{0.11}     & 100   \\
    \midrule

    & \cellcolor{SkyBlue!20}\text{Canon. SemlaFlow}$_{20} $  & True & equivariant   & \pmerr{92.6}{0.18}  & \pmerr{91.7}{0.20}  & 20   \\
    $\quad S_N + PE$  & \cellcolor{SkyBlue!20}\text{Canon. SemlaFlow}$_{50}$ & True & equivariant    & \pmerr{96.3}{0.03}  & \pmerr{93.9}{0.07}  & 50   \\ 
    & \cellcolor{SkyBlue!20}\text{Canon. SemlaFlow}$_{100}$  & True & equivariant  & \pmerr{97.5}{0.10}  & \pmerr{94.2}{0.20}     & 100   \\
    \midrule
    & Data           & 100.0  & 100.0  & -- \\
    \bottomrule
  \end{tabular}
  \vspace{-2pt}
\end{table}

\clearpage
\newpage
\section{Example from Canonical Diffusion}

This section presents samples from Canonicalized SemlaFlow trained on GEOM-Drugs. The samples were generated randomly but we have rotated them where necessary to aid visualization.

\vspace{1cm}
\begin{figure}[htbp]
  \centering
  \includegraphics[width=\linewidth]{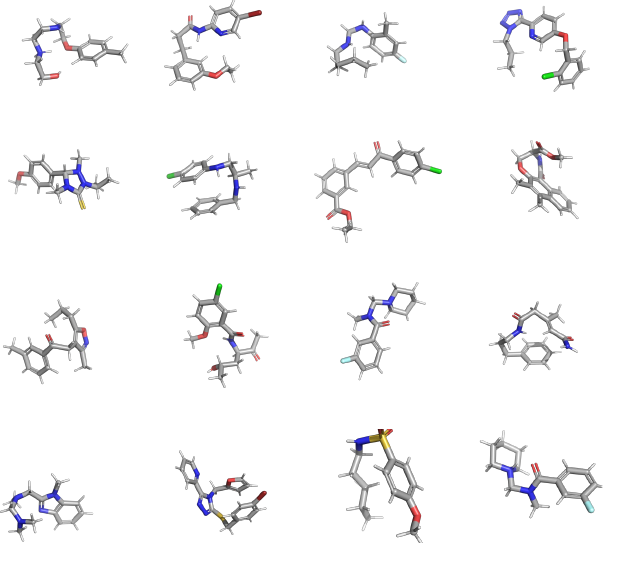}
  \caption{Random samples from Canonicalized Semlaflow model trained on GEOM Drugs. These samples were generated using 100 ODE integration steps.}
  \label{fig:example}
\end{figure}


\end{document}